\pdfoutput=1
\documentclass[11pt,letterpaper]{article}

\usepackage[margin=1in]{geometry}
\usepackage[utf8]{inputenc}
\usepackage[T1]{fontenc}
\usepackage{lmodern}
\usepackage{microtype}


\usepackage{graphicx}
\usepackage{xcolor}
\usepackage{booktabs}

\usepackage{amsmath,amssymb,amsthm,mathtools}

\usepackage[round,authoryear]{natbib}
\setcitestyle{authoryear,round,citesep={;},aysep={,},yysep={;}}
\definecolor{linkblue}{rgb}{0.05,0.20,0.55}
\usepackage[colorlinks=true,linkcolor=linkblue,citecolor=linkblue,urlcolor=linkblue,
            pdftitle={First-Order Stationarity of Reverse Diffusions}]{hyperref}

\numberwithin{equation}{section}

\newtheorem{theorem}{Theorem}[section]
\newtheorem{proposition}[theorem]{Proposition}
\newtheorem{lemma}[theorem]{Lemma}
\newtheorem{corollary}[theorem]{Corollary}
\theoremstyle{definition}
\newtheorem{assumption}[theorem]{Assumption}

\theoremstyle{remark}
\newtheorem{remark}[theorem]{Remark}

\newcommand{\R}{\mathbb R}
\newcommand{\E}{\mathbb E}
\newcommand{\dd}{\,\mathrm d}
\newcommand{\KL}{\operatorname{KL}}
\newcommand{\op}{\mathrm{op}}
\newcommand{\Id}{I_d}
\newcommand{\Itwo}{I_{2d}}
\newcommand{\F}{\mathcal F}
\newcommand{\Fv}{\mathcal F_v}

\newcommand{\ip}[2]{\left\langle #1,#2\right\rangle}
\newcommand{\norm}[1]{\left\lVert #1\right\rVert}
\newcommand{\eps}{\varepsilon}
\newcommand{\hq}{\hat q}
\newcommand{\hQ}{\hat Q}
\newcommand{\hp}{\hat p}

\newcommand{\hY}{\hat Y}
\newcommand{\kernel}{\mathsf K}

\title{First-Order Stationarity of Reverse Diffusions}

\author{%
  Zhifeng Chen \thanks{Email: \texttt{zchen925@wisc.edu}}\qquad
   Chenyang Jiang \qquad
  Yazhen Wang \\[0.6ex]
  {Department of Statistics, UW-Madison}%
}
\date{}   % arXiv stamps the submission date itself; use \date{\today} to print one

\begin{document}

\maketitle

\begin{abstract}
Recent literature has shown a strong connection between optimization and sampling. We develop the corresponding first-order theory for diffusion models. First, the SDE-based reverse-time flows of overdamped and underdamped Langevin diffusions contract relative Fisher divergences at explicit exponential rates whenever the stationary potential of the forward process is strongly convex---a condition on the noising process one chooses, not on the data. This is a unique advantage of SDE-based reverse diffusion, absent in the reverse process based on ODEs. Second, we incorporate discretization and establish averaged first-order stationarity bounds---the sampling analog of averaged gradient-norm guarantees in nonconvex optimization---for samplers of both overdamped and underdamped diffusion models. As in nonconvex optimization, the convexity-free certificate is local: it guarantees score consistency, not global mode weights.
\end{abstract}

\section{Introduction}
\label{sec:intro}

Diffusion models generate data in two stages: a forward Markov process gradually corrupts samples from an unknown data distribution $\rho_0$ until their law is essentially a tractable stationary distribution, and generation simulates this process backward in time, guided by the score functions $\nabla\log\rho_s$ of the forward marginals, learned by (denoising) score matching \citep{hyvarinen2005estimation,vincent2011connection,sohl2015deep,song2019generative,ho2020denoising,song2021score}. Three approximations separate the implemented sampler from the ideal reverse process---initialization from the reference law rather than the true terminal marginal, time discretization, and score estimation---and a large body of work bounds their combined effect on the \emph{terminal} law, in total variation, KL, or Wasserstein distance.

This paper takes a different, more structural view. Rather than asking how far the final sample is from the data, we ask what the reverse dynamics itself---exact or discretized---does to a \emph{first-order} measure of mismatch. The central objects are the \emph{relative score} and the \emph{relative Fisher divergence}: for positive densities $p$ and $q$,
\[
    \F(p\,\|\,q)=\E_{p}\Big\|\nabla\log\frac pq\Big\|^2 .
\]
The relative score vanishes identically if and only if $p=q$, so $\F(p\,\|\,q)$ is a natural stationarity measure: it plays the role of the squared gradient norm $\|\nabla f\|^2$, as $\KL(\cdot \,\|\,q)$ plays that of the objective value $f$. The analogy is not merely formal. The Fokker--Planck equation of overdamped Langevin dynamics is the gradient flow of relative entropy in the Wasserstein geometry \citep{jordan1998variational}, and the relative Fisher divergence is exactly its gradient norm. We will show in this work that this is also true for the \emph{reverse} flow:
$\frac{\dd}{\dd t}\KL(p_t\,\|\,q_t)=-\F(p_t\,\|\,q_t)$
(Corollary~\ref{cor:kl-dissipation})---the exact analog of the descent identity $\frac{\dd}{\dd t}f(x_t)=-\|\nabla f(x_t)\|^2$ along a gradient flow. Table~\ref{tab:dictionary} records the full dictionary the paper centers around.

Most rows of this dictionary are not new: they underlie the first-order theory of forward Langevin Monte Carlo \citep{balasubramanian2022towards} and Fisher-information dissipation along forward flows \citep{wibisono2025mixing}. The contribution of this paper is that the dictionary survives time reversal, where the reference law $q_t$ is no longer a fixed stationary distribution but a moving target set by the noising schedule. We show that this symmetry holds for both overdamped and underdamped reverse-time SDEs, but does not extend to their probability flow ODE counterparts.

\begin{table}[t]
\centering
\small
\begin{tabular}{ll}
\toprule
Smooth optimization & Reverse diffusion sampling\\
\midrule
objective $f(x)$ & relative entropy $\KL(p\,\|\,q)$\\
gradient $\nabla f$ & relative score $\nabla\log(p/q)$ in $L^2(p)$\\
squared gradient norm $\|\nabla f\|^2$ & Fisher divergence $\F(p\,\|\,q)$\\
learning rate $\eta$ & step size $h$ \\
descent identity $\frac{\dd}{\dd t}f=-\|\nabla f\|^2$ & KL dissipation (Corollary~\ref{cor:kl-dissipation})\\
strong convexity of $f$ & strong log-concavity of the forward potential $U$\\
$\|\nabla f(x_t)\|^2\le e^{-2mt}\|\nabla f(x_0)\|^2$ & exponential Fisher contraction (Theorems~\ref{thm:overdamped-decay}, \ref{thm:underdamped-decay})\\
Polyak--{\L}ojasiewicz inequality & log-Sobolev inequality for $q$\\
$\frac1N\sum_k\|\nabla f(x_k)\|^2\lesssim\frac{f(x_0)-f^\star}{\eta N}+O(\eta)$ & averaged Fisher bounds (Theorems~\ref{thm:overdamped-stationarity}, \ref{thm:underdamped-stationarity})\\
change of variables $x\mapsto\Phi(x)$  & probability-flow ODE (Proposition~\ref{prop:ode-transport})\\
\bottomrule
\end{tabular}
\caption{The optimization dictionary used throughout the paper.}
\label{tab:dictionary}
\end{table}

\paragraph{Why first-order guarantees?}
In smooth nonconvex optimization one cannot certify convergence to a global minimum, yet gradient descent still admits the averaged guarantee $\frac1N\sum_k\|\nabla f(x_k)\|^2\lesssim\frac{f(x_0)-f^\star}{\eta N}+O(\eta)$ \citep{ghadimi2013stochastic}. The sampling situation is parallel: for non-log-concave data, converting score accuracy into KL or total-variation convergence requires functional inequalities, whereas the relative Fisher divergence remains meaningful \emph{without such assumptions}---driving it to zero certifies
that the sampler's law is locally consistent with its target, the two scores agreeing on average in $L^2$. The analogy also makes the limitation transparent: as a stationary point need not be a minimum, a law can have small relative Fisher divergence while misallocating mass between well-separated modes. This limitation parallels the gap between stationarity and global optimality in nonconvex optimization.

\subsection{Related work}

\paragraph{Convergence theory of diffusion models.}
Time reversal of diffusions goes back to \citet{anderson1982reverse} and
\citet{haussmann1986time}; diffusion models build directly on this theory,
including the critically damped kinetic variant of \citet{dockhorn2022score}.
A growing line of quantitative convergence theory controls initialization,
discretization, and score-estimation errors simultaneously for essentially
arbitrary data distributions \citep{lee2023convergence, chen2023sampling, jiao2025optimal}.

\paragraph{Connections between sampling and optimization.}
A complementary line of work draws connections between sampling dynamics and
optimization. The classical variational formulation of the Fokker--Planck
equation interprets overdamped Langevin dynamics as a Wasserstein gradient flow
of the relative entropy \citep{jordan1998variational}, motivating the viewpoint
of sampling as optimization over the space of probability measures
\citep{wibisono2018sampling}. Momentum-based sampling admits an even closer
analogy with accelerated optimization: \citet{ma2021nesterov} interpreted
underdamped Langevin dynamics as an analog of Nesterov acceleration for
minimizing the KL divergence, with hypocoercivity playing a role analogous to
the coupling between position and momentum in accelerated optimization.

\paragraph{Fisher-divergence contraction.}
Recent work has also studied the evolution of Fisher divergence along
isotropic Fokker--Planck channels. \citet{wibisono2025mixing} established contraction of Fisher
divergence under suitable convexity assumptions. Closest in spirit to our work,
\citet{balasubramanian2022towards} developed Fisher-information guarantees for forward
Langevin Monte Carlo, viewing Fisher divergence as a sampling analog of
first-order stationarity in optimization. Our work develops a corresponding
perspective for \emph{reverse-time} diffusion, where contraction must be established against a reference that itself evolves.

\subsection{Contributions and paper organization}
\paragraph{Contraction in continuous reverse flow.}
Our results form two groups, one about exact reverse flows and one about discrete samplers.
If the stationary potential $U$ of the forward diffusion is strongly convex, then the reverse SDE contracts score mismatch exponentially, for both overdamped and underdamped diffusion. The assumptions concern only the forward potential, and no log-Sobolev inequality is imposed on the data: the continuous reverse dynamics forgets initialization error exponentially fast for almost arbitrary data.

\paragraph{Averaged Fisher divergence bound for discretized algorithms.}
(i) For the frozen-score exponential integrator of the reverse overdamped process---the idealized DDPM-type update---with spatially Lipschitz scores and sub-Gaussian data assumptions, the numerical law $\hp_t$ satisfies $\frac1T\int_0^T\F(\hp_t\,\|\,q_t)\dd t\lesssim H_0/T+L_s^2dh+O(h^2)$ against the exact reverse marginals $q_t$ (Theorem~\ref{thm:overdamped-stationarity}).
(ii) For the critically damped reverse process we prove an averaged bound on the
full phase-space relative Fisher divergence
(Theorem~\ref{thm:underdamped-stationarity}), with the sampler compared to the
twin chain that pushes the \emph{exact} initialization through the \emph{same}
numerical kernels, so that the bound isolates the initialization error.

\paragraph{Organization.}

Section~\ref{sec:setting} fixes the setting; in Section~\ref{sec:decay}, we give results for continuous reverse flows. In Sections~\ref{sec:overdamped-stat} and \ref{sec:underdamped-stat}, we give results for discrete samplers of the overdamped and underdamped model separately. Section~\ref{sec:experiments} gives our experiment results. Appendices~\ref{app:setting-proofs}--\ref{app:ud-proofs} prove the results of Sections~\ref{sec:setting}--\ref{sec:underdamped-stat} in order, and Appendix~\ref{app:critical} collects critical-damping identities. Our work focuses on the process mechanism itself, so we analyze initialization error and discretization error. Throughout the paper we assume we have access to the true score functions.

\section{Setting: two reverse diffusions and a first-order criterion}
\label{sec:setting}

This section introduces the two forward models and their reverse dynamics.

\paragraph{Overdamped model.}
The forward overdamped Langevin diffusion on $\R^d$ is
\begin{equation}
    \dd X_s=-\nabla U(X_s)\dd s+\sqrt2\,\dd W_s,
    \qquad s\in[0,T],
    \label{eq:od-forward}
\end{equation}
with invariant density $p^*\propto e^{-U}$. Its Fokker--Planck equation can be rewritten as the continuity equation $\partial_s\rho_s=\nabla\cdot(\rho_s\nabla\log(\rho_s/p^*))$: the probability mass of $\rho_s$ is transported by the velocity field $-\nabla\log(\rho_s/p^*)$. Running time backward, the exact reverse marginals $q_t:=\rho_{T-t}$ therefore satisfy $\partial_tq_t=-\nabla\cdot(q_tv_t^q)$ with \emph{current velocity}
\begin{equation}
    v_t^q=\nabla\log\frac{q_t}{p^*},
    \label{eq:od-current}
\end{equation}
realized by the exact reverse SDE $\dd Y_t=[\nabla U(Y_t)+2\nabla\log q_t(Y_t)]\dd t+\sqrt2\,\dd\bar W_t$ \citep{anderson1982reverse,haussmann1986time}. If $p_t$ is the law of the \emph{same} reverse SDE initialized from $p_0\ne q_0$, then $\partial_tp_t=-\nabla\cdot(p_tv_t^p)$ with
\begin{equation}
    v_t^p=v_t^q-\nabla r_t,
    \qquad
    r_t=\log\frac{p_t}{q_t}.
    \label{eq:od-vq}
\end{equation}

\paragraph{Underdamped model.}
The forward kinetic (underdamped) Langevin diffusion lives on phase space $\R^d\times\R^d$, in variables $z=(x,v)$, with friction $\gamma>0$ and inverse mass $\xi>0$:
\begin{equation}
    \dd X_s=\xi V_s\dd s,
    \qquad
    \dd V_s=-\nabla U(X_s)\dd s-\gamma\xi V_s\dd s+\sqrt{2\gamma}\,\dd W_s,
    \label{eq:ud-forward}
\end{equation}
with invariant density $p^*\propto\exp(-U(x)-\tfrac\xi2\|v\|^2)$. Noise and friction act on the velocity coordinates only. Write
$J=\big(\begin{smallmatrix}0&\Id\\-\Id&0\end{smallmatrix}\big)$ and
$G_\gamma=\big(\begin{smallmatrix}0&0\\0&\gamma \Id\end{smallmatrix}\big)$,
so that the drift of \eqref{eq:ud-forward} is $b_{\mathrm{fwd}}(z)=-(J-G_\gamma)\nabla\log p^*=(\xi v,\,-\nabla U(x)-\gamma\xi v)$, with Jacobian
\begin{equation}
    M(x):=\nabla b_{\mathrm{fwd}}(z)
    =\begin{pmatrix}0&\xi\Id\\-\nabla^2U(x)&-\gamma\xi \Id\end{pmatrix}
    \label{eq:M-def}
\end{equation}
The Fokker--Planck equation of \eqref{eq:ud-forward} is again a continuity equation, with current velocity $w_s=-J\nabla\log p^*-G_\gamma\nabla\log(\rho_s/p^*)$ (Lemma~\ref{lem:fwd-current}); hence $q_t:=\rho_{T-t}$ obeys $\partial_tq_t=-\nabla\cdot(q_tv_t^q)$ with
\begin{equation}
    v_t^q=J\nabla\log p^*+G_\gamma\nabla\log\frac{q_t}{p^*},
    \label{eq:ud-current}
\end{equation}
realized by the exact reverse SDE
\begin{equation}
    \dd\bar X_t=-\xi\bar V_t\dd t,
    \qquad
    \dd\bar V_t=\big[\nabla U(\bar X_t)+\gamma\xi\bar V_t
    +2\gamma\nabla_v\log q_t(\bar X_t,\bar V_t)\big]\dd t
    +\sqrt{2\gamma}\,\dd\bar W_t,
    \label{eq:ud-reverse}
\end{equation}
in which only the \emph{velocity} score $\nabla_v\log q_t$ enters---one reason kinetic diffusion models are attractive in practice. For $p_t$ solving the same reverse SDE from $p_0\ne q_0$
\begin{equation}
    v_t^p=v_t^q-G_\gamma\nabla r_t,
    \qquad
    r_t=\log\frac{p_t}{q_t}.
    \label{eq:ud-vq}
\end{equation}
The same probability flow \eqref{eq:ud-current} is also realized by the deterministic reverse ODE
\begin{equation}
    \dd\bar X_t=-\xi\bar V_t\dd t,
    \qquad
    \dd\bar V_t=\big[\nabla U(\bar X_t)+\gamma\xi\bar V_t
    +\gamma\nabla_v\log q_t(\bar X_t,\bar V_t)\big]\dd t
    \label{eq:ud-ode}
\end{equation}

\subsection{Divergences and the exact dissipation identity}
This subsection defines the objects of the dictionary of Table~\ref{tab:dictionary}, and then proves the analog of the descent identities, which holds with no assumption on $U$ or on the data.
For smooth positive probability densities $p,q$ on $\R^n$ ($n=d$ or $2d$), write $r=\log(p/q)$, $g=\nabla r$, and define
\begin{equation}
    \KL(p\,\|\,q)=\int p\log\frac pq,
    \qquad
    \F(p\,\|\,q)=\int p\,\|g\|^2,
    \qquad
    \F_P(p\,\|\,q)=\int p\,\ip{g}{Pg},
    \label{eq:divergences}
\end{equation}
for a symmetric positive definite matrix $P$. In the underdamped model we also use the velocity part $\Fv(p\,\|\,q)=\int p\,\|\nabla_vr\|^2$.

\begin{lemma}[KL derivative for two continuity equations]
\label{lem:kl-identity}
Suppose $\partial_tq_t=-\nabla\cdot(q_tv_t^q)$ and $\partial_tp_t=-\nabla\cdot(p_tv_t^p)$, and let $r_t=\log(p_t/q_t)$. Then
\begin{equation}
    \frac{\dd}{\dd t}\KL(p_t\,\|\,q_t)=\int p_t\,\nabla r_t\cdot(v_t^p-v_t^q)
\end{equation}
\end{lemma}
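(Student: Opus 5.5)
The plan is to differentiate $\KL(p_t\,\|\,q_t)=\int p_t\log p_t-\int p_t\log q_t$ under the integral sign and then integrate by parts against each continuity equation. Throughout we assume enough smoothness and decay that differentiation under the integral is allowed and boundary terms vanish. Concretely, we take $p_t,q_t$ smooth and positive, and we take the fluxes $p_tv_t^p$, $q_tv_t^q$ to decay fast enough that $p_t\,\nabla r_t\cdot v$ is integrable and the surface integrals at infinity are zero. This is the standing regularity implicit in the paper's setting.

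First, we compute the derivative term by term:
\[
\frac{\dd}{\dd t}\KL(p_t\,\|\,q_t)=\int \partial_tp_t\,r_t+\int p_t\,\partial_t r_t .
\]
The second term equals
\[
\int p_t\Big(\frac{\partial_tp_t}{p_t}-\frac{\partial_tq_t}{q_t}\Big)
=\int\partial_tp_t-\int \frac{p_t}{q_t}\partial_tq_t .
\]
Here $\int\partial_tp_t=\frac{\dd}{\dd t}\int p_t=0$ by mass conservation.

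Next, we handle the first term by substituting the $p$-equation and integrating by parts:
\[
\int\partial_tp_t\,r_t=-\int\nabla\cdot(p_tv_t^p)\,r_t=\int p_t\,v_t^p\cdot\nabla r_t .
\]
For the remaining term we substitute the $q$-equation, integrate by parts, and use $\nabla(p_t/q_t)=(p_t/q_t)\nabla r_t$:
\[
-\int\frac{p_t}{q_t}\partial_tq_t=\int\frac{p_t}{q_t}\nabla\cdot(q_tv_t^q)=-\int q_tv_t^q\cdot\nabla\frac{p_t}{q_t}=-\int p_t\,v_t^q\cdot\nabla r_t .
\]
Summing the two contributions gives $\int p_t\nabla r_t\cdot(v_t^p-v_t^q)$, which is the claimed identity.

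The main obstacle is analytic rather than algebraic: we must justify exchanging $\frac{\dd}{\dd t}$ with the integral and discarding the boundary terms in both integrations by parts. The second integration by parts is the delicate one, because it involves the ratio $p_t/q_t$, which may grow. The approach is to integrate over balls $B_R$ with cutoff functions and then let $R\to\infty$, which requires integrability of $p_t|\nabla r_t||v_t^{p}|$ and $p_t|\nabla r_t||v_t^q|$ together with finiteness of $\KL$. Since the lemma is stated formally, it is reasonable to record these as regularity hypotheses (smooth positive densities with sufficient decay), as is standard in the literature.
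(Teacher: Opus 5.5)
Your proposal is correct and follows essentially the same route as the paper: you differentiate under the integral, drop $\int\partial_tp_t$ by mass conservation, and integrate by parts against each continuity equation using $q_t\nabla(p_t/q_t)=p_t\nabla r_t$. Treating the decay and boundary-term issues as standing regularity hypotheses also matches the paper, which imposes them globally in its appendix conventions.
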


Combining Lemma~\ref{lem:kl-identity} with the velocity differences in \eqref{eq:od-vq} and \eqref{eq:ud-vq} gives the reverse-diffusion descent identity.

\begin{corollary}[KL dissipation along the exact reverse flows]
\label{cor:kl-dissipation}
Along the common exact reverse dynamics,
\[
    \frac{\dd}{\dd t}\KL(p_t\|q_t)
    =-\F(p_t\|q_t)
    \;\;\text{(overdamped)},
    \quad
    \frac{\dd}{\dd t}\KL(p_t\|q_t)
    =-\gamma\,\Fv(p_t\|q_t)
    \;\;\text{(underdamped)}.
\]
In particular $\KL(p_t\,\|\,q_t)$ is always nonincreasing in both models.
\end{corollary}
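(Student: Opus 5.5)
The plan is to apply Lemma~\ref{lem:kl-identity} directly, substituting the velocity differences recorded in \eqref{eq:od-vq} and \eqref{eq:ud-vq}. Both $p_t$ and $q_t$ evolve under the same reverse SDE and differ only in their initial laws. Their Fokker--Planck equations can therefore be written as continuity equations with current velocities $v_t^q$ and $v_t^p$, as set up in Section~\ref{sec:setting}, so the hypotheses of the lemma hold.

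\emph{Overdamped case.} By \eqref{eq:od-vq}, $v_t^p-v_t^q=-\nabla r_t$. To double-check this, note that the Fokker--Planck equation for $p_t$ under the reverse drift $b_t=\nabla U+2\nabla\log q_t$ with diffusion coefficient $\sqrt2$ gives current velocity $b_t-\nabla\log p_t$. This equals $\nabla U+2\nabla\log q_t-\nabla\log p_t=\nabla\log(q_t/p^*)-\nabla r_t$, which is consistent with \eqref{eq:od-current}. Substituting into Lemma~\ref{lem:kl-identity} gives
\[
\frac{\dd}{\dd t}\KL(p_t\,\|\,q_t)=-\int p_t\|\nabla r_t\|^2=-\F(p_t\,\|\,q_t).
\]

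\emph{Underdamped case.} By \eqref{eq:ud-vq}, $v_t^p-v_t^q=-G_\gamma\nabla r_t$. As before, this can be checked from the reverse SDE \eqref{eq:ud-reverse}: the current velocity of $p_t$ is the drift minus $G_\gamma\nabla\log p_t$, and the $q_t$-score term $2G_\gamma\nabla\log q_t$ splits into $G_\gamma\nabla\log q_t$, which belongs to $v^q_t$, plus the residual $-G_\gamma\nabla r_t$. The lemma then gives
\[
-\int p_t\ip{\nabla r_t}{G_\gamma\nabla r_t}=-\gamma\int p_t\|\nabla_v r_t\|^2=-\gamma\Fv(p_t\,\|\,q_t),
\]
because $G_\gamma$ annihilates the $x$-block and acts as $\gamma\Id$ on the $v$-block.

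\emph{Monotonicity.} The right-hand sides are nonpositive, so $\KL(p_t\,\|\,q_t)$ is nonincreasing. The only real obstacle is technical: we need enough regularity and decay of $p_t$, $q_t$ and $r_t$ to justify the integration by parts and the differentiation under the integral inside Lemma~\ref{lem:kl-identity}, namely smooth positive densities and vanishing boundary fluxes. I would take these as the standing smoothness assumptions under which the lemma is stated, and not reprove them here.
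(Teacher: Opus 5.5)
Your proposal is correct and follows the paper's proof: substitute $v_t^p-v_t^q=-G\nabla r_t$ (with $G=\Id$ or $G=G_\gamma$) into Lemma~\ref{lem:kl-identity} and read off $-\F$ or $-\gamma\Fv$. Your consistency checks of the current velocities reproduce Lemma~\ref{lem:generic-current}, and deferring the regularity to the standing conventions matches what the paper does.
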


The entire effect of the mismatched initialization is the single extra correction term $v_t^p-v_t^q$. The probability-flow ODE
realizes the same marginals $q_t$ from $q_0$, but transports \emph{every} initial law by the
same field $v_t^q$, so no correction term appears at all.

\begin{proposition}[No forgetting along the probability-flow ODE]
\label{prop:ode-transport}
Let $\Phi_t$ be the flow map of the probability-flow ODE of either model, and let
$p_t=(\Phi_t)_\#p_0$ for an arbitrary initial law $p_0$, and $q_t=(\Phi_t)_\#q_0$ are the
exact reverse marginals. Then $r_t=r_0\circ\Phi_t^{-1}$; hence for all $t\in[0,T]$
\[
\KL(p_t\,\|\,q_t)=\KL(p_0\,\|\,q_0),\qquad \mathrm{TV}(p_t,q_t)=\mathrm{TV}(p_0,q_0),
\]
and likewise for every $f$-divergence, while the relative score is merely carried along and
rescaled by the Jacobian of the flow,
\[
\nabla r_t\big(\Phi_t(x)\big)=\big(\nabla\Phi_t(x)\big)^{-\top}\nabla r_0(x),\qquad
\F(p_t\,\|\,q_t)=\int p_0\,\big\|(\nabla\Phi_t)^{-\top}\nabla r_0\big\|^2 .
\]
\end{proposition}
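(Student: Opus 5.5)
The plan is to use the pushforward formula for densities under a diffeomorphism. Assuming the velocity field $v_t^q$ is smooth enough that the flow map $\Phi_t$ is a $C^1$ diffeomorphism of $\R^n$ (the same regularity already implicit in the continuity equations of Section~\ref{sec:setting}), the change-of-variables formula gives, for any initial density $\mu_0$,
\[
(\Phi_t)_\#\mu_0(y)=\mu_0\big(\Phi_t^{-1}(y)\big)\,\big|\det\nabla\Phi_t^{-1}(y)\big| .
\]
We apply this to both $p_0$ and $q_0$. Both are transported by the \emph{same} map, and the probability-flow ODE \eqref{eq:ud-ode} (or its overdamped analogue with velocity $v_t^q$ from \eqref{eq:od-current}) does not depend on the initial law. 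So the Jacobian factor is identical for $p_t$ and $q_t$ and cancels in the ratio:
\[
p_t(y)/q_t(y)=p_0(\Phi_t^{-1}(y))/q_0(\Phi_t^{-1}(y)),
\]
that is, $r_t=r_0\circ\Phi_t^{-1}$. An alternative check, which I would mention as a consistency remark, uses the continuity equations directly. With $v_t^p=v_t^q$, the quantity $r_t$ solves the pure transport equation $\partial_t r_t+v_t^q\cdot\nabla r_t=0$, and Lemma~\ref{lem:kl-identity} gives $\frac{\dd}{\dd t}\KL=0$ at once.

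For the invariance of divergences, any $f$-divergence $D_f(p_t\|q_t)=\int q_t f(e^{r_t})$ is computed by substituting $y=\Phi_t(x)$. Since $q_t$ is the pushforward of $q_0$, we have $\int q_t\,\varphi=\int q_0\,\varphi\circ\Phi_t$. Taking $\varphi=f(e^{r_t})$ and using $r_t\circ\Phi_t=r_0$ gives $D_f(p_0\|q_0)$. KL corresponds to $f(u)=u\log u$ and TV to $f(u)=\tfrac12|u-1|$.

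For the score, differentiate the identity $r_t(\Phi_t(x))=r_0(x)$ in $x$ using the chain rule:
\[
(\nabla\Phi_t(x))^{\top}\nabla r_t(\Phi_t(x))=\nabla r_0(x).
\]
The Jacobian $\nabla\Phi_t(x)$ is invertible because $\Phi_t$ is a diffeomorphism; equivalently, $\det\nabla\Phi_t$ satisfies Liouville's equation and never vanishes. Inverting gives the stated formula for $\nabla r_t(\Phi_t(x))$. Finally,
\[
\F(p_t\|q_t)=\int p_t\|\nabla r_t\|^2=\int p_0(x)\,\|\nabla r_t(\Phi_t(x))\|^2\dd x
\]
by the pushforward property of $p_t$, which yields the displayed expression.

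The only genuine obstacle is regularity: global existence, uniqueness and differentiability of $\Phi_t$ on $[0,T]$. This requires $v_t^q$ to be locally Lipschitz with at most linear growth, which involves the score $\nabla\log q_t$. I would state this as a standing assumption (smooth positive $\rho_s$ with locally Lipschitz, linearly growing scores, as in Section~\ref{sec:setting}) rather than prove it. Everything else is the chain rule and change of variables.
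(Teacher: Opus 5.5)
Your proposal is correct and essentially reproduces the paper's proof. The paper uses the same common-pushforward change of variables, with the Jacobian cancelling in the ratio so that $r_t\circ\Phi_t=r_0$, followed by the chain rule and pulling the integrals back to $p_0$; it packages these steps as Lemma~\ref{lem:common-map} and \eqref{eq:twist-pullback} with $M=I$, and your continuity-equation cross-check via Lemma~\ref{lem:kl-identity} is a harmless extra.
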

In the dictionary the ODE is a change of variables: the objective and every zeroth-order
distance are untouched, and the gradient is transported, not contracted.

\begin{remark}[Optimization counterpart, and a moving target]
\label{rem:optimization-counterpart}
Corollary~\ref{cor:kl-dissipation} has a familiar optimization counterpart.
For the gradient flow $\dot x_t=-\nabla f(x_t)$ and for the heavy-ball flow
$\dot x_t=v_t$, $\dot v_t=-\nabla f(x_t)-\gamma v_t$, one has
$
  \frac{d}{dt}\,f(x_t)=-\|\nabla f(x_t)\|^2
$ and
$
  \frac{d}{dt}\Bigl[f(x_t)+\tfrac12\|v_t\|^2\Bigr]=-\gamma\,\|v_t\|^2 .
$
In Wasserstein space the role
of $f$ is played by $\mathrm{KL}(\cdot\,\|\, q_t)$. 
In reverse diffusion the target itself moves: $p_t$ chases $q_t$,
which travels on a schedule fixed by the forward process; for the
exact flows this costs nothing, and Lemma~\ref{lem:kl-identity}
shows why. The same terms in $v^q_t$ and $v^p_t$ only make two distributions move in parallel; only relative velocity $v^p_t-v^q_t$ matters, and it is precisely what drives $p_t$ toward $q_t$. 
As in nonconvex optimization, a counterpart of the averaged statement
$\frac1T\int_0^T\|\nabla f(x_t)\|^2\,dt\le\bigl(f(x_0)-\inf f\bigr)/T$
can be obtained by integrating
Corollary~\ref{cor:kl-dissipation}. Denote the initialization
error as $H_0:=\KL(p_0\,\|\,q_0)$.
\end{remark}

\begin{proposition}[Averaged stationarity of the exact reverse flows]
\label{prop:integrated-dissipation}
For every potential $U$, every data distribution, and every pair of initializations $(p_0,q_0)$,
\begin{equation}
    \frac{1}{T}\int_0^T\F(p_t\,\|\,q_t)\dd t
    =\frac{H_0-\KL(p_T\,\|\,q_T)}{T}
    \le \frac{H_0}{T}
    \qquad\text{(overdamped)},
    \label{eq:integrated-dissipation}
\end{equation}
\begin{equation}
    \frac{\gamma}{T}\int_0^T\Fv(p_t\,\|\,q_t)\dd t= \frac{H_0-\KL(p_T\,\|\,q_T)}{T}\le \frac{H_0}{T}
    \qquad\text{(underdamped)},
    \label{eq:integrated-dissipation-ud}
\end{equation}
\end{proposition}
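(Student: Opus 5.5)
The plan is to integrate the dissipation identity of Corollary~\ref{cor:kl-dissipation} over $[0,T]$ and then discard the terminal KL term using nonnegativity.

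For the overdamped model, Corollary~\ref{cor:kl-dissipation} gives $\frac{\dd}{\dd t}\KL(p_t\,\|\,q_t)=-\F(p_t\,\|\,q_t)$ for every $t\in(0,T)$. By the fundamental theorem of calculus,
\[
\int_0^T\F(p_t\,\|\,q_t)\dd t=\KL(p_0\,\|\,q_0)-\KL(p_T\,\|\,q_T)=H_0-\KL(p_T\,\|\,q_T).
\]
Dividing by $T$ gives the identity in \eqref{eq:integrated-dissipation}. Since $\KL(p_T\,\|\,q_T)\ge 0$ by Gibbs' inequality, the upper bound $H_0/T$ follows.

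The underdamped case is identical, with the integrand replaced by $\gamma\,\Fv(p_t\,\|\,q_t)$, which yields \eqref{eq:integrated-dissipation-ud}.

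No assumption on $U$ or on the data enters anywhere. The corollary itself only uses the continuity-equation structure from Lemma~\ref{lem:kl-identity} together with the velocity differences \eqref{eq:od-vq} and \eqref{eq:ud-vq}, none of which depend on convexity.

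The only real obstacle is regularity: the fundamental theorem of calculus needs $t\mapsto\KL(p_t\,\|\,q_t)$ to be absolutely continuous, with the pointwise derivative from the corollary valid almost everywhere. I will proceed as follows.

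\begin{enumerate}
\item If $H_0=\infty$, the inequality is trivial, so I assume $H_0<\infty$.
\item I take the corollary's identity on each compact subinterval $[\epsilon,T-\epsilon]$, where the marginals are smooth and positive by parabolic (respectively hypoelliptic) regularity. The dissipation identity is understood as holding there, under the standing smoothness assumptions implicit in Lemma~\ref{lem:kl-identity}.
\item Integrating over $[\epsilon,T-\epsilon]$ gives the identity for that window.
\item I then let $\epsilon\downarrow 0$:
\begin{itemize}
\item The integrand is nonnegative, so monotone convergence handles the Fisher integral.
\item The KL at the endpoints converges by monotonicity in $t$, which the corollary supplies.
\item Lower semicontinuity of KL under the weak continuity of $t\mapsto(p_t,q_t)$ matches the monotone limits with $H_0$ and $\KL(p_T\,\|\,q_T)$.
\end{itemize}
\end{enumerate}

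If this limiting argument proves delicate at $t=0$, I would settle for the inequality form. That form requires only the lower semicontinuity direction, and it is what the proposition is actually used for.
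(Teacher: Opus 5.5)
Your main line is exactly the paper's proof. Integrate Corollary~\ref{cor:kl-dissipation} over $[0,T]$, taking absolute continuity of $t\mapsto\KL(p_t\,\|\,q_t)$ from the standing regularity conventions, and drop $\KL(p_T\,\|\,q_T)\ge0$.

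Your optional $\epsilon$-regularization, however, gets the semicontinuity direction backwards at $t=0$. From $\int_\epsilon^{T-\epsilon}\F(p_t\,\|\,q_t)\dd t=\KL(p_\epsilon\,\|\,q_\epsilon)-\KL(p_{T-\epsilon}\,\|\,q_{T-\epsilon})$, the bound by $H_0$ needs $\lim_{\epsilon\downarrow0}\KL(p_\epsilon\,\|\,q_\epsilon)\le H_0$. Lower semicontinuity only gives $H_0\le\lim_{\epsilon\downarrow0}\KL(p_\epsilon\,\|\,q_\epsilon)$, the opposite inequality. So, contrary to your last remark, the inequality form does not follow from the lower-semicontinuous direction.

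The right tool is the data-processing inequality. Since $p_\epsilon$ and $q_\epsilon$ are the images of $p_0$ and $q_0$ under the same transition kernel of the reverse SDE, $\KL(p_\epsilon\,\|\,q_\epsilon)\le H_0$. Combined with lower semicontinuity, this gives continuity at $t=0$.

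At $t=T$ the inequality form needs nothing, because the terminal term is simply discarded. For the equality form, though, lower semicontinuity and data processing both give only $\KL(p_T\,\|\,q_T)\le\lim_{t\uparrow T}\KL(p_t\,\|\,q_t)$. The reverse inequality needs a separate continuity argument at the data end, which the paper absorbs into its standing conventions.
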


Assuming only $H_0<+\infty $, the time-averaged Fisher divergence of the overdamped flow converges to $0$ as $T\rightarrow +\infty$. However, for the underdamped model we only obtain a bound for the divergence in velocity dimensions $\F_v$. The theorem of Section~\ref{sec:overdamped-stat} is a discrete-time version of Proposition~\ref{prop:integrated-dissipation} for the overdamped model.

\section{Exponential Fisher contraction of the exact reverse flows}
\label{sec:decay}

In this section we directly study the evolution of $\F(p_t\,\|\,q_t)$, and show that it contracts exponentially fast under strong convexity of the forward potential $U$. Throughout this section $q_t$ is the exact reverse marginal, $p_t$ solves the \emph{same} exact reverse SDE from an arbitrary initial law $p_0$, and $r_t=\log(p_t/q_t)$, $g_t=\nabla r_t$, $S_t=\nabla^2r_t$.

\begin{assumption}[Strong convexity and smoothness]
\label{ass:convexity}
$U\in C^2(\R^d)$ and there are $0<m\le L<\infty$ with
$m\Id\preceq\nabla^2U(x)\preceq L\Id$ for all $x$.
\end{assumption}

Assumption~\ref{ass:convexity} constrains only the potential $U$ chosen by the practitioner (for the standard Gaussian noising process $m=L=1$).

\paragraph{Overdamped: the ordinary Fisher divergence contracts.}
The mechanism is a dissipation identity for $\F(t):=\F(p_t\,\|\,q_t)$, proved in Appendix~\ref{app:decay-proofs} (Proposition~\ref{prop:od-identity}): along the exact reverse flow,
\begin{equation}
    \frac{\dd}{\dd t}\F(t)
    =-2\int p_t\,\norm{S_t}_{\mathrm F}^2
    -2\int p_t\,g_t^\top\nabla^2U\,g_t.
    \label{eq:od-dissipation}
\end{equation}
The first term is a negative term and is always favorable; the second is where convexity of $U$ enters. Gronwall's inequality then gives:

\begin{theorem}[Overdamped Fisher contractivity]
\label{thm:overdamped-decay}
In the overdamped model, if $U$ is convex, then $\F(p_t\,\|\,q_t)$ is nonincreasing; if $U$ is $m$-strongly convex, then
$\F(p_t\,\|\,q_t)\le e^{-2mt}\,\F(p_0\,\|\,q_0)$ for $0\le t\le T$.
\end{theorem}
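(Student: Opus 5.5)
The plan is to derive the dissipation identity \eqref{eq:od-dissipation} (Proposition~\ref{prop:od-identity}) directly and then close with Gronwall's inequality.

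\textbf{Step 1: evolution of $r_t$.} Using the two continuity equations with $v_t^p=v_t^q-\nabla r_t$, where $v_t^q=\nabla\log q_t+\nabla U$ by \eqref{eq:od-current}, subtract the equations for $\log p_t$ and $\log q_t$. This gives
\[
\partial_t r_t=-\nabla\cdot v_t^p-v_t^p\cdot\nabla\log p_t+\nabla\cdot v_t^q+v_t^q\cdot\nabla\log q_t .
\]
Substituting $v_t^p=v_t^q-g_t$ and $\nabla\log p_t=\nabla\log q_t+g_t$, this simplifies to
\[
\partial_t r_t=\Delta r_t+\|g_t\|^2-v_t^q\cdot g_t .
\]

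\textbf{Step 2: differentiate $\F(t)$.} Write
\[
\F(t)=\int p_t\|g_t\|^2 ,\qquad
\frac{\dd}{\dd t}\F=\int \partial_tp_t\,\|g_t\|^2+2\int p_t\,g_t\cdot\nabla\partial_t r_t .
\]
For the first term, integrate by parts: $\int\partial_tp_t\|g\|^2=\int p_t\,v_t^p\cdot\nabla\|g\|^2=2\int p_t\,(v^q-g)^\top S g$. For the second term, use $\nabla\|g\|^2=2Sg$ and $\nabla(v^q\cdot g)=(\nabla v^q)^\top g+Sv^q$, with $\nabla v^q=\nabla^2\log q_t+\nabla^2U$. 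The terms containing $Sv^q$ cancel between the two pieces, and so do the cubic terms $g^\top Sg$. What is left is
\[
2\int p_t\, g\cdot\nabla\Delta r-2\int p_t\, g^\top(\nabla^2\log q_t+\nabla^2U)g .
\]
Now apply the Bochner-type integration by parts
\[
\int p\,g\cdot\nabla\Delta r=-\int p\|S\|_{\mathrm F}^2-\int p\,g^\top S\nabla\log p .
\]
Then write $\nabla\log p=\nabla\log q+g$, and integrate by parts once more against $q$-weights. The expected outcome is that the $\nabla^2\log q_t$ contributions cancel exactly, so that the only potential-dependent term remaining is $-2\int p\,g^\top\nabla^2U g$.

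This bookkeeping is the main obstacle. I expect the cancellation to go through because the diffusion generator of the reverse SDE, acting on $r$, is symmetric with respect to $p_t$ only up to terms that exactly compensate the non-stationary reference. If the direct computation gets messy, I would switch to the $\Gamma_2$ calculus: write the reverse SDE generator as $\mathcal L=\Delta+(\nabla U+2\nabla\log q_t)\cdot\nabla$ and verify that
\[
\frac{\dd}{\dd t}\int p_t\,\Gamma(r_t)=-2\int p_t\,\Gamma_2(r_t)+(\text{time-derivative corrections}),
\]
where the corrections coming from $\partial_t\log q_t$ remove the Hessian of $\log q_t$ from the Bakry--\'Emery term $\mathrm{Ric}=-\nabla^2(\text{drift potential})$. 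Regularity and decay needed for the integrations by parts will be assumed: smooth positive densities and enough integrability that the boundary terms vanish.

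\textbf{Step 3: conclude.} From \eqref{eq:od-dissipation}, if $U$ is convex then both terms on the right-hand side are nonpositive, so $\F(t)$ is nonincreasing. If $\nabla^2U\succeq m\Id$, then dropping the nonpositive term $-2\int p_t\|S_t\|_{\mathrm F}^2$ gives $\F'(t)\le-2m\F(t)$. Gronwall's inequality then yields $\F(p_t\|q_t)\le e^{-2mt}\F(p_0\|q_0)$ for $0\le t\le T$.
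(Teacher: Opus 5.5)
\textbf{There is a genuine gap: Step~1 drops a cross term, and that slip is exactly why the cancellation you defer in Step~2 cannot happen.} Expanding $-v_t^p\cdot\nabla\log p_t=-(v_t^q-g_t)\cdot(\nabla\log q_t+g_t)$ produces a cross term $+g_t\cdot\nabla\log q_t$ that you omitted. The correct equation is
\[
\partial_tr_t=\Delta r_t+\|g_t\|^2-v_t^q\cdot g_t+g_t\cdot\nabla\log q_t=\Delta r_t+\|g_t\|^2-\nabla U\cdot g_t .
\]
This is \eqref{eq:master-rt} written out. Equivalently, it is $\partial_th_t=\Delta h_t-\nabla U\cdot\nabla h_t$ for $h_t=p_t/q_t$ (as in the proof of Lemma~\ref{lem:semigroup-representation}), rewritten for $r_t=\log h_t$. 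It contains no $q_t$ at all.

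If you carry Step~2 through with your version, the Bochner formula and $\nabla\log p_t=\nabla\log q_t+g_t$ give
\[
\F'(t)=-2\int p_t\|S_t\|_{\mathrm F}^2-2\int p_t\,g_t^\top\nabla^2U\,g_t-2\int p_t\,g_t\cdot\nabla\big(g_t\cdot\nabla\log q_t\big).
\]
No further integration by parts removes the last term. For $q=N(0,\sigma^2\Id)$ and $p=N(\mu,\sigma^2\Id)$ one has $S=0$ and $g=\mu/\sigma^2$, and the term equals $2\|\mu\|^2/\sigma^6\neq0$. So the cancellation of the $\nabla^2\log q_t$ terms, which you call the main obstacle and only \emph{expect}, is false for the equation you wrote, and \eqref{eq:od-dissipation} is never established. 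Your claim that the cubic terms cancel between the two pieces is also off: they contribute $-2+4$, and cancel only after the Bochner step, against the $g_t$ part of $\nabla\log p_t$.

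With the corrected equation the computation closes with no extra integration by parts against $q_t$:
\begin{itemize}
\item Now $\nabla\partial_tr_t=\Delta g_t+2S_tg_t-\nabla^2U\,g_t-S_t\nabla U$.
\item In $\int\partial_tp_t\|g_t\|^2$, the velocity $v_t^p=\nabla U+\nabla\log q_t-g_t$ produces $2\int p_t\,(\nabla U)^\top S_tg_t$ and $2\int p_t\,(\nabla\log q_t)^\top S_tg_t$. These cancel against $-2\int p_t\,g_t^\top S_t\nabla U$ and against the $\nabla\log q_t$ part of the Bochner term, respectively.
\item The cubic terms sum to $-2+4-2=0$.
\end{itemize}
What remains is exactly $-2\int p_t\|S_t\|_{\mathrm F}^2-2\int p_t\,g_t^\top\nabla^2U\,g_t$, and your Step~3 then coincides with the paper's Gronwall argument.

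Repaired this way, your direct Eulerian computation is a legitimate special case of the paper's route. The paper instead works with $h_t$, the $q_t$-weighted generator $L_t$, and the transport equation \eqref{eq:master-gt} for $g_t$. That yields Proposition~\ref{prop:master-dissipation} for arbitrary $P$, $G$, $b$, a generality reused for the underdamped theorem.

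Your $\Gamma_2$ fallback targets the wrong generator. The reverse-SDE generator $\Delta+(\nabla U+2\nabla\log q_t)\cdot\nabla$ has Bakry--\'Emery curvature $-\nabla^2U-2\nabla^2\log q_t$, so removing only the $\log q_t$ Hessian would leave the wrong sign. The natural choice is the forward generator $\Delta-\nabla U\cdot\nabla$, under which $h_t$ evolves and whose $\Gamma_2$ carries $+\nabla^2U$.
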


\begin{remark}[The reverse flow inherits forward contractivity]
\label{rem:od-inherit}
Perhaps surprisingly, identity \eqref{eq:od-dissipation} is exactly the same as for the \emph{forward} Langevin dynamics: Forward Fisher contractivity against the invariant law relies on convexity of $U$ alone, and the reverse flow inherits it \emph{without any functional inequality on $q_t$}. This is a special case, which we want to highlight here for its importance, of Fisher evolution along isotropic Fokker--Planck channels (\citealp[Lemma~2]{wibisono2025mixing}). For continuous reverse processes, our technical contribution is the theorem for the underdamped model.
\end{remark}

\paragraph{Underdamped: a modified Fisher divergence contracts.}
The underdamped flow dissipates only the velocity components of the relative score (Corollary~\ref{cor:kl-dissipation}), so position-direction error can be damped only \emph{indirectly}, by rotating into the velocity directions through the Hamiltonian coupling. Following the hypocoercivity paradigm \citep{villani2009hypocoercivity}, we therefore measure the error in a quadratic form with position--velocity \emph{cross terms}, chosen so that the coupling converts the conserved part of the error into the dissipated part. Concretely, for $\gamma^2\xi>L$ define
\begin{equation}
    P_\gamma=\begin{pmatrix}1&-\dfrac\gamma2\\[4pt]-\dfrac\gamma2&\dfrac{\gamma^2}2\end{pmatrix},
    \qquad
    \kappa_{\gamma,\xi}=\min\Big(\frac m\gamma,\;\gamma\xi-\frac L\gamma\Big)>0.
    \label{eq:explicit-P}
\end{equation}
With $P=P_\gamma\otimes\Id$, the associated \emph{modified Fisher divergence} is
\begin{equation}
    \F_P(p_t\,\|\,q_t)=\int p_t\,g_t^\top P\,g_t\dd z
    =\int p_t\Big(\|\nabla_xr_t\|^2-\gamma\ip{\nabla_xr_t}{\nabla_vr_t}+\frac{\gamma^2}2\|\nabla_vr_t\|^2\Big)\dd z .
    \label{eq:IP-def}
\end{equation}

\begin{theorem}[Underdamped modified Fisher contractivity]
\label{thm:underdamped-decay}
Assume the underdamped model \eqref{eq:ud-forward}, Assumption~\ref{ass:convexity}, and $\gamma^2\xi>L$. With $P=P_\gamma\otimes\Id$ as in \eqref{eq:explicit-P},
\[
    \F_P(p_t\,\|\,q_t)\le e^{-2t\kappa_{\gamma,\xi} }\,\F_P(p_0\,\|\,q_0)
\]
\end{theorem}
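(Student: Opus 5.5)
We derive an evolution equation for $r_t=\log(p_t/q_t)$, differentiate $\F_P(t):=\F_P(p_t\,\|\,q_t)$, and then reduce to a pointwise matrix inequality, following the overdamped identity \eqref{eq:od-dissipation} (Proposition~\ref{prop:od-identity}).

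\textbf{Step 1 (equation for $r_t$).} Both $p_t$ and $q_t$ satisfy continuity equations with velocities $v_t^p=v_t^q-G_\gamma\nabla r_t$ and $v_t^q$ from \eqref{eq:ud-current}. Subtracting the equations for $\log p_t$ and $\log q_t$ gives
\[
\partial_t r_t=-v_t^p\cdot\nabla r_t+\nabla\cdot(G_\gamma\nabla r_t)+G_\gamma\nabla r_t\cdot\nabla\log q_t .
\]
In generator form this reads $\partial_t r_t=\mathcal L^p_t r_t+\langle\nabla r_t, G_\gamma\nabla r_t\rangle$, where $\mathcal L^p_t$ is the generator of the reverse SDE \eqref{eq:ud-reverse} (drift $b_t$, diffusion $G_\gamma$). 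Differentiating in $z$ gives an equation for $g_t=\nabla r_t$. The only non-transport term comes from the Jacobian of the drift. The reverse drift is $b_t=-b_{\mathrm{fwd}}+2G_\gamma\nabla\log q_t$, and once the $q_t$-score terms are combined with the $\nabla\log q_t$ term above, the $\nabla^2\log q_t$ pieces cancel. What remains is a linear term $-M(x)^\top g_t$, with $M$ from \eqref{eq:M-def} and up to the sign convention of the reversal.

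\textbf{Step 2 ($\Gamma_2$-type computation).} Write $\F_P(t)=\int p_t\, g_t^\top P g_t$ and use $\partial_t p_t=-\nabla\cdot(p_t v^p_t)$. Integrating by parts exactly as in the overdamped case should give
\[
\frac{\dd}{\dd t}\F_P(t)=-2\int p_t\,\mathrm{tr}\big(G_\gamma S_t P S_t\big)-\int p_t\, g_t^\top\big(PM+M^\top P\big)g_t .
\]
Here the transport terms vanish against $p_t$, and the quadratic nonlinearity $\langle g,G_\gamma g\rangle$ is absorbed when the Itô correction is paired with the $p_t$-weighting. This pairing is the delicate bookkeeping: it is where the "same SDE" structure makes all $q_t$-dependence disappear, so that no functional inequality on $q_t$ is needed.

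\textbf{Step 3 (pointwise matrix inequalities).} The first term is $\le0$, because with $G_\gamma\succeq0$ and $P\succ0$ we have $\mathrm{tr}(G S P S)=\mathrm{tr}\big((G^{1/2}S)P(G^{1/2}S)^\top\big)\ge0$. The task then reduces to showing
\[
PM(x)+M(x)^\top P\succeq 2\kappa_{\gamma,\xi}P \quad\text{for every } x .
\]
Since $\nabla^2U(x)$ is symmetric, diagonalize it: the claim reduces to $2\times2$ blocks with $\nabla^2U$ replaced by an eigenvalue $\lambda\in[m,L]$. With $P_\gamma$ as in \eqref{eq:explicit-P}, compute $K=P_\gamma M_\lambda+M_\lambda^\top P_\gamma$ explicitly and check that $K-2\kappa P_\gamma\succeq0$ using the trace and determinant (or a Schur complement). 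I expect the diagonal entries to come out as roughly $\gamma\lambda$ and $\gamma^3\xi-\gamma\xi\cdot$const, and the off-diagonal entry to involve $\lambda-\gamma^2\xi/2$. Positivity of the determinant should then require exactly $\lambda\ge m$ and $\gamma^2\xi-\lambda\ge\gamma^2\xi-L$, which produces the two branches of $\kappa_{\gamma,\xi}$. The whole proof hinges on this algebraic verification. If a direct check fails with the constant $2\kappa$, I would first try comparing $K$ to $\kappa\,\mathrm{diag}$-type bounds combined with $P_\gamma\preceq c\,\mathrm{diag}(1,\gamma^2)$, and only then adjust constants.

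\textbf{Step 4 (conclusion).} Once the inequality holds, $\frac{\dd}{\dd t}\F_P\le-2\kappa_{\gamma,\xi}\F_P$, and Gronwall's inequality gives $\F_P(p_t\,\|\,q_t)\le e^{-2t\kappa_{\gamma,\xi}}\F_P(p_0\,\|\,q_0)$. Technical regularity (smoothness, and decay justifying the integrations by parts) is assumed as in Proposition~\ref{prop:od-identity}.
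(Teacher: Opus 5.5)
Your overall architecture matches the paper's: a transport equation for $g_t$, an energy identity for $\F_P$, a blockwise reduction by diagonalizing $\nabla^2U$, then Gronwall. But there is a genuine gap at the step you yourself flag as the crux. Your Step~2 identity has the wrong sign and the transpose on the wrong side, and the resulting Step~3 inequality is false.

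The error starts in Step~1. The $\nabla\log q_t$ terms cancel completely already at the level of $r_t$, giving
\[
\partial_tr_t=b_{\mathrm{fwd}}\cdot\nabla r_t+G_\gamma:\nabla^2r_t+g_t^\top G_\gamma g_t .
\]
This involves the \emph{forward} generator, not the generator of the reverse SDE. Its gradient has zeroth-order term $+M(x)^\top g_t$, so $g_t$ is transported as a covector. The correct identity (Proposition~\ref{prop:master-dissipation} with $b=b_{\mathrm{fwd}}$) is therefore
\[
\frac{\dd}{\dd t}\F_P(p_t\,\|\,q_t)=\int p_t\,g_t^\top\big(M(x)P+PM(x)^\top\big)g_t-2\int p_t\operatorname{tr}\big(S_tG_\gamma S_tP\big).
\]
A quick check exposes your version. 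With $G=P=\Id$ and $M=-\nabla^2U$ it gives $+2\int p_t\,g_t^\top\nabla^2U\,g_t$, the opposite sign of \eqref{eq:od-dissipation}.

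Your Step~3 target $PM(x)+M(x)^\top P\succeq2\kappa P$ cannot hold for any $P\succ0$ and any $\kappa>0$. Each block $\big(\begin{smallmatrix}0&\xi\\-\eta&-\gamma\xi\end{smallmatrix}\big)$ has trace $-\gamma\xi<0$ and determinant $\xi\eta>0$, so $M(x)$ is Hurwitz. For a complex eigenvector $w$ with eigenvalue $\lambda$ this gives $w^*(PM+M^\top P)w=2\operatorname{Re}\lambda\;w^*Pw<0$. Your own explicit computation would show this at once, since $K_{22}=-\gamma\xi(1+\gamma^2)<0$; no adjustment of constants repairs a sign error.

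Flipping only the sign does not suffice either, because $-(P_\gamma M+M^\top P_\gamma)$ has $(1,1)$ entry $-\gamma\eta<0$. $P_\gamma$ works only in the arrangement $MP_\gamma+P_\gamma M^\top\preceq-2\kappa_{\gamma,\xi}P_\gamma$. Equivalently, $P_\gamma^{-1}$ is a Lyapunov matrix for $M$ in the usual sense, which is how the paper obtains it, via the coordinates $(x,x+\frac2\gamma v)$.

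With this corrected target, your trace--determinant plan goes through. Set $a=\eta/\gamma$ and $c=\gamma\xi-\eta/\gamma$, and consider $-(MP_\gamma+P_\gamma M^\top)-2\kappa P_\gamma$:
\begin{itemize}
\item its diagonal entries are $a+c-2\kappa$ and $\gamma^2(c-\kappa)$;
\item its off-diagonal entry is $\gamma(\kappa-c)$;
\item its determinant is $\gamma^2(a-\kappa)(c-\kappa)$.
\end{itemize}
It is therefore positive semidefinite exactly when $\kappa\le\min(a,c)$. Over $\eta\in[m,L]$ this is precisely the definition of $\kappa_{\gamma,\xi}$.
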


Since $P$ has eigenvalues bounded away from $0$ and $\infty$, contraction of $\F_P$ is equivalent, up to a fixed conditioning factor, to contraction of $\F$ itself. For the standard Gaussian potential $U(x)=\tfrac12\|x\|^2$ we have $m=L=1$, and we can select parameters $\gamma=2$, $\xi=1$ used by critically-damped diffusion models \citep{dockhorn2022score}.

\section{Averaged stationarity of the discretized overdamped reverse}
\label{sec:overdamped-stat}

We now account for the error incurred by time discretization. The practically dominant forward process is the standard Ornstein--Uhlenbeck (OU) diffusion,
\begin{equation}
    \dd Y_s=-Y_s\dd s+\sqrt2\,\dd B_s,
    \qquad Y_0\sim\rho_0,
    \label{eq:forward-ou}
\end{equation}
with marginals $\rho_s$: the overdamped model \eqref{eq:od-forward} with $U(x)=\tfrac12\|x\|^2$. The sampler we analyze is the \emph{frozen-score exponential integrator}, the idealized form of standard DDPM-type updates: on the uniform grid $t_k=kh$, $h=T/N$, freeze the score at the step start and solve the resulting linear SDE exactly. Its continuous-time interpolation is
\begin{equation}
    \dd X_t=\{X_t+2s_{t_k}(X_{t_k})\}\dd t+\sqrt2\,\dd B_t,
    \qquad t\in[t_k,t_{k+1}],
    \label{eq:numerical-sde}
\end{equation}
which integrates in closed form: for $\tau=t-t_k$,
\begin{equation}
    X_t=e^\tau X_{t_k}+2(e^\tau-1)s_{t_k}(X_{t_k})+\sqrt{e^{2\tau}-1}\,\zeta,
    \qquad \zeta\sim N(0,\Id).
    \label{eq:explicit-step}
\end{equation}
Let $\hp_t$ denote the law of $X_t$, the numerical process. The object of interest is the time-averaged Fisher divergence $\frac1T\int_0^T\F(\hp_t\,\|\,q_t)\dd t$ against the \emph{exact reverse marginal}: the sampling analog of the averaged gradient norm at a randomly stopped iterate \citep{ghadimi2013stochastic}, and the reverse-process counterpart of \citet{balasubramanian2022towards}.

\begin{assumption}[Score Lipschitzness]
\label{ass:lipschitz}
There is $L_s\ge0$ with $\|s_t(x)-s_t(y)\|\le L_s\|x-y\|$ for every $t\in[0,T]$ and $x,y\in\R^d$. 
\end{assumption}

\begin{assumption}[Finite initial divergence]
\label{ass:classical_od}
The initial laws have positive densities with
$H_0:=\KL(\hp_0\,\|\,q_0)<\infty$.
\end{assumption}

\begin{assumption}[Sub-Gaussian data]
\label{ass:expmoment}
For $Y_0\sim\rho_0$ there is a $\lambda>0$ with
$\E\exp\big(\tfrac{\lambda}{d}\|Y_0\|^2\big)=:M<\infty$.
\end{assumption}

Set $H_0:=\KL(\hp_0\,\|\,q_0)$, $\lambda_0:=\min\{\lambda,\tfrac d4\}$, and $\mathcal M:=\frac{d}{\lambda_0}(H_0+\log M+d)$.

\begin{remark}[Role of the Sub-Gaussian assumption]
\label{rem:expmoment-role}
Assumption~\ref{ass:expmoment} serves exactly one purpose in the proof: controlling the second moments of the \emph{numerical} marginals $\hp_t$. The assumption is equivalent to standard definition of sub-Gaussian distributions, normalized by the dimension $d$.
\end{remark}

\begin{theorem}[Averaged Fisher stationarity, overdamped]
\label{thm:overdamped-stationarity}
Suppose Assumptions~\ref{ass:lipschitz}--\ref{ass:expmoment} hold. There is a universal constant $c>0$ such that the following holds. If
\begin{equation}
    h\le\frac c{1+L_s}
    \qquad\text{and}\qquad
    T\big[L^2_sdh+L^2_s(1+L_s)^2\mathcal Mh^2\big]\le cd,
    \label{eq:bootstrap-condition}
\end{equation}
then the numerical law never drifts far from the exact one in relative entropy, $\sup_{0\le t\le T}\KL(\hp_t\,\|\,q_t)<H_0+d$, and
\begin{equation}
    \frac1T\int_0^T\F(\hp_t\,\|\,q_t)\dd t
    \;\lesssim\;
    \frac{H_0}T
    +L^2_sdh
    +L^2_s(1+L_s)^2\mathcal Mh^2.
    \label{eq:main-fisher}
\end{equation}
\end{theorem}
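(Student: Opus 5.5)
We will run the continuous-time dissipation argument of Proposition~\ref{prop:integrated-dissipation} on the interpolated numerical process \eqref{eq:numerical-sde}, using the standard ``conditional expectation of the frozen drift'' device, as in \citet{balasubramanian2022towards}.

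\textbf{Step 1: the KL derivative for the numerical process.} On each interval $[t_k,t_{k+1}]$, the marginal $\hp_t$ satisfies a Fokker--Planck equation with effective drift $\hat b_t(x)=x+2\,\E[s_{t_k}(X_{t_k})\mid X_t=x]$. Writing it as a continuity equation, the current velocity is $v_t^{\hp}=\hat b_t-\nabla\log\hp_t$. The exact reverse flow has current velocity $v_t^q=x+2\nabla\log q_t-\nabla\log q_t$, equivalently $\nabla\log(q_t/p^*)$ with $p^*=N(0,\Id)$. Their difference is
\[
v_t^{\hp}-v_t^q=-\nabla r_t+2\delta_t,
\qquad
\delta_t(x):=\E\big[s_{t_k}(X_{t_k})-s_t(X_t)\mid X_t=x\big],
\]
where $s_t=\nabla\log q_t$ is the true score. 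Lemma~\ref{lem:kl-identity} and Young's inequality then give
\[
\frac{\dd}{\dd t}\KL(\hp_t\|q_t)\le-\tfrac12\F(\hp_t\|q_t)+2\,\E\|s_{t_k}(X_{t_k})-s_t(X_t)\|^2 ,
\]
with Jensen used to remove the conditioning. Integrating over $[0,T]$ gives
\[
\frac1T\int_0^T\F\le\frac{2H_0}{T}+\frac4T\int_0^T\E\|s_{t_k}(X_{t_k})-s_t(X_t)\|^2\dd t .
\]
The same inequality, integrated up to an arbitrary $t$, controls $\sup_t\KL(\hp_t\|q_t)$ once the error integral is shown to be at most $cd$.

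\textbf{Step 2: the discretization error.} We split
\[
s_{t_k}(X_{t_k})-s_t(X_t)=\big[s_{t_k}(X_{t_k})-s_{t_k}(X_t)\big]+\big[s_{t_k}(X_t)-s_t(X_t)\big].
\]
For the first term, Assumption~\ref{ass:lipschitz} and \eqref{eq:explicit-step} give
\[
\|X_t-X_{t_k}\|\le(e^\tau-1)\big(\|X_{t_k}\|+2\|s_{t_k}(X_{t_k})\|\big)+\sqrt{e^{2\tau}-1}\,\|\zeta\| .
\]
Its mean square is $\lesssim dh+h^2(1+L_s)^2\big(\E\|X_{t_k}\|^2+\|s_{t_k}(0)\|^2\big)$, which after multiplying by $L_s^2$ produces the terms $L_s^2dh$ and $L_s^2(1+L_s)^2h^2\times(\text{second moment})$. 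The condition $h\le c/(1+L_s)$ keeps $e^\tau-1\lesssim h$. For the second, time-variation term, we use that along the OU flow the score satisfies a PDE, $\partial_t s_t=-\nabla(\ldots)$. A cleaner route is to note that the processes $s_t(\bar Y_t)$ along the exact reverse path are martingale-like, with $\E\|s_{t}(\bar Y_{t})-s_{t_k}(\bar Y_{t_k})\|^2$ controlled by $L_s^2 dh$ (the standard identity from Chen et al.). We then transfer from the exact path to the numerical one at the price of $\KL$, via change of measure. This transfer is where the bootstrap on $\sup_t\KL<H_0+d$ enters.

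\textbf{Step 3: moments and bootstrap (the main obstacle).} We need $\E_{\hp_t}\|x\|^2\lesssim\mathcal M$. We will get it by the Donsker--Varadhan variational inequality,
\[
\tfrac{\lambda_0}{d}\,\E_{\hp_t}\|x\|^2\le\KL(\hp_t\|q_t)+\log\E_{q_t}e^{\lambda_0\|x\|^2/d}.
\]
The last term is at most $\log M+d$, because the OU marginals $q_t=\rho_{T-t}$ are mixtures of $e^{-s}Y_0$ and Gaussian noise and $\lambda_0\le d/4$ keeps the Gaussian part integrable. Hence $\E\|X_t\|^2\lesssim\mathcal M$ as long as $\KL(\hp_t\|q_t)<H_0+d$. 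The argument closes by continuity: let $t^*$ be the first time $\KL$ reaches $H_0+d$. On $[0,t^*]$ the moment bound holds, so the integrated error is at most $T[L_s^2dh+L_s^2(1+L_s)^2\mathcal M h^2]\lesssim cd<d$ by \eqref{eq:bootstrap-condition}. Then $\KL<H_0+d$ at $t^*$, a contradiction, so the bound holds on all of $[0,T]$. The delicate point is making Step 2's change of measure and the term $\|s_{t_k}(0)\|$ compatible with this bootstrap while keeping only the stated dependence on $\mathcal M$. If the change of measure proves too lossy, I would instead bound $\|s_t(x)\|\le L_s\|x\|+\|s_t(0)\|$ with $\|s_t(0)\|^2\lesssim L_s d$, and bound the time derivative of the score directly through Lipschitzness in $x$, absorbing everything into the $\mathcal M h^2$ term. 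Plugging these bounds into Step 1 yields \eqref{eq:main-fisher}.
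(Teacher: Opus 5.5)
Your Steps~1 and~3 match the paper's proof: the master inequality of Proposition~\ref{prop:master}, the Gibbs variational bound on $\E_{\hp_t}\|X\|^2$, and the continuity bootstrap. Step~2, however, leaves the temporal term $\E\|s_{t_k}(X_t)-s_t(X_t)\|^2$ unproved, and that term is the heart of the argument. Neither route you sketch delivers it.

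The martingale route does not fit your own split. Such identities control the path increment $s_t(\bar Y_t)-s_{t_k}(\bar Y_{t_k})$ under the \emph{exact} reverse process, not two scores evaluated at the same numerical point. The transfer is also the wrong tool. Moving an expectation from the exact path law $Q$ to the numerical one $\hat P$ via $\E_{\hat P}f\le\theta^{-1}\big[\KL(\hat P\,\|\,Q)+\log\E_Qe^{\theta f}\big]$ needs exponential moments of $f$, which a second-moment identity does not provide. It also needs the KL between the joint laws of $(X_{t_k},X_t)$, which your bootstrap does not control, since it bounds only marginal KL. Even granting Gaussian-type tails at scale $L_s^2\tau$, it yields at best $\E_{\hat P}f\lesssim L_s^2\tau(\KL+d)$. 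That is an extra $L_s^2H_0h$ absent from \eqref{eq:main-fisher}, where $H_0$ enters only through $\mathcal Mh^2$.

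The fallback fails outright. By the Fokker--Planck equation, $\partial_t\nabla\log q_t$ contains $\nabla\Delta\log q_t$, a third derivative that Lipschitzness of the score does not control pointwise.

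The missing idea is to compare the two scores through the OU semigroup rather than by differentiating in time. With $\pi=(e^{-\tau}\Id)_\#q_t$ and $\sigma^2=1-e^{-2\tau}$, one has $q_{t_k}=\pi*N(0,\sigma^2\Id)$. Hence $s_{t_k}(x)=\E_{\mu_x}u(Y)$, where $u(y)=e^{\tau}s_t(e^{\tau}y)$ is the rescaled score and $\mu_x(y)\propto\pi(y)e^{-\|x-y\|^2/2\sigma^2}$ is the posterior given the smoothed point $x$. Under $h\le c/(1+L_s)$ this posterior is strongly log-concave with curvature at least $(2\sigma^2)^{-1}$. It is therefore concentrated within $\sqrt{2d\sigma^2}$ of a mode lying within $2\sigma^2\|u(x)\|$ of $x$. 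This gives the pointwise bound $\|s_{t_k}(x)-s_t(x)\|\lesssim L_s\sqrt{d\tau}+L_s\tau\|x\|+(1+L_s)\tau\|s_t(x)\|$ (Lemmas~\ref{lem:gaussian-smoothing} and~\ref{lem:temporal-score}). The bound needs only second moments of $\|x\|$ and $\|s_t(x)\|$ under $\hp_t$, which is exactly what your Step~3 supplies, with no change of measure.

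Separately, your claim $\|s_t(0)\|^2\lesssim L_sd$ is false. For data $N(\mu,\Id)$ one has $L_s=1$ and $s_t(0)=e^{-(T-t)}\mu$ with $\|\mu\|$ arbitrary; what does hold is $\E_{q_t}\|s_t\|^2\le L_sd$. The paper instead uses $\E_{q_t}s_t=0$ to get $\|s_t(x)\|\le L_s(\|x\|+\E_{q_t}\|Y\|)$. It is through $\E_{q_t}\|Y\|^2\le\mathcal M$ that $\mathcal M$, rather than an uncontrolled $\|s_t(0)\|$, enters the bound.
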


\begin{remark}[Shape of the bound]
\label{rem:od-shape}
Bound \eqref{eq:main-fisher} has exactly the shape of the nonconvex optimization guarantee in Table~\ref{tab:dictionary}: an amortized initialization term $H_0/T$ plus step-size terms. It refines the exact identity \eqref{eq:integrated-dissipation}, which it recovers as $h\to0$.
\end{remark}

\begin{remark}[Score error]
\label{rem:score-error-od}
Suppose the integrator is run with estimated scores $\hat s_{t_k}$ in place of $s_{t_k}$ in \eqref{eq:numerical-sde}, with
$\max_k\E_{\hp_{t_k}}\|\hat s_{t_k}-s_{t_k}\|^2\le\eps^2$. The only change in the proof is one triangle inequality inside the mismatch term of Proposition~\ref{prop:master},
$\E\|\hat s_{t_k}(X_{t_k})-s_t(X_t)\|^2\le2\eps^2+2\,\E\|s_{t_k}(X_{t_k})-s_t(X_t)\|^2$. Consequently, with slight adjustment of the proof, the conclusions of Theorem~\ref{thm:overdamped-stationarity} hold with the additional term $\eps^2$ on the right-hand side of \eqref{eq:main-fisher}. In practice, however, the
estimator is trained under the exact marginals $q_{t_k}$, so its error is
naturally controlled in $L^2(q_{t_k})$ rather than in $L^2(\hat p_{t_k})$.
Transferring the bound from one measure to the other requires either a uniform
bound or a change of measure, which is achievable under a Gaussian-tail assumption on score errors, but we do not pursue here. Appendix~\ref{app:experiments-ddpm} confirms empirically that the mechanism
analyzed here persists under a learned score.
\end{remark}

\section{Averaged stationarity of the discretized  critically damped reverse}
\label{sec:underdamped-stat}

We now prove the underdamped counterpart of Theorem~\ref{thm:overdamped-stationarity}, for the critically damped kinetic OU forward process: the underdamped model \eqref{eq:ud-forward} with $U(x)=\tfrac12\|x\|^2$ and the parameters $\gamma=2$, $\xi=1$, and for the frozen-score exponential integrator. One genuine obstacle separates this case from Section~\ref{sec:overdamped-stat}. Repeating the argument of Section~\ref{sec:overdamped-stat} along the frozen-score SDE interpolation is possible, but it could only provide a bound on $\Fv(\hp_t\,\|\,q_t)$, as in Proposition~\ref{prop:integrated-dissipation}. To upgrade $\Fv$ to the full $\F$ one wants a hypocoercive Lyapunov functional in the spirit of \citet{ma2021nesterov}; but differentiating $\F_P(\hp_t\,\|\,q_t)$ along an SDE interpolation like \eqref{eq:numerical-sde} produces derivatives of the score mismatch, objects for which no useful control is available.

Our resolution is decomposing the integrator. Since the frozen-score step solves a \emph{linear} SDE, it can be written as a deterministic \emph{score kick} applied at the beginning of the step, followed by the score-free reverse dynamics $\dd Y=CY\dd\tau+2E\dd W$ run for the full step (Lemma~\ref{lem:factorization}). Along this interpolation no derivative of the score error ever appears: the kick is a deterministic map, and the score-free leg is an exact linear diffusion on which entropy and twisted-Fisher identities hold in closed form. Therefore, hypocoercive dissipation and discretization decouple completely.

However, this decoupled composition also forces us to tell the story in an indirect way. Instead of bounding $\F(\hp_t\,\|\,q_t)$ directly, we will bound $\F(\hp_t\,\|\,\hq_t)$ for a reference process $\hq_t$, which isolates the initialization error. The discretization is isolated in the difference between $\hq_t$ and $q_t$, and it is controlled at the level of path measures by the standard Girsanov computation~\citep{chen2023sampling}. 

\subsection{The critically damped reverse process and the integrator}
\label{sec:ud-integrator2}

For $z=(x,v)\in\R^{2d}$ set
\begin{equation}
    C:=\begin{pmatrix}0&-\Id\\\Id&2\Id\end{pmatrix},
    \qquad
    E:=\begin{pmatrix}0\\\Id\end{pmatrix},
    \qquad
    \Pi_v:=EE^\top,
    \qquad
    D:=4\Pi_v .
    \label{eq:matrices}
\end{equation}
The critically damped kinetic noising process, in the forward clock $s$ and with marginal densities $\rho_s$, is
\begin{equation}
    \dd Z_s=-CZ_s\dd s+2E\,\dd B_s,
    \qquad\text{i.e.}\qquad
    \dd X_s=V_s\dd s,
    \qquad
    \dd V_s=-(X_s+2V_s)\dd s+2\,\dd B_s ;
    \label{eq:cdld-forward}
\end{equation}
its transition law over time $h$ is $N(e^{-hC}z,\,G_h)$ with $G_h:=\int_0^he^{-rC}De^{-rC^\top}\dd r$. In the \emph{increasing} reverse clock $t:=T-s$, with $q_t:=\rho_{T-t}$, the path-reversed process has dynamics
\begin{equation}
    \dd Y_t=\big(CY_t+4E\nabla_v\log q_t(Y_t)\big)\dd t+2E\,\dd\bar B_t,
    \qquad
    Y_0\sim\rho_T
    \label{eq:cdld-reverse}
\end{equation}
Fix an increasing reverse-time grid $0=t_0<t_1<\dots<t_N= T$ with steps $h_i:=t_{i+1}-t_i>0$, not necessarily uniform, and let $s_i:=\nabla_v\log q_{t_i}$ be the velocity score at reverse time $t_i$. The sampler is the frozen-score exponential integrator of \eqref{eq:cdld-reverse}, the kinetic analog of \eqref{eq:explicit-step}: freeze $s_i$ at the step start and solve the resulting linear SDE exactly,
\begin{equation}
    \hY_{i+1}=e^{h_iC}\hY_i
    +4\Big(\int_0^{h_i}e^{uC}\dd u\Big)Es_i(\hY_i)+\xi_i,
    \qquad
    \xi_i\sim\mathcal N(0,Q_{h_i}),
    \label{eq:update}
\end{equation}
where $Q_h:=\int_0^he^{rC}De^{rC^\top}\dd r=e^{hC}G_he^{hC^\top}$ is the covariance accumulated by the noise over a step of the reverse dynamics.
For $h>0$ define the kick vector and the kick map
\begin{equation}
    B_h:=4\int_0^{h}e^{-rC}E\dd r
    =\begin{pmatrix}4[1-(1+h)e^{-h}]\Id\\[2pt]4he^{-h}\Id\end{pmatrix},
    \qquad
    \Psi_{i,h}(z):=z+B_hs_i(z),
    \label{eq:B-Phi}
\end{equation}
and let $\mathcal R_\tau$ be the Markov semigroup of the \emph{score-free reverse dynamics}, the linear SDE
\begin{equation}
    \dd Y_\tau=CY_\tau\dd\tau+2E\,\dd W_\tau,
    \qquad
    \mathcal R_\tau:\ y\longmapsto e^{\tau C}y+\mathcal N(0,Q_\tau),
    \label{eq:score-free}
\end{equation}
obtained from \eqref{eq:cdld-reverse} by switching the score off. Therefore the transition kernel $\kernel_i$ of the update \eqref{eq:update} can be factorized as (Lemma~\ref{lem:factorization}):
\begin{equation}
    \mu\kernel_i=\mathcal R_{h_i}\big((\Psi_{i,h_i})_\#\mu\big)
    \qquad\text{for every input law }\mu :
    \label{eq:factorization-main}
\end{equation}
one step of the integrator is the score kick $\Psi_{i,h_i}$---the whole score impulse of the step, applied at its start---followed by the score-free reverse flow for the length of the step. All discretization error sits in the deterministic kick, and all stochastic evolution is an exact linear diffusion. We propagate the numerical initialization $\hp_0$ \emph{and} the exact initialization $\hq_0=\rho_T$ through the same kernels, interpolating within step $i$ by the clock $0\le\tau\le h_i$ of the score-free leg:
\begin{equation}
    \hp_{i+1}=\hp_i\kernel_i,
    \qquad
    \hq_{i+1}=\hq_i\kernel_i;
    \qquad
    \hp_{i,\tau}:=\mathcal R_\tau\big((\Psi_{i,h_i})_\#\hp_i\big),
    \qquad
    \hq_{i,\tau}:=\mathcal R_\tau\big((\Psi_{i,h_i})_\#\hq_i\big),
    \label{eq:interpolation}
\end{equation}
so that $\hp_{i,h_i}=\hp_{i+1}$ and $\hq_{i,h_i}=\hq_{i+1}$. At the grid points these are the laws of the sampler and of its twin; within a step they are the laws, at reverse time $t_i+\tau$, of the process that kicks at $t_i$ and then follows the score-free reverse dynamics.

\subsection{Assumptions and the stationarity theorem}
\label{sec:ud-assumptions}

\begin{assumption}[Score Lipschitzness]
\label{ass:score-map}
Every $s_i$ is $C^1$, and $\norm{\nabla s_i(z)}_\op\le L_{\mathrm{map}}$ for all $0\le i<N$ and $z\in\R^{2d}$, for some $L_{\mathrm{map}}\ge0$.
\end{assumption}

\begin{assumption}[Reference mixed Hessian]
\label{ass:reference}
$\norm{\nabla_z\nabla_v\log \hq_{i,\tau}(z)}_\op\le L_{\mathrm{ref}}$ for all $i$, $\tau\in(0,h_i]$, $z$, for some $L_{\mathrm{ref}}\ge0$.
\end{assumption}

\begin{assumption}[Finite initial divergence]
\label{ass:classical}
The initial laws have positive densities with
$H_0:=\KL(\hp_0\,\|\,q_0)<\infty$ and $F_0:=\F(\hp_0\,\|\,q_0)<\infty$.
\end{assumption}

\begin{remark}[On the status of Assumption~\ref{ass:reference}]
\label{rem:reference-status}
Assumption~\ref{ass:reference} is actually an implicit constraint on the data distribution, and does not involve the numerical initialization $\hp_0$. The twin $\hq_{i,\tau}$ is the \emph{exact} initialization $\rho_T$ pushed through the known, fixed kernels---kicks by the velocity scores and explicit linear diffusions---so Assumption~\ref{ass:reference} is an additional regularity property of the data propagated by known dynamics, and it can be explicitly verified for Gaussian $\rho_0$. 
\end{remark}

Set $\Lambda:=\max\{1,L_{\mathrm{map}},L_{\mathrm{ref}}\}$.

\begin{theorem}[Averaged relative-Fisher stationarity, underdamped]
\label{thm:underdamped-stationarity}
Suppose Assumptions~\ref{ass:score-map}--\ref{ass:classical} hold and
\begin{equation}
    \max_{0\le i<N}h_i\le\frac1{64\Lambda^2}.
    \label{eq:step-size}
\end{equation}
Then the full phase-space relative score satisfies
\begin{equation}
    \frac{1}{T}\sum_{i=0}^{N-1}\int_0^{h_i}\F(\hp_{i,\tau}\,\|\,\hq_{i,\tau})\dd\tau
    \;\lesssim\; \frac{\,\Lambda^2H_0+F_0}{T} .
    \label{eq:full-stationarity}
\end{equation}
\end{theorem}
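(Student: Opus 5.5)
The plan is to track a hypocoercive Lyapunov functional $\mathcal L_i(\tau)=A\,\KL(\hp_{i,\tau}\,\|\,\hq_{i,\tau})+\F_P(\hp_{i,\tau}\,\|\,\hq_{i,\tau})$ along the interpolation \eqref{eq:interpolation}, with $P=P_2\otimes\Id$ from \eqref{eq:explicit-P} (here $\gamma=2$, $\xi=1$, $m=L=1$, so $\gamma^2\xi>L$ and $P\succ0$ with eigenvalues of order one) and a weight $A\asymp\Lambda^2$. Within step $i$ both laws evolve by the same linear diffusion $\mathcal R_\tau$. Across a grid point both are pushed by the same deterministic map $\Psi_{i,h_i}$. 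I would control $\mathcal L$ separately on these two kinds of legs and then telescope.

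\textbf{Score-free leg.} Since $\hp_{i,\tau}$ and $\hq_{i,\tau}$ solve the same Fokker--Planck equation with drift $Cy$ and diffusion $D=4\Pi_v$, Lemma~\ref{lem:kl-identity} gives $\frac{\dd}{\dd\tau}\KL=-4\,\Fv$. For $\F_P$ I would redo the computation behind Theorem~\ref{thm:underdamped-decay}, but relative to the moving reference $\hq_{i,\tau}$ instead of $q_t$. Writing the velocity of each law as $Cy-D\nabla\log(\cdot)$, the relative score $g=\nabla r$ satisfies $\partial_\tau r=-\langle\nabla r,Cy-D\nabla\log\hp\rangle+\nabla\cdot(D g)+\dots$. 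Differentiating $\int\hp\,g^\top Pg$ should give
\[
\frac{\dd}{\dd\tau}\F_P=-2\int\hp\,\mathrm{tr}\big(S P S D\big)+\int\hp\,g^\top(PC^\top{+}CP\text{-type})g+\text{(cross term with }\nabla_z\nabla_v\log\hq_{i,\tau}).
\]
The Hessian term is nonpositive. The linear part, which is exactly the $U=\tfrac12\|x\|^2$ case of Theorem~\ref{thm:underdamped-decay}, is $\le-2\kappa\int\hp\,g^\top Pg$ with $\kappa=\min(1/2,\,3/2)=1/2$ (up to the sign convention of the reverse clock). The cross term appears because the score-free drift does not contain $\nabla_v\log\hq$, so the reference's own score enters through $D\nabla\log\hq$. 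I would bound it by $C L_{\mathrm{ref}}\,\|g\|\,\|\nabla_vr\|$ using Assumption~\ref{ass:reference}, and then by Young's inequality it is at most $\tfrac\kappa2\F+CL_{\mathrm{ref}}^2\Fv$. Adding $A\,\frac{\dd}{\dd\tau}\KL=-4A\Fv$ with $A\gtrsim\Lambda^2$ absorbs the $\Fv$ remainder, so on each leg $\frac{\dd}{\dd\tau}\mathcal L\le-c\,\F$.

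\textbf{Kick.} At each grid point, $\KL$ is invariant under the bijection $\Psi=\Psi_{i,h_i}$; invertibility holds because $\|B_h\|\lesssim h$ and $h\Lambda\ll1$. Exactly as in Proposition~\ref{prop:ode-transport}, the relative score is transported as $\nabla r\circ\Psi^{-1}=(\nabla\Psi)^{-\top}\nabla r$. Since $\nabla\Psi=I+B_h\nabla s_i$ with $\|B_h\nabla s_i\|_\op\le 4hL_{\mathrm{map}}$, this gives $\F_P$ after the kick $\le(1+Ch\Lambda)\F_P$ before it. The step-size condition \eqref{eq:step-size} is exactly what lets the multiplicative loss $Ch\Lambda\,\F_P$ be dominated by the leg dissipation $c\,h\,\F$ over the step. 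To make this comparison rigorous, I need $\F_P$ at the kick time to be comparable with its average over the step, which is plausible because on the leg $\F_P$ cannot increase by more than a factor $e^{Ch}$. With margin to spare this yields $\mathcal L_{i+1}(0)-\mathcal L_i(0)\le-c'\int_0^{h_i}\F(\hp_{i,\tau}\|\hq_{i,\tau})\dd\tau$.

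\textbf{Conclusion and main obstacle.} Telescoping over $i$ gives $\sum_i\int_0^{h_i}\F\dd\tau\lesssim\mathcal L_0=A H_0+\F_P(\hp_0\|q_0)\lesssim\Lambda^2H_0+F_0$, using $\hq_0=q_0$. Dividing by $T$ gives \eqref{eq:full-stationarity}. I expect the main obstacle to be the leg computation for $\F_P$ against the reference $\hq_{i,\tau}$. Because the leg drift lacks the reference score, the identity is not the clean one from Theorem~\ref{thm:underdamped-decay}, and I must check that every extra term involves only $\nabla_z\nabla_v\log\hq$, which Assumption~\ref{ass:reference} controls, and a factor of $\nabla_v r$, so that the KL dissipation can absorb it. The kick comparison is a secondary issue, mostly bookkeeping of constants against $1/(64\Lambda^2)$.
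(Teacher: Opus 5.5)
\textbf{The gap is in the kick step.} Your analysis of the score-free leg is essentially right, up to a constant: the leg's velocity field is $Cz-2E\nabla_v\log(\cdot)$, so $\frac{\dd}{\dd\tau}\KL=-2\Fv$. Also, for the constant metric $P=P_2\otimes\Id$ one checks $CP+PC^\top-P\succeq0$, so $A\KL+\F_P$ dissipates at rate $\gtrsim\F$ once $A\gtrsim\Lambda^2$.

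The kick is where it fails. Your bound $\F_P^{\mathrm{after}}\le(1+Ch\Lambda)\F_P^{\mathrm{before}}$ costs a loss of order $h\Lambda\,\F$ per step. A leg pays back only $c\int_0^{h}\F\,\dd\tau\approx c\,h\,\F$, with $c$ an absolute constant. Position-direction dissipation comes only from the hypocoercive rotation encoded in $C$ (rate $\kappa=1/2$), and enlarging $A$ buys dissipation in the velocity block only. Both sides are linear in $h$, so their ratio is of order $\Lambda$ whatever $h$ is. The step-size condition \eqref{eq:step-size} therefore cannot make the loss smaller than the dissipation, and your telescoping closes only when $\Lambda$ is below an absolute constant. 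Your comparability remark (the endpoint value of $\F_P$ is at most $e^{C\Lambda h}$ times its leg average) is correct, but it does not change this ratio.

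\textbf{The missing idea is that the kick is anisotropic.} We have $B_h=(\beta_x,\beta_v)$ with $\beta_x=O(h^2)$ and $\beta_v=O(h)$. Write $\nabla\Psi=\Itwo+\Gamma$ with $\Gamma=B_h\nabla s_i$; then $\|\Gamma^\top g\|\le\Lambda\|B_h^\top g\|$. The first-order change of $\nabla\Psi\,\Theta\,\nabla\Psi^\top$ relative to the grid metric $\Theta$ (your $P$) is $\langle g,(\Gamma\Theta+\Theta\Gamma^\top)g\rangle=2\langle\Theta g,\Gamma^\top g\rangle$. Young's inequality bounds its absolute value by $\frac h3\|\Theta g\|^2+\frac{3\Lambda^2}{h}\|B_h^\top g\|^2$. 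This splits the loss into two parts:
\begin{itemize}
\item an isotropic loss $O(h)\,\F$ whose constant does not depend on $\Lambda$ (and is tunable through the Young weight), which hypocoercive dissipation can pay;
\item a velocity-dominated loss $\approx48h\Lambda^2\|\nabla_vr\|^2$, which the entropy pays.
\end{itemize}
This, and not only the mixed-Hessian term, is why the weight on $\KL$ must be of order $\Lambda^2$. It is exactly the budget $P_h=\frac h3\Theta^2+\frac{3\Lambda^2}{h}B_hB_h^\top$ of Lemma~\ref{lem:combined-map}.

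The paper moves this grid-time loss onto the leg with a moving metric $M_h(\tau)$ rather than a comparability estimate. $M_h$ starts at $\Theta-P_h$, so kick compatibility (M1) holds, and returns to $\Theta$ at $\tau=h$. Its $\dot M_h$ term repays the budget continuously: it enters the dissipation matrix of (M4) as $-\frac1h e^{\tau C}P_he^{\tau C^\top}$, which Lemma~\ref{lem:moving-metric} shows is affordable.

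With the refined split, your constant-metric route might be rescued. You would then also have to bound the grid value of $\Fv$ (not just of $\F_P$) by its leg average. That requires controlling the growth of $\Fv$ along the leg, which is driven by the $\langle\nabla_xr,\nabla_vr\rangle$ term.
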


\begin{remark}[Discretization error]
\label{rem:ud-discretization_error}
Notice that the right-hand side only contains initialization error. The discretization error is inside the difference between $\hq_{i,\tau}$ and $q_t$. Denote $Q_{0:T}$ the path measure of the exact reverse process and $\hQ_{0:T}$ that of the twin. The standard Girsanov analysis (Theorem 16 of \citet{chen2023sampling}) gives $\KL\left(Q_{0:T}\, \|\, \hQ_{0:T}  \right) \leq \tilde{O}(h)$ at grid times. 
\end{remark}

\section{Numerical experiments}
\label{sec:experiments}

\begin{figure}[t]
\centering
\includegraphics[width=\textwidth]{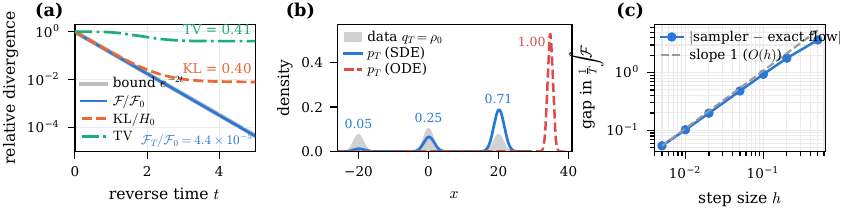}
\caption{One-dimensional Gaussian mixture with exact scores, wrong initialization $p_0=N(10,0.5^2)$. \textbf{(a)}~$\F/\F_0$ against the bound $e^{-2t}$ of Theorem~\ref{thm:overdamped-decay},
$\KL/H_0$, and $\mathrm{TV}$ along the exact reverse SDE. \textbf{(b)}~Terminal laws of the
reverse SDE and of the probability-flow ODE from the same initialization. \textbf{(c)}~Gap between the time-averaged $\F(\hp_t\,\|\,q_t)$ of the frozen-score integrator and its exact-flow counterpart $\F(p_t\,\|\,q_t)$ versus step size (log--log).}
\label{fig:gmm1d}
\end{figure}

Two experiments with \emph{exact} scores (Appendix~\ref{app:experiments}) show initialization error being forgotten exponentially fast in the first-order sense, but not the allocation of mass between well-separated modes.

\emph{First}, $\rho_0=0.3\,N(-20,1.5^2)+0.4\,N(0,1.5^2)+0.3\,N(20,1.5^2)$ is noised
by~\eqref{eq:forward-ou} up to $T=5$, and both exact reverse dynamics---the SDE and the
probability-flow ODE---are started from $p_0=N(10,0.5^2)$ instead of $q_0=\rho_T$
($H_0=49.8$, $\F_0=100$). Along the SDE (Figure~\ref{fig:gmm1d}(a)), $\F(p_t\,\|\,q_t)$
contracts at exactly the rate of Theorem~\ref{thm:overdamped-decay}, $\KL(p_t\,\|\,q_t)$
decreases monotonically but flattens once the trajectories have committed to a mode.
The terminal relative Fisher divergence $\F(p_t\,\|\, q_T) = 4.5*10^{-3}$ is small, but panel~(b) shows mode weights $(0.05, 0.25, 0.71)$ instead of $(0.3, 0.4, 0.3)$---locally indistinguishable from the
data, globally wrong. The ODE, with the same score, the same marginals and the same
initialization, forgets nothing (Proposition~\ref{prop:ode-transport}): the
terminal law is far outside the
data. Panel~(c): the time-averaged Fisher divergence of the integrator stays below $H_0/T$, with a gap to the exact flow decaying like $\tilde{O}(h)$ (Theorem~\ref{thm:overdamped-stationarity}).

\emph{Second}, with $\rho_0$ the empirical law of the $n=28000$ CelebA-HQ training images \citep{karras2018progressive} at $256\times256$, the OU marginals are kernel density estimates whose bandwidth is the noise level, so the sampler of Section~\ref{sec:overdamped-stat} runs with the \emph{exact} score ($T=5.06$, $150$ steps geometric in the noise-to-signal ratio, ending at $s=0$). Figure~\ref{fig:celeba} shows $64$ generated images for three initializations. Every image is close to a training image whatever the initialization---local consistency holds---but which images are produced depends on it: the nearly exact $N(0,\Id)$ returns a mixed draw of the training set, while the mean shifts $\mu=\pm4\,\mathbf 1/\sqrt d$ return almost exclusively bright, respectively dark, photographs. A pretrained DDPM, which includes the score estimation error and does not purely memorize, shows the same bias from the same initializations and horizon (Appendix~\ref{app:experiments-ddpm}).

\section{Conclusion}
\label{sec:discussion}

Reverse diffusions admit a first-order theory that parallels smooth optimization line by line. The relative score is the gradient; the SDE reverse flow is its descent dynamics, dissipating KL and contracting Fisher divergences exponentially under strong log-concavity of the forward stationary potential---a condition on the noising process one chooses, not on the data. Under mild assumptions, implementable integrators then achieve \emph{averaged} first-order stationarity at rate $H_0/T$ plus explicit step-size terms.

This also separates the two standard samplers. Only the SDE descends, and the ODE is a change of variables, so it forgets
nothing of its initialization.
Since the identities of Section~\ref{sec:setting} hold from any time on, the same holds for any
perturbation of the law introduced during sampling: the SDE forgets it in the first-order
sense, the ODE carries it to the end. This gives a precise form to the observation that
stochastic samplers correct earlier errors \citep{karras2022elucidating} and to the role of
Langevin corrector steps in ODE-based samplers \citep{song2021score,chen2023probability}.

However, the stationarity guarantee for the underdamped reverse process is indirect. Theorem~\ref{thm:underdamped-stationarity} compares the sampler $\hp$ with a twin chain $\hq$ that pushes the \emph{exact} initialization through the \emph{same} numerical kernels, so it isolates the initialization error. This is the price of decoupling the analysis of discretization and hypocoercivity, and Assumption~\ref{ass:reference} is also an artifact of this compromise. Even though we have $\KL\left(Q_{0:T}\, \|\, \hQ_{0:T}  \right) \leq \tilde{O}(h)$, it does not automatically transfer to a control on $\F(\hp_t\,\|\,q_t)$. Closing this gap---a first-order comparison of the kinetic sampler with the exact reverse process itself---is a natural direction for future work.

As the dictionary predicts, a first-order certificate is local: a law can have small relative Fisher divergence while misallocating mass between well-separated modes, so our guarantees certify local consistency with the target rather than recovery of mode weights---the price of convexity-free guarantees, in sampling as in nonconvex optimization. A log-Sobolev inequality for the data plays the role of the Polyak--{\L}ojasiewicz condition \citep{polyak1963gradient,karimi2016linear}: when it holds, the Fisher divergence bounds then convert into KL bounds against the reference laws: the first-order theory recovers zeroth-order convergence exactly when a PL-type inequality holds, as in optimization.

\bibliographystyle{plainnat}
\bibliography{references}

\appendix

\newcommand{\restatedhead}{}
\theoremstyle{plain}
\newtheorem*{restatedinner}{\restatedhead}
\newenvironment{restated}[3][]{\renewcommand{\restatedhead}{#2~#3}\begin{restatedinner}[#1]}{\end{restatedinner}}

\section*{Standing conventions for the appendices}

\emph{Regularity.} All probability densities appearing in the appendices are positive and smooth, with enough decay at infinity that the integrals we write are finite, that differentiation under the integral sign is justified, and that the integrations by parts we perform carry no boundary terms. This is the standard qualitative setting for Fisher-information computations \citep{vempala2019rapid,balasubramanian2022towards}.

\emph{Notation.} Universal constants $c,C>0$ are absolute and may change from line to line; $A\lesssim B$ means $A\le CB$ (in Appendix~\ref{app:ud-proofs}, $C$ is the matrix of \eqref{eq:matrices}). For a vector field $F$ we write $(\nabla F)_{ij}=\partial_{z_j}F_i$; for a scalar $\varphi$, $\nabla^2\varphi$ is the Hessian, and $\nabla_v$, $\Delta_v$ act on the velocity block. For the pair of laws under consideration we write $r=\log(\text{tracked law}/\text{reference law})$ and $g=\nabla r$, so that $\F(p\,\|\,q)=\int p\,\norm g^2$ and $\F_P(p\,\|\,q)=\int p\,\ip g{Pg}$ as in \eqref{eq:divergences}. 

\section{Proofs for Section~\texorpdfstring{\ref{sec:setting}}{2}}
\label{app:setting-proofs}

Both models of Section~\ref{sec:setting} are instances of a forward diffusion on $\R^n$ with constant diffusion matrix,
\begin{equation}
    \dd Z_s=b(Z_s)\dd s+\sqrt2\,G^{1/2}\dd W_s,
    \qquad s\in[0,T],
    \label{eq:generic-forward}
\end{equation}
where $b\in C^1(\R^n;\R^n)$ and $G$ is symmetric positive semidefinite: the overdamped model \eqref{eq:od-forward} has $n=d$, $b=-\nabla U$, $G=\Id$, and the underdamped model \eqref{eq:ud-forward} has $n=2d$, $b=b_{\mathrm{fwd}}$, $G=G_\gamma$. We write $\rho_s$ for the marginals of \eqref{eq:generic-forward} and treat both models at once.

\begin{lemma}[Current velocities of the forward and reverse flows]
\label{lem:generic-current}
The marginals of \eqref{eq:generic-forward} satisfy $\partial_s\rho_s=-\nabla\cdot(\rho_sw_s)$ with $w_s=b-G\nabla\log\rho_s$. Hence the reverse marginals $q_t:=\rho_{T-t}$ satisfy $\partial_tq_t=-\nabla\cdot(q_tv_t^q)$ with
\begin{equation}
    v_t^q=-b+G\nabla\log q_t,
    \label{eq:generic-current}
\end{equation}
and $q_t$ is the law of the reverse SDE
\begin{equation}
    \dd Y_t=\big[-b(Y_t)+2G\nabla\log q_t(Y_t)\big]\dd t+\sqrt2\,G^{1/2}\dd\bar W_t,
    \qquad Y_0\sim q_0 .
    \label{eq:generic-reverse}
\end{equation}
If $p_t$ is the law of \eqref{eq:generic-reverse} started from $p_0\ne q_0$ instead, then $\partial_tp_t=-\nabla\cdot(p_tv_t^p)$ with
\begin{equation}
    v_t^p=v_t^q-G\nabla r_t,
    \qquad r_t=\log\frac{p_t}{q_t}.
    \label{eq:generic-vp}
\end{equation}
\end{lemma}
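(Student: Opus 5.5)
The proof has three parts: a Fokker--Planck computation for the forward marginals, a time change for the reverse marginals, and a generator computation for the reverse SDE. Only the last step, identifying the law of \eqref{eq:generic-reverse} with $q_t$, needs a uniqueness argument.

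\emph{Forward current.} For \eqref{eq:generic-forward} with constant diffusion matrix $\tfrac12(\sqrt2G^{1/2})(\sqrt2G^{1/2})^\top=G$, the Fokker--Planck equation reads
\[
\partial_s\rho_s=-\nabla\cdot(\rho_sb)+\nabla\cdot(G\nabla\rho_s).
\]
Writing $G\nabla\rho_s=\rho_sG\nabla\log\rho_s$ and using positivity of $\rho_s$ gives $\partial_s\rho_s=-\nabla\cdot(\rho_s(b-G\nabla\log\rho_s))$, which is the claimed $w_s$.

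\emph{Reverse current.} Set $q_t=\rho_{T-t}$. Then $\partial_tq_t=-\partial_s\rho_s|_{s=T-t}=\nabla\cdot(q_tw_{T-t})$, so $v_t^q=-w_{T-t}=-b+G\nabla\log q_t$. This is \eqref{eq:generic-current}.

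\emph{Law of the reverse SDE, and the mismatched start.} Let $p_t$ be the law of \eqref{eq:generic-reverse} from an arbitrary $p_0$. Its drift $-b+2G\nabla\log q_t$ is a fixed (non-law-dependent) field, so its Fokker--Planck equation is
\[
\partial_tp_t=-\nabla\cdot\big(p_t(-b+2G\nabla\log q_t)\big)+\nabla\cdot(G\nabla p_t)
=-\nabla\cdot\big(p_t[-b+2G\nabla\log q_t-G\nabla\log p_t]\big).
\]
Rewrite the bracket as $(-b+G\nabla\log q_t)-G(\nabla\log p_t-\nabla\log q_t)=v_t^q-G\nabla r_t$, which is \eqref{eq:generic-vp}.

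Now take $p_0=q_0$. Then $q_t$ itself solves the same linear Fokker--Planck equation, because with $r_t\equiv0$ the equation collapses to $\partial_tq_t=-\nabla\cdot(q_tv_t^q)$, which $q_t$ satisfies by the previous step. Uniqueness of solutions to this linear Fokker--Planck equation in the class of smooth positive densities with decay (the standing regularity conventions) then identifies the law of \eqref{eq:generic-reverse} from $q_0$ with $q_t$. This recovers the Anderson/Haussmann--Pardoux time reversal \citep{anderson1982reverse,haussmann1986time} in this setting.

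The only delicate point is this uniqueness/well-posedness step, since $\nabla\log q_t$ may be unbounded. I would not reprove it. Instead I would invoke the standing regularity conventions, or cite \citet{haussmann1986time}, whose conditions cover constant, possibly degenerate $G$. The degenerate underdamped case $G=G_\gamma$ causes no trouble in the computation above, because every step only uses $G$ symmetric and constant.
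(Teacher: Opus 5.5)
Your proposal is correct and follows the paper's proof essentially line by line. Both arguments rewrite the forward Fokker--Planck equation as a continuity equation, reverse time by $s=T-t$, and write the reverse SDE's Fokker--Planck equation with bracket $v_t^q-G\nabla r_t$; both then identify the law started from $q_0$ with $q_t$ via uniqueness, deferred to the standing regularity conventions and the Anderson/Haussmann--Pardoux theorem.
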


\begin{proof}
The Fokker--Planck equation of \eqref{eq:generic-forward} is
\begin{align*}
    \partial_s\rho_s & = -\nabla\cdot(\rho_sb)+ \Delta(G\rho_s)\\
    &=- \nabla\cdot(\rho_sb)+\nabla\cdot(G\nabla\rho_s)\\
    &= - \nabla\cdot (\rho_s(b - G\nabla \log\rho_s))
\end{align*}

Substituting $s=T-t$ reverses the sign of the velocity, which gives \eqref{eq:generic-current}. The Fokker--Planck equation of \eqref{eq:generic-reverse}, for the law $\mu_t$ of a solution, is
\[
    \partial_t\mu_t
    =-\nabla\cdot\big(\mu_t[-b+2G\nabla\log q_t]\big)+\nabla\cdot(G\nabla\mu_t)
    =-\nabla\cdot\big(\mu_t[-b+2G\nabla\log q_t-G\nabla\log\mu_t]\big).
\]
For $\mu_t=q_t$ the velocity field in the bracket equals $v_t^q$, so $q_t$ solves this equation, and by uniqueness for the Fokker--Planck equation the SDE \eqref{eq:generic-reverse} started from $q_0$ has marginals $q_t$: this is the time-reversal theorem of \citet{anderson1982reverse,haussmann1986time}. For $\mu_t=p_t$ the bracket equals $v_t^q-G\nabla\log(p_t/q_t)$, which is \eqref{eq:generic-vp}.
\end{proof}

With $G=\Id$ and $b=-\nabla U$ one has $-b+G\nabla\log q_t=\nabla\log(q_t/p^*)$, and \eqref{eq:generic-current}--\eqref{eq:generic-vp} are \eqref{eq:od-current} and \eqref{eq:od-vq}. For the underdamped model, the next lemma records the form used in Section~\ref{sec:setting}.

\begin{lemma}[Current velocity, underdamped]
\label{lem:fwd-current}
Let $\rho_s$ be the density of \eqref{eq:ud-forward}. Then $\partial_s\rho_s=-\nabla\cdot(\rho_sw_s)$ with
\[
    w_s=-J\nabla\log p^*-G_\gamma\nabla\log\frac{\rho_s}{p^*} .
\]
Consequently $q_t=\rho_{T-t}$ has the current velocity \eqref{eq:ud-current}, and a mismatched initialization has the current velocity \eqref{eq:ud-vq}.
\end{lemma}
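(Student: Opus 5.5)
The plan is to specialize Lemma~\ref{lem:generic-current} to $b=b_{\mathrm{fwd}}$, $G=G_\gamma$, and then rewrite $b_{\mathrm{fwd}}$ in terms of $\nabla\log p^*$ using the decomposition stated in Section~\ref{sec:setting}, namely $b_{\mathrm{fwd}}=-(J-G_\gamma)\nabla\log p^*$. Lemma~\ref{lem:generic-current} already gives $\partial_s\rho_s=-\nabla\cdot(\rho_sw_s)$ with $w_s=b_{\mathrm{fwd}}-G_\gamma\nabla\log\rho_s$. Substituting the decomposition yields
\[
w_s=-J\nabla\log p^*+G_\gamma\nabla\log p^*-G_\gamma\nabla\log\rho_s=-J\nabla\log p^*-G_\gamma\nabla\log\frac{\rho_s}{p^*},
\]
which is the claimed formula.

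The one thing to check is the identity $b_{\mathrm{fwd}}=-(J-G_\gamma)\nabla\log p^*$. This is a direct block computation. We have $\nabla\log p^*=(-\nabla U(x),-\xi v)$. Then $J\nabla\log p^*=(-\xi v,\nabla U(x))$ and $G_\gamma\nabla\log p^*=(0,-\gamma\xi v)$. Hence
\[
-(J-G_\gamma)\nabla\log p^*=(\xi v,\,-\nabla U(x)-\gamma\xi v),
\]
which is exactly the drift of \eqref{eq:ud-forward}. The diffusion coefficient also matches: $\sqrt2\,G_\gamma^{1/2}\dd W=(0,\sqrt{2\gamma}\,\dd W)$.

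For the consequences, reversing time flips the sign of the velocity, so $v_t^q=-w_{T-t}=J\nabla\log p^*+G_\gamma\nabla\log(q_t/p^*)$. This is \eqref{eq:ud-current}, and it agrees with \eqref{eq:generic-current}. The mismatched velocity \eqref{eq:ud-vq} is \eqref{eq:generic-vp} with $G=G_\gamma$. The reverse SDE drift $-b+2G_\gamma\nabla\log q_t$ reduces to \eqref{eq:ud-reverse} because $G_\gamma\nabla\log q_t=(0,\gamma\nabla_v\log q_t)$.

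There is no real obstacle here; the only care needed is sign bookkeeping with $J$. As a sanity check, the term $J\nabla\log p^*$ is divergence-free against $p^*$, since $\nabla\cdot(p^*J\nabla\log p^*)=\nabla\cdot(J\nabla p^*)=0$ by antisymmetry of $J$. This confirms that $p^*$ is stationary, as it should be.
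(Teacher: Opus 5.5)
Your proposal is correct and matches the paper's proof essentially step for step: verify $b_{\mathrm{fwd}}=-(J-G_\gamma)\nabla\log p^*$ and $\sqrt{2}\,G_\gamma^{1/2}\dd W=(0,\sqrt{2\gamma}\,\dd W)$, apply Lemma~\ref{lem:generic-current} with $G=G_\gamma$, and reverse time to get $v_t^q=-w_{T-t}$ and \eqref{eq:ud-vq}. Your sign bookkeeping with $J$ checks out. The stationarity check via antisymmetry of $J$ is a harmless addition that the paper does not include.
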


\begin{proof}
Since $\nabla\log p^*(z)=(-\nabla U(x),-\xi v)$, the drift of \eqref{eq:ud-forward} is $b_{\mathrm{fwd}}=(\xi v,-\nabla U(x)-\gamma\xi v)=-(J-G_\gamma)\nabla\log p^*$, and the noise $\sqrt{2\gamma}\,\dd W_s$ on the velocity block is $\sqrt2\,G_\gamma^{1/2}\dd W_s$. So \eqref{eq:ud-forward} is \eqref{eq:generic-forward} with $b=b_{\mathrm{fwd}}$ and $G=G_\gamma$, and by Lemma~\ref{lem:generic-current}, we have
\[
    w_s=b_{\mathrm{fwd}}-G_\gamma\nabla\log\rho_s
    =-J\nabla\log p^*+G_\gamma\nabla\log p^*-G_\gamma\nabla\log\rho_s
    =-J\nabla\log p^*-G_\gamma\nabla\log\frac{\rho_s}{p^*}.
\]
Then $v_t^q=-w_{T-t}$ is \eqref{eq:ud-current}, and \eqref{eq:generic-vp} with $G=G_\gamma$ is \eqref{eq:ud-vq}.
\end{proof}

\begin{restated}[KL derivative for two continuity equations]{Lemma}{\ref{lem:kl-identity}}
Suppose $\partial_tq_t=-\nabla\cdot(q_tv_t^q)$ and $\partial_tp_t=-\nabla\cdot(p_tv_t^p)$, and let $r_t=\log(p_t/q_t)$.
\begin{equation*}
    \frac{\dd}{\dd t}\KL(p_t\,\|\,q_t)=\int p_t\,\nabla r_t\cdot(v_t^p-v_t^q)
\end{equation*}
\end{restated}

\begin{proof}
By integration by parts and $\int\partial_tp_t=0$, 
\begin{align*}
    \frac{\dd}{\dd t}\int p_t\log\frac{p_t}{q_t}
    &=\int\partial_tp_t\,r_t-\int\frac{p_t}{q_t}\,\partial_tq_t \\
    &= -\int\nabla\cdot(p_tv_t^p)r_t + \int\nabla\cdot(q_tv_t^q) \frac{p_t}{q_t}\\
    & = \int p_t\,v_t^p\cdot\nabla r_t-\int q_tv_t^q\cdot\nabla(p_t/q_t)\\
    & = \int p_t\,v_t^p\cdot\nabla r_t-\int p_t\,v_t^q\cdot\nabla r_t 
\end{align*}
\end{proof}

\begin{restated}[KL dissipation along the exact reverse flows]{Corollary}{\ref{cor:kl-dissipation}}
Along the common exact reverse dynamics,
\[
    \frac{\dd}{\dd t}\KL(p_t\|q_t)=-\F(p_t\|q_t)\ \ \text{(overdamped)},
    \qquad
    \frac{\dd}{\dd t}\KL(p_t\|q_t)=-\gamma\,\Fv(p_t\|q_t)\ \ \text{(underdamped)}.
\]
In particular $\KL(p_t\,\|\,q_t)$ is always nonincreasing in both models.
\end{restated}

\begin{proof}
By \eqref{eq:generic-vp}, $v_t^p-v_t^q=-G\nabla r_t$, so Lemma~\ref{lem:kl-identity} gives $\frac{\dd}{\dd t}\KL(p_t\,\|\,q_t)=-\int p_t\,\nabla r_t^\top G\nabla r_t$. For $G=\Id$ this is $-\F(p_t\,\|\,q_t)$; for $G=G_\gamma$, which acts as $\gamma$ on the velocity block, it is $-\gamma\int p_t\|\nabla_vr_t\|^2=-\gamma\,\Fv(p_t\,\|\,q_t)$.
\end{proof}

\begin{restated}[No forgetting along the probability-flow ODE]{Proposition}{\ref{prop:ode-transport}}
Let $\Phi_t$ be the flow map of the probability-flow ODE of either model, and let $p_t=(\Phi_t)_\#p_0$ for an arbitrary initial law $p_0$, and $q_t=(\Phi_t)_\#q_0$ are the exact reverse marginals. Then $r_t=r_0\circ\Phi_t^{-1}$; hence for all $t\in[0,T]$
\[
    \KL(p_t\,\|\,q_t)=\KL(p_0\,\|\,q_0),\qquad \mathrm{TV}(p_t,q_t)=\mathrm{TV}(p_0,q_0),
\]
and likewise for every $f$-divergence, while the relative score is merely carried along and rescaled by the Jacobian of the flow,
\[
    \nabla r_t\big(\Phi_t(x)\big)=\big(\nabla\Phi_t(x)\big)^{-\top}\nabla r_0(x),
    \qquad
    \F(p_t\,\|\,q_t)=\int p_0\,\big\|(\nabla\Phi_t)^{-\top}\nabla r_0\big\|^2 .
\]
\end{restated}

\begin{proof}
\eqref{eq:twist-pullback} with $M=I$ gives the formula for $\F$. For a general $f$-divergence the same change of variables, $(p_t\circ\Phi_t)\,|\det\nabla\Phi_t|=p_0$ together with $r_t\circ\Phi_t=r_0$, yields
\[
    \int p_t\,f\Big(\frac{p_t}{q_t}\Big)
    =\int(p_t\circ\Phi_t)\,|\det\nabla\Phi_t|\,f\big(e^{r_0}\big)
    =\int p_0\,f\Big(\frac{p_0}{q_0}\Big),
\]
which for $f(u)=\log u$ is the KL statement, and for $f(u)=\frac12|\frac{1}{u}-1|$ is the TV statement. 
\end{proof}

\begin{restated}[Averaged stationarity of the exact reverse flows]{Proposition}{\ref{prop:integrated-dissipation}}
For every potential $U$, every data distribution, and every pair of initializations $(p_0,q_0)$, with $H_0=\KL(p_0\,\|\,q_0)$,
\begin{align*}
    \frac1T\int_0^T\F(p_t\,\|\,q_t)\dd t&=\frac{H_0-\KL(p_T\,\|\,q_T)}T\le\frac{H_0}T
    &&\text{(overdamped)},\\
    \frac\gamma T\int_0^T\Fv(p_t\,\|\,q_t)\dd t&=\frac{H_0-\KL(p_T\,\|\,q_T)}T\le\frac{H_0}T
    &&\text{(underdamped)}.
\end{align*}
\end{restated}

\begin{proof}
Integrate the identities of Corollary~\ref{cor:kl-dissipation} over $[0,T]$ (under the standing conventions $t\mapsto\KL(p_t\,\|\,q_t)$ is absolutely continuous) and use $\KL(p_T\,\|\,q_T)\ge0$.
\end{proof}

\section{Proofs for Section~\texorpdfstring{\ref{sec:decay}}{3}: exponential contraction}
\label{app:decay-proofs}

Throughout this appendix $q_t$ is the exact reverse marginal, $p_t$ solves the same exact reverse SDE from an arbitrary initial law $p_0$, and, as in Section~\ref{sec:decay}, $r_t=\log(p_t/q_t)$, $g_t=\nabla r_t$, $S_t=\nabla^2r_t$. Both contraction theorems come from one computation in the generic setting of Appendix~\ref{app:setting-proofs}: a transport equation for $g_t$ (Lemma~\ref{lem:master-transport}) and the dissipation identity it implies for quadratic forms of $g_t$ (Proposition~\ref{prop:master-dissipation}). 

\subsection{Transport of the relative score and the dissipation identity}
\label{app:master-identity}

In the setting of Lemma~\ref{lem:generic-current}, define the reference generator
\begin{equation}
    L_tf:=\frac1{q_t}\nabla\cdot(q_tG\nabla f)
    =G:\nabla^2f+a_t\cdot\nabla f,
    \qquad
    a_t:=G\nabla\log q_t ,
    \label{eq:generator-def}
\end{equation}
where $G:\nabla^2f=\operatorname{tr}(G\nabla^2f)$. It satisfies $\int q_t\,L_t\varphi=0$ by the divergence theorem and the regularity assumption; when it acts on a vector, it applies componentwise. It also follows the product rule
\begin{equation}
    L_t(\varphi\psi)=\varphi L_t\psi+\psi L_t\varphi+2\nabla\varphi^\top G\nabla\psi.
    \label{eq:product-rule}
\end{equation}
Indeed, since $\nabla(\varphi\psi)=\psi\nabla\varphi+\varphi\nabla\psi$ and
$\nabla\cdot(\psi V)=\psi\,\nabla\cdot V+\nabla\psi\cdot V$
for any vector field $V$,
\begin{align*}
    q_t\,L_t(\varphi\psi)&=\nabla\cdot\big(\psi\,q_tG\nabla\varphi\big)+\nabla\cdot\big(\varphi\,q_tG\nabla\psi\big)\\
    &=\psi\,\nabla\cdot(q_tG\nabla\varphi)+\varphi\,\nabla\cdot(q_tG\nabla\psi)
     +q_t\big(\nabla\psi^\top G\nabla\varphi+\nabla\varphi^\top G\nabla\psi\big),
\end{align*}

and the last bracket equals $2\nabla\varphi^\top G\nabla\psi$ because $G$ is symmetric;
dividing by $q_t$ yields \eqref{eq:product-rule}.

\begin{lemma}[Transport of the relative density and the relative score]
\label{lem:master-transport}
In the setting of Lemma~\ref{lem:generic-current},
\begin{align}
    (\partial_t+v_t^q\cdot\nabla)r_t&=L_tr_t+g_t^\top Gg_t,
    \label{eq:master-rt}\\
    (\partial_t+v_t^q\cdot\nabla)g_t
    &=L_tg_t+(\nabla b)^\top g_t+2S_tGg_t .
    \label{eq:master-gt}
\end{align}
\end{lemma}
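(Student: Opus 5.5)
Both identities will come from writing the two continuity equations of Lemma~\ref{lem:generic-current} as evolution equations for $\log p_t$ and $\log q_t$ and subtracting. The main tool is the identity $\frac1\mu\nabla\cdot(\mu V)=\nabla\cdot V+V\cdot\nabla\log\mu$ for any positive density $\mu$ and vector field $V$.

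For \eqref{eq:master-rt} we start from $\partial_t\log q_t=-\nabla\cdot v_t^q-v_t^q\cdot\nabla\log q_t$. Similarly, since $v_t^p=v_t^q-Gg_t$,
\[
    \partial_t\log p_t=-\nabla\cdot v_t^q+\nabla\cdot(Gg_t)-(v_t^q-Gg_t)\cdot\nabla\log p_t .
\]
Subtracting, the divergences $\nabla\cdot v_t^q$ cancel. Writing $\nabla\log p_t=\nabla\log q_t+g_t$, we obtain
\[
    \partial_tr_t+v_t^q\cdot g_t
    =\nabla\cdot(Gg_t)+Gg_t\cdot\nabla\log q_t+g_t^\top Gg_t .
\]
Because $G$ is constant and symmetric, $\nabla\cdot(Gg_t)=G:\nabla^2r_t$ and $Gg_t\cdot\nabla\log q_t=a_t\cdot\nabla r_t$. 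Together these give $L_tr_t$ by \eqref{eq:generator-def}, which proves \eqref{eq:master-rt}. Nothing about the specific form of $v_t^q$ is used at this stage.

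For \eqref{eq:master-gt} we take the gradient of \eqref{eq:master-rt}, component by component. On the left, $\nabla(v_t^q\cdot\nabla r_t)=(\nabla v_t^q)^\top g_t+S_tv_t^q$, so the left side becomes $(\partial_t+v_t^q\cdot\nabla)g_t+(\nabla v_t^q)^\top g_t$. On the right, $\nabla(g_t^\top Gg_t)=2S_tGg_t$. For the generator term, $\partial_j(L_tr_t)=L_t(\partial_jr_t)+(\partial_ja_t)\cdot g_t$, because $G$ is constant; hence $\nabla L_tr_t=L_tg_t+(\nabla a_t)^\top g_t$. Putting these together,
\[
    (\partial_t+v_t^q\cdot\nabla)g_t=L_tg_t+2S_tGg_t+\big(\nabla a_t-\nabla v_t^q\big)^\top g_t .
\]
Finally, \eqref{eq:generic-current} gives $v_t^q=-b+a_t$, so $\nabla a_t-\nabla v_t^q=\nabla b$. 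This turns the last term into $(\nabla b)^\top g_t$, which is \eqref{eq:master-gt}.

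The only delicate step is the bookkeeping of transposes. One needs the index convention $(\nabla F)_{ij}=\partial_jF_i$, which gives $\partial_j(V\cdot g)=\sum_i(\partial_jV_i)g_i+V\cdot\partial_jg$, i.e. $(\nabla V)^\top g$. One also needs to check that the Hessian terms $S_tV$ coming from $v_t^q$ and from $a_t$ are exactly absorbed, respectively, into the transport derivative $v_t^q\cdot\nabla g_t$ and into $L_tg_t$. The smoothness needed to justify these manipulations is granted by the standing conventions.
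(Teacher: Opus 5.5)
Your proof is correct. For \eqref{eq:master-gt} it is exactly the paper's argument: differentiate \eqref{eq:master-rt}, absorb the Hessian terms $S_tv_t^q$ and $S_ta_t$ into the transport derivative and into $L_tg_t$, and use $a_t-v_t^q=b$.

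For \eqref{eq:master-rt} you take a slightly more direct route. You subtract the two log-form continuity equations, using only $v_t^p-v_t^q=-Gg_t$. The paper instead first rewrites the equation for $p_t$ as the linear transport equation $(\partial_t+v_t^q\cdot\nabla)h_t=L_th_t$ for the ratio $h_t=p_t/q_t$. It then passes to $r_t=\log h_t$ via $L_te^{r}=e^{r}(L_tr+\nabla r^\top G\nabla r)$.

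The two computations are equivalent, and yours skips the chain-rule step for $L_te^r$. What the paper's detour buys is the intermediate identity \eqref{eq:h-transport} itself. It is reused in the proof of Proposition~\ref{prop:master-dissipation} and in Lemma~\ref{lem:semigroup-representation}. On your route you would recover it afterwards by multiplying \eqref{eq:master-rt} by $e^{r_t}$ and running that chain-rule identity backwards.
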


\begin{proof}
Let $h_t=p_t/q_t=e^{r_t}$. Since $p_t\nabla r_t=\nabla p_t-p_t\nabla\log q_t=q_t\nabla h_t$, \eqref{eq:generic-vp} gives $p_tv_t^p=h_tq_tv_t^q-q_tG\nabla h_t$, so the continuity equation of $p_t$ can be written as
\[
    \partial_t(h_tq_t)=-\nabla\cdot(h_tq_tv_t^q)+\nabla\cdot(q_tG\nabla h_t).
\]
Expanding the left side and the first term on the right, and canceling $h_t\partial_tq_t=-h_t\nabla\cdot(q_tv_t^q)$, leaves $q_t\partial_th_t=-q_tv_t^q\cdot\nabla h_t+\nabla\cdot(q_tG\nabla h_t)$, that is,
\begin{equation}
    (\partial_t+v_t^q\cdot\nabla)h_t=L_th_t .
    \label{eq:h-transport}
\end{equation}

Now $h_t=e^{r_t}$. Since $\nabla e^{r}=e^{r}\nabla r$ and
$\nabla^2e^{r}=e^{r}\big(\nabla^2r+\nabla r\,\nabla r^\top\big)$, we have
\begin{align*}
    L_te^{r}
    &=G:\nabla^2e^{r}+a_t\cdot\nabla e^{r}\\
    &=e^{r}\big(G:\nabla^2r+\operatorname{tr}(G\,\nabla r\,\nabla r^\top)\big)+e^{r}\,a_t\cdot\nabla r\\
    &=e^{r}\big(L_tr+\nabla r^\top G\nabla r\big),
\end{align*}
where $\operatorname{tr}(G\,\nabla r\,\nabla r^\top)=\nabla r^\top G\nabla r$. On the left side of
\eqref{eq:h-transport}, $(\partial_t+v_t^q\cdot\nabla)e^{r_t}=e^{r_t}(\partial_t+v_t^q\cdot\nabla)r_t$
by the chain rule, so dividing \eqref{eq:h-transport} by $h_t=e^{r_t}$ yields \eqref{eq:master-rt}.

For \eqref{eq:master-gt} take the gradient of \eqref{eq:master-rt}. On the left of \eqref{eq:master-rt}, since
$\nabla$ and $\partial_t$ commute,
\begin{align*}
    \nabla\big[(\partial_t+v_t^q\cdot\nabla)r_t\big]
    &=\partial_t\nabla r_t+(\nabla v_t^q)^\top\nabla r_t+S_tv_t^q\\
    &=\partial_tg_t+v_t^q\cdot\nabla g_t+(\nabla v_t^q)^\top g_t\\
    &=(\partial_t+v_t^q\cdot\nabla)g_t+(\nabla v_t^q)^\top g_t .
\end{align*}
On the right of \eqref{eq:master-rt}, the second-order part $G:\nabla^2$ of $L_t$ has constant
coefficients, so it commutes with $\nabla$ and acts on $g_t$ componentwise; and $\nabla(g_t^\top Gg_t)=2S_tGg_t$, because
$\partial_j\sum_{i,k}(g_t)_iG_{ik}(g_t)_k=\sum_{i,k}\big[(S_t)_{ji}G_{ik}(g_t)_k+(g_t)_iG_{ik}(S_t)_{jk}\big]
=2(S_tGg_t)_j$ for symmetric $S_t$ and $G$. Hence
\begin{align*}
    \nabla\big[L_tr_t+g_t^\top Gg_t\big]
    &=\nabla\big(G:\nabla^2r_t\big)+\nabla(a_t\cdot\nabla r_t)+\nabla\big(g_t^\top Gg_t\big)\\
    &=G:\nabla^2g_t+\big[(\nabla a_t)^\top g_t+a_t\cdot\nabla g_t\big]+2S_tGg_t\\
    &=L_tg_t+(\nabla a_t)^\top g_t+2S_tGg_t .
\end{align*}
Equating the two expansions and moving $(\nabla v_t^q)^\top g_t$ to the right,
\[
    (\partial_t+v_t^q\cdot\nabla)g_t=L_tg_t+\big(\nabla(a_t-v_t^q)\big)^\top g_t+2S_tGg_t,
\]
and by \eqref{eq:generic-current}, $a_t-v_t^q=G\nabla\log q_t-(-b+G\nabla\log q_t)=b$, which is
\eqref{eq:master-gt}.
\end{proof}

The last line is the structural point of Section~\ref{sec:decay}: the coefficient $(\nabla b)^\top$ of the zeroth-order term in \eqref{eq:master-gt} does not depend on $q_t$, hence not on the data.

\begin{proposition}[Dissipation identity for quadratic forms of the relative score]
\label{prop:master-dissipation}
In the setting of Lemma~\ref{lem:generic-current}, for every constant symmetric matrix $P$,
\begin{equation}
    \frac{\dd}{\dd t}\int p_t\,g_t^\top Pg_t
    =\int p_t\,g_t^\top\big(\nabla b\,P+P\,(\nabla b)^\top\big)g_t
    -2\int p_t\operatorname{tr}\!\big(S_tGS_tP\big).
    \label{eq:master-identity}
\end{equation}
\end{proposition}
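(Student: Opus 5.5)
The plan is to differentiate $\int p_t\,g_t^\top Pg_t$ by writing it as a transport equation along the reference velocity $v_t^q$, and then use the transport equation \eqref{eq:master-gt} of Lemma~\ref{lem:master-transport}.

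\textbf{Step 1: reduce to a transport derivative.} For any smooth function $\phi_t$, the continuity equation $\partial_tp_t=-\nabla\cdot(p_tv_t^p)$ together with \eqref{eq:generic-vp} and integration by parts gives
\[
\frac{\dd}{\dd t}\int p_t\phi_t=\int p_t(\partial_t+v_t^q\cdot\nabla)\phi_t-\int p_t\,\nabla\phi_t^\top Gg_t .
\]
Since $p_t\,Gg_t=q_tG\nabla h_t$ with $h_t=e^{r_t}$, the last term equals $-\int q_t\nabla\phi_t^\top G\nabla h_t=\int h_tq_t\,L_t\phi_t=\int p_tL_t\phi_t$. Hence
\[
\frac{\dd}{\dd t}\int p_t\phi_t=\int p_t\big[(\partial_t+v_t^q\cdot\nabla)\phi_t-L_t\phi_t\big]-2\int p_t\,\nabla\phi_t^\top Gg_t .
\]
To see this, write $\int p_tL_t\phi=\int q_th_tL_t\phi=-\int q_t\nabla h_t^\top G\nabla\phi$, which is the negative of the term just obtained. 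I will recheck the sign bookkeeping carefully, because the final coefficient $-2$ on the Hessian term depends on it. Equivalently, from the first identity,
\[
\frac{\dd}{\dd t}\int p_t\phi_t=\int p_t\big[(\partial_t+v_t^q\cdot\nabla)\phi_t-L_t\phi_t\big]-2\int p_t\nabla\phi^\top Gg_t+\int p_tL_t\phi_t,
\]
where the first and last $L_t$ terms cancel. I will use whichever form makes the cancellation below cleanest.

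\textbf{Step 2: apply it with $\phi_t=g_t^\top Pg_t$.} The product rule \eqref{eq:product-rule}, applied componentwise, gives
\[
L_t(g^\top Pg)=2g^\top PL_tg+2\operatorname{tr}(SGSP),
\]
because $\sum_{ik}P_{ik}\nabla g_i^\top G\nabla g_k=\operatorname{tr}(SGSP)$ with $\nabla g=S$. The transport derivative is $2g^\top P(\partial_t+v^q\cdot\nabla)g$. Substituting \eqref{eq:master-gt}, this becomes
\[
2g^\top P\big[L_tg+(\nabla b)^\top g+2SGg\big].
\]
Also $\nabla\phi=2SPg$, so $\nabla\phi^\top Gg=2g^\top PSGg$. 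The cubic terms $4g^\top PSGg$ then cancel against $-2\cdot2g^\top PSGg$. What remains is
\[
2g^\top P(\nabla b)^\top g-2\operatorname{tr}(SGSP),
\]
and symmetrizing the first term gives $g^\top(\nabla b\,P+P(\nabla b)^\top)g$. This is exactly \eqref{eq:master-identity}.

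\textbf{Main obstacle.} The main difficulty is the sign and bookkeeping in Step 1. I must get the relative-velocity correction $-Gg_t$ and the $L_t$ terms to combine into a net $-2\nabla\phi^\top Gg$ and a net $-L_t\phi$, so that the cubic terms cancel and the Hessian term comes out as $-2\operatorname{tr}(SGSP)$. I will first verify this in the scalar case $P=I$, $G=I$, $b=-\nabla U$, where it must reproduce \eqref{eq:od-dissipation}.
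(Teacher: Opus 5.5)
Your argument is correct and is essentially the paper's. Your ``Hence'' identity $\frac{\dd}{\dd t}\int p_t\phi_t=\int p_t\big[(\partial_t+v_t^q\cdot\nabla)\phi_t-L_t\phi_t\big]-2\int p_t\nabla\phi_t^\top Gg_t$ is exactly the intermediate formula the paper obtains, and your Step~2 is the same algebra; the only difference is how you reach that formula. The paper gets there via $\int p_tF_t=\int q_th_tF_t$, the $h_t$-transport equation and $\int q_tL_t(h_tF_t)=0$, while you use the continuity equation of $p_t$ with $v_t^p=v_t^q-Gg_t$ and one integration by parts.

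Delete the ``Equivalently'' display, though, because it is false. Once its two $L_t$ terms cancel it reads $\int p_t(\partial_t+v_t^q\cdot\nabla)\phi_t-2\int p_t\nabla\phi_t^\top Gg_t$, which doubles the correct correction $-\int p_t\nabla\phi_t^\top Gg_t$. Used in Step~2, it would leave $2g_t^\top PL_tg_t$ behind and never produce $-2\operatorname{tr}(S_tGS_tP)$. Your Step~2 in fact uses the ``Hence'' form, where $-L_t\phi_t$ cancels $2g_t^\top PL_tg_t$ and supplies the trace term, so the final result is right.
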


\begin{proof}
Write $F_t=g_t^\top Pg_t$ and $h_t=p_t/q_t$, so that $\int p_tF_t=\int q_t\,h_tF_t$. Since $q_t$ is transported by $v_t^q$, integrating its continuity equation by parts gives $\frac{\dd}{\dd t}\int q_t\varphi_t=\int q_t(\partial_t+v_t^q\cdot\nabla)\varphi_t$ for every time-dependent function $\varphi_t$. We apply this with $\varphi_t=h_tF_t$. Writing $D:=\partial_t+v_t^q\cdot\nabla$ (acting componentwise on
vector fields), the product rule, \eqref{eq:h-transport} and the chain rule for the quadratic form
$F_t=g_t^\top Pg_t$ give
\begin{align*}
    D(h_tF_t)
    &=F_t\,Dh_t+h_t\,DF_t\\
    &=F_t\,L_th_t+2h_t\,g_t^\top P\,Dg_t ,
\end{align*}
using that $P$ is symmetric. Multiplying by $q_t$, integrating, and using $q_th_t=p_t$ in the second term,

\begin{equation}
    \frac{\dd}{\dd t}\int p_tF_t
    =\int q_t\,(L_th_t)F_t
    +2\int p_t\,g_t^\top P\,(\partial_t+v_t^q\cdot\nabla)g_t .
    \label{eq:two-pieces}
\end{equation}
For the first term, the product rule \eqref{eq:product-rule} and $\int q_tL_t(h_tF_t)=0$ give
\[
    \int q_t\,(L_th_t)F_t
    =-\int q_t\,h_tL_tF_t-2\int q_t\,\nabla h_t^\top G\nabla F_t
    =-\int p_t\,L_tF_t-2\int p_t\,g_t^\top G\nabla F_t ,
\]
using $q_t\nabla h_t=p_tg_t$. Two derivatives of
$F_t=\sum_{i,k}(g_t)_iP_{ik}(g_t)_k$ are needed. First, $\nabla F_t=2S_tPg_t$.
Second, the product rule \eqref{eq:product-rule} applied to $(g_t)_i(g_t)_j$, with
$\nabla(g_t)_i=\nabla\partial_ir_t=S_te_i$ and hence $\nabla(g_t)_i^\top G\nabla(g_t)_j=(S_tGS_t)_{ij}$, yields
\begin{align*}
    L_tF_t
    &=\sum_{i,j}P_{ij}\Big[(g_t)_i\,L_t(g_t)_j+(g_t)_j\,L_t(g_t)_i+2\,(S_tGS_t)_{ij}\Big]\\
    &=2\,g_t^\top P\,L_tg_t+2\operatorname{tr}(S_tGS_tP),
\end{align*}
Inserting both into the previous display,
\[
    \int q_t\,(L_th_t)F_t
    =-2\int p_t\,g_t^\top PL_tg_t
    -2\int p_t\operatorname{tr}(S_tGS_tP)
    -4\int p_t\,g_t^\top GS_tPg_t .
\]
For the second term of \eqref{eq:two-pieces}, \eqref{eq:master-gt} shows
\[
    2\int p_t\,g_t^\top P(\partial_t+v_t^q\cdot\nabla)g_t
    =2\int p_t\,g_t^\top PL_tg_t
    +2\int p_t\,g_t^\top P(\nabla b)^\top g_t
    +4\int p_t\,g_t^\top PS_tGg_t .
\]
Adding the two, the terms in $L_tg_t$ cancel, and so do the two mixed terms, because $g^\top PSGg=g^\top GSPg$ for symmetric $P,S,G$. What remains is \eqref{eq:master-identity}, after writing $2g^\top P(\nabla b)^\top g=g^\top(\nabla b\,P+P(\nabla b)^\top)g$.
\end{proof}

\subsection{Overdamped}

\begin{proposition}[Dissipation identity for the ordinary Fisher divergence]
\label{prop:od-identity}
In the overdamped model, $\F(t):=\F(p_t\,\|\,q_t)$ satisfies \eqref{eq:od-dissipation}:
\[
    \F'(t)
    =-2\int p_t\,\norm{S_t}_{\mathrm F}^2
    -2\int p_t\,g_t^\top\nabla^2U\,g_t.
\]
\end{proposition}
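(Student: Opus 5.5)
This is a direct specialization of Proposition~\ref{prop:master-dissipation}, so the plan is to substitute the overdamped data into the general dissipation identity \eqref{eq:master-identity}. The overdamped model is the generic forward diffusion \eqref{eq:generic-forward} with $n=d$, $b=-\nabla U$ and $G=\Id$, as recorded after Lemma~\ref{lem:generic-current}. Its reverse marginals $q_t$ and the mismatched law $p_t$ therefore fall exactly in the setting of Proposition~\ref{prop:master-dissipation}.

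We choose the constant symmetric matrix $P=\Id$, so that $\int p_t\,g_t^\top Pg_t=\F(p_t\,\|\,q_t)=\F(t)$. Then we compute the two terms on the right of \eqref{eq:master-identity}.

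\emph{The zeroth-order term.} Here $\nabla b=-\nabla^2U$, which is symmetric. With $P=\Id$ we get $\nabla b\,P+P(\nabla b)^\top=-2\nabla^2U$, so the first integral is $-2\int p_t\,g_t^\top\nabla^2U\,g_t$.

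\emph{The second-order term.} With $G=P=\Id$, the trace is $\operatorname{tr}(S_tGS_tP)=\operatorname{tr}(S_t^2)$. Since $S_t=\nabla^2r_t$ is symmetric, $\operatorname{tr}(S_t^2)=\operatorname{tr}(S_t^\top S_t)=\norm{S_t}_{\mathrm F}^2$, so the second term is $-2\int p_t\norm{S_t}_{\mathrm F}^2$.

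Adding the two terms gives \eqref{eq:od-dissipation}. There is no real obstacle here; all the analytic work (the transport equation \eqref{eq:master-gt} and the integrations by parts) is already contained in Lemma~\ref{lem:master-transport} and Proposition~\ref{prop:master-dissipation}. The only things to check are that $P=\Id$ is an admissible constant symmetric matrix and that the standing regularity conventions apply, which they do by assumption.
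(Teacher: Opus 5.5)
Your proposal is correct and matches the paper's proof exactly: the paper also substitutes $b=-\nabla U$, $G=\Id$, $P=\Id$ into Proposition~\ref{prop:master-dissipation}, obtaining $\nabla b\,P+P(\nabla b)^\top=-2\nabla^2U$ and $\operatorname{tr}(S_tS_t)=\norm{S_t}_{\mathrm F}^2$.
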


\begin{proof}
Take $b=-\nabla U$, $G=\Id$ and $P=\Id$ in Proposition~\ref{prop:master-dissipation}: then $\nabla b\,P+P(\nabla b)^\top=-2\nabla^2U$ and $\operatorname{tr}(S_tS_t)=\norm{S_t}_{\mathrm F}^2$.
\end{proof}

\begin{restated}[Overdamped Fisher contractivity]{Theorem}{\ref{thm:overdamped-decay}}
In the overdamped model, if $U$ is convex, then $\F(p_t\,\|\,q_t)$ is nonincreasing; if $U$ is $m$-strongly convex, then
$\F(p_t\,\|\,q_t)\le e^{-2mt}\,\F(p_0\,\|\,q_0)$ for $0\le t\le T$.
\end{restated}

\begin{proof}
If $\nabla^2U\succeq0$, both terms in Proposition~\ref{prop:od-identity} are nonpositive. If $\nabla^2U\succeq m\Id$, the identity gives $\F'(t)\le-2m\F(t)$, and Gronwall's inequality concludes.
\end{proof}

\subsection{Underdamped}

In the underdamped model the forward drift is $b_{\mathrm{fwd}}$, whose Jacobian is $M(x)$ in \eqref{eq:M-def}, and $G=G_\gamma$. Proposition~\ref{prop:master-dissipation} therefore gives, for every constant symmetric $P$,
\begin{equation}
    \frac{\dd}{\dd t}\F_P(p_t\,\|\,q_t)
    =\int p_t\,g_t^\top\big(M(x)P+PM(x)^\top\big)g_t\dd z
    -2\int p_t\operatorname{tr}\!\big(S_tG_\gamma S_tP\big)\dd z.
    \label{eq:IP-derivative}
\end{equation}
The second term is nonpositive whenever $P\succ0$, since $\operatorname{tr}(S_tG_\gamma S_tP)=\|G_\gamma^{1/2}S_tP^{1/2}\|_{\mathrm F}^2$. The first term is where the Hamiltonian coupling acts: $M(x)^\top g_t$ rotates position-gradient error into the velocity directions, and everything comes down to finding a $P$ for which $M(x)P+PM(x)^\top$ is negative definite uniformly in $x$. The matrix $P_\gamma$ of \eqref{eq:explicit-P} does this.

\begin{lemma}[Lyapunov inequality for $P_\gamma$]
\label{lem:explicit-P}
Suppose $\gamma^2\xi>L$ and let $P=P_\gamma\otimes\Id$ with $P_\gamma$, $\kappa_{\gamma,\xi}$ as in \eqref{eq:explicit-P}. Under Assumption~\ref{ass:convexity},
\begin{equation}
    M(x)P+PM(x)^\top\preceq-2\kappa_{\gamma,\xi}\,P
    \qquad\text{for all }x\in\R^d .
    \label{eq:LMI}
\end{equation}
Moreover $\det P_\gamma=\gamma^2/4>0$, so $P_\gamma\succ0$, with eigenvalues
\begin{equation}
    \lambda_\pm(P_\gamma)
    =\frac12\Big(1+\frac{\gamma^2}2\Big)\pm\frac12\sqrt{1+\frac{\gamma^4}4}.
    \label{eq:P-eigs}
\end{equation}
\end{lemma}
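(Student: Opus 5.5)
The plan is to reduce the matrix inequality \eqref{eq:LMI} to a one-parameter family of $2\times2$ inequalities and then check positive semidefiniteness by hand.

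\textbf{Reduction.} Fix $x$ and write $H=\nabla^2U(x)$. By Assumption~\ref{ass:convexity}, $H$ is symmetric with spectrum in $[m,L]$. Both $P=P_\gamma\otimes\Id$ and $M(x)$ have $d\times d$ blocks that are polynomials in $H$: the blocks of $P$ are multiples of $\Id$, and those of $M(x)$ are $0$, $\xi\Id$, $-H$, $-\gamma\xi\Id$. Diagonalize $H=O\,\mathrm{diag}(h_1,\dots,h_d)\,O^\top$ and conjugate everything by $\mathrm{diag}(O,O)$. After reordering coordinates into pairs, \eqref{eq:LMI} splits into $d$ independent $2\times2$ inequalities, one for each eigenvalue $h=h_j\in[m,L]$:
\[
M_hP_\gamma+P_\gamma M_h^\top\preceq-2\kappa P_\gamma,\qquad M_h=\begin{pmatrix}0&\xi\\-h&-\gamma\xi\end{pmatrix},\qquad \kappa=\kappa_{\gamma,\xi}.
\]

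\textbf{Computing the $2\times2$ matrix.} Set $a:=\gamma^2\xi-h$, which is positive because $h\le L<\gamma^2\xi$. A direct multiplication gives
\[
M_hP_\gamma+P_\gamma M_h^\top=\begin{pmatrix}-\gamma\xi&a\\ a&-\gamma a\end{pmatrix}.
\]
So the claim becomes positive semidefiniteness of
\[
D:=\begin{pmatrix}\gamma\xi-2\kappa&\kappa\gamma-a\\ \kappa\gamma-a&\gamma a-\kappa\gamma^2\end{pmatrix}.
\]
The only real work in the proof is this multiplication; the remaining checks are short.

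\textbf{Checking $D\succeq 0$.} Put $k:=\kappa\gamma$. The definition of $\kappa$ gives two bounds:
\begin{itemize}
\item $k\le m\le h$;
\item $k\le\gamma^2\xi-L\le\gamma^2\xi-h=a$.
\end{itemize}
Now check the two diagonal entries and the determinant.
\begin{itemize}
\item Lower-right entry: $\gamma a-\kappa\gamma^2=\gamma(a-k)\ge0$.
\item Upper-left entry: $\gamma(\gamma\xi-2\kappa)=\gamma^2\xi-2k=(a-k)+(h-k)\ge0$.
\item Determinant: $\det D=\gamma(\gamma\xi-2\kappa)(a-k)-(a-k)^2=(a-k)\big(\gamma^2\xi-2k-a+k\big)=(a-k)(h-k)\ge0$.
\end{itemize}
A symmetric $2\times2$ matrix with nonnegative diagonal and nonnegative determinant is positive semidefinite, so $D\succeq0$. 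Since this holds for every $h\in[m,L]$, it yields \eqref{eq:LMI} uniformly in $x$.

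The hypothesis $\gamma^2\xi>L$ is used in two places: it makes $\kappa_{\gamma,\xi}>0$, and it makes $a>0$.

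\textbf{Spectral facts.} We have $\det P_\gamma=\gamma^2/2-\gamma^2/4=\gamma^2/4>0$ and $\operatorname{tr}P_\gamma=1+\gamma^2/2>0$, so $P_\gamma\succ0$. The eigenvalues are the roots of $\lambda^2-\operatorname{tr}P_\gamma\,\lambda+\det P_\gamma=0$. Their discriminant is
\[
(1+\gamma^2/2)^2-\gamma^2=1+\gamma^4/4,
\]
which gives \eqref{eq:P-eigs}.
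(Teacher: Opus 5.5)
Your proof is correct, but it verifies the $2\times2$ inequality by a different route from the paper's. The reduction to $2\times2$ blocks is the same in both.

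\textbf{The paper's route.} It does not multiply out $M(\eta)P_\gamma+P_\gamma M(\eta)^\top$. It passes to the coordinates $(x,\,x+\tfrac2\gamma v)$ via $T=\big(\begin{smallmatrix}1&0\\1&2/\gamma\end{smallmatrix}\big)$. The symmetric part of $A(\eta)=TM(\eta)T^{-1}$ has eigenvalues $-\gamma\xi/2\pm|s|$ with $s=(\gamma^2\xi-2\eta)/2\gamma$. This gives $A(\eta)+A(\eta)^\top\preceq-2\kappa_{\gamma,\xi}I$. The paper then transfers this back by congruence with $T$, which yields $M^\top Q+QM\preceq-2\kappa_{\gamma,\xi}Q$ for $Q=T^\top T$, and finally with $Q^{-1}$, noting that $Q^{-1}=P_\gamma$.

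\textbf{What each route buys.} The paper's route explains where $P_\gamma$ comes from. It is the inverse Gram matrix of a coordinate system in which the drift Jacobian is uniformly dissipative, which is how one would discover such a matrix or adapt it to other parameters. The per-mode rate $\min(\eta/\gamma,\gamma\xi-\eta/\gamma)$ also appears directly as an eigenvalue. Your route is a shorter, purely mechanical certificate for the given $P_\gamma$. Your factorization $\det D=(a-k)(h-k)$ makes sharpness transparent. $D$ is singular at $h=m$ when $\kappa_{\gamma,\xi}=m/\gamma$, and at $h=L$ when $\kappa_{\gamma,\xi}=\gamma\xi-L/\gamma$. So for this $P$ the constant $\kappa_{\gamma,\xi}$ cannot be enlarged in the worst case.

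\textbf{Two small remarks.}
First, pairing $\operatorname{tr}P_\gamma>0$ with $\det P_\gamma>0$, as you do, is what actually yields $P_\gamma\succ0$; the paper leaves the trace implicit.
Second, your positivity check never uses $a>0$, only $k\le a$, which holds by the definition of $\kappa_{\gamma,\xi}$. So the hypothesis $\gamma^2\xi>L$ is needed only to make $\kappa_{\gamma,\xi}$ positive.
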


\begin{proof}
Fix $x$ and diagonalize $\nabla^2U(x)=O^\top\operatorname{diag}(\eta_1,\dots,\eta_d)\,O$ with $O$ orthogonal and $\eta_i\in[m,L]$. Conjugation by $\operatorname{diag}(O,O)$ leaves $P=P_\gamma\otimes\Id$ unchanged, because each $d\times d$ block of $P$ is a multiple of $\Id$, and turns $M(x)$ into $\big(\begin{smallmatrix}0&\xi\Id\\-\operatorname{diag}(\eta_i)&-\gamma\xi\Id\end{smallmatrix}\big)$, which splits into $d$ two-dimensional systems with matrices
\[
    M(\eta):=\begin{pmatrix}0&\xi\\-\eta&-\gamma\xi\end{pmatrix},
    \qquad\eta\in[m,L].
\]
So it suffices to prove $M(\eta)P_\gamma+P_\gamma M(\eta)^\top\preceq-2\kappa_{\gamma,\xi}P_\gamma$ for every $\eta\in[m,L]$.

In the coordinates $(x,\,x+\tfrac2\gamma v)$, i.e.\ after conjugation by $T=\big(\begin{smallmatrix}1&0\\1&2/\gamma\end{smallmatrix}\big)$, the matrix $M(\eta)$ becomes
\[
    A(\eta):=TM(\eta)T^{-1}
    =\begin{pmatrix}-\dfrac{\gamma\xi}2&\dfrac{\gamma\xi}2\\[6pt]\dfrac{\gamma\xi}2-\dfrac{2\eta}\gamma&-\dfrac{\gamma\xi}2\end{pmatrix},
    \qquad
    \frac{A(\eta)+A(\eta)^\top}2
    =\begin{pmatrix}-\dfrac{\gamma\xi}2&s\\[6pt]s&-\dfrac{\gamma\xi}2\end{pmatrix},
\]
with $s=(\gamma^2\xi-2\eta)/2\gamma$. The symmetric part has eigenvalues $-\gamma\xi/2\pm|s|$, and according to the sign of $s$,
\[
    -\frac{\gamma\xi}2+|s|=-\min\Big(\frac\eta\gamma,\;\gamma\xi-\frac\eta\gamma\Big)
    \le-\min\Big(\frac m\gamma,\;\gamma\xi-\frac L\gamma\Big)=-\kappa_{\gamma,\xi}
    \qquad(\eta\in[m,L]),
\]
which is negative precisely when $m>0$ and $\gamma^2\xi>L$. Thus $A(\eta)+A(\eta)^\top\preceq-2\kappa_{\gamma,\xi}I$. Multiplying on the left by $T^\top$ and on the right by $T$, and writing $Q:=T^\top T$, this reads $M(\eta)^\top Q+QM(\eta)\preceq-2\kappa_{\gamma,\xi}Q$; multiplying on both sides by $Q^{-1}$ gives $M(\eta)Q^{-1}+Q^{-1}M(\eta)^\top\preceq-2\kappa_{\gamma,\xi}Q^{-1}$. Finally
\[
    Q=\begin{pmatrix}2&2/\gamma\\2/\gamma&4/\gamma^2\end{pmatrix},
    \qquad
    Q^{-1}=\begin{pmatrix}1&-\gamma/2\\-\gamma/2&\gamma^2/2\end{pmatrix}=P_\gamma ,
\]
whose trace and determinant are $1+\gamma^2/2$ and $\gamma^2/4$; this implies \eqref{eq:P-eigs}.
\end{proof}

\begin{restated}[Underdamped modified Fisher contractivity]{Theorem}{\ref{thm:underdamped-decay}}
Assume the underdamped model \eqref{eq:ud-forward}, Assumption~\ref{ass:convexity}, and $\gamma^2\xi>L$. With $P=P_\gamma\otimes\Id$ as in \eqref{eq:explicit-P},
$\F_P(p_t\,\|\,q_t)\le e^{-2t\kappa_{\gamma,\xi}}\,\F_P(p_0\,\|\,q_0)$.
\end{restated}

\begin{proof}
In \eqref{eq:IP-derivative} with $P=P_\gamma\otimes\Id\succ0$, the second term is nonpositive and, by Lemma~\ref{lem:explicit-P}, the first is at most $-2\kappa_{\gamma,\xi}\int p_t\,g_t^\top Pg_t=-2\kappa_{\gamma,\xi}\F_P(p_t\,\|\,q_t)$. Gronwall's inequality concludes. If $U$ is merely convex ($m=0$), then we have $\frac{\dd}{\dd t}\F_P\le0$.
\end{proof}

\section{Proofs for Section~\texorpdfstring{\ref{sec:overdamped-stat}}{4}: the overdamped sampler}
\label{app:od-proofs}

Throughout this appendix, $q_t=\rho_{T-t}$ and $s_t=\nabla\log q_t$ are the exact reverse marginals and scores of the forward OU process \eqref{eq:forward-ou}, $X_t$ is the interpolated numerical process \eqref{eq:numerical-sde} on the uniform grid $t_k=kh$, $h=T/N$, and $\hp_t$ is its law. Assumption~\ref{ass:classical_od} supplies the positive densities and the finite initial entropy $H_0$ required by the standing conventions, and $\lambda_0=\min\{\lambda,\frac d4\}$, $\mathcal M=\frac d{\lambda_0}(H_0+\log M+d)$ are as in Section~\ref{sec:overdamped-stat}.

The proof of Theorem~\ref{thm:overdamped-stationarity} runs as follows. Freezing the score costs a source term in the entropy dissipation: for $t\in[t_k,t_{k+1}]$,
\begin{equation}
    \frac{\dd}{\dd t}\KL(\hp_t\,\|\,q_t)
    \le-\frac12\F(\hp_t\,\|\,q_t)
    +2\,\E\|s_{t_k}(X_{t_k})-s_t(X_t)\|^2
    \label{eq:master}
\end{equation}
(Proposition~\ref{prop:master}). The source term compares the frozen score with the current one. It has a spatial part, from the increment $X_t-X_{t_k}$, which Lipschitzness and the explicit step \eqref{eq:explicit-step} control, and a temporal part, from $s_{t_k}-s_t$ at a fixed point, which we control through the OU semigroup (Lemma~\ref{lem:temporal-score}; compare \citealp[Lemma~17]{chen2023sampling}). Both can be controlled by second moments under the numerical law $\hp_t$. Assumption~\ref{ass:expmoment} bounds exponential moments of the exact marginals $q_t$, and the Gibbs variational inequality converts them into second moments of $\hp_t$ as long as $\KL(\hp_t\,\|\,q_t)$ stays bounded (Lemma~\ref{lem:bootstrap-moments}). Integrating \eqref{eq:master} with this bound closes the entropy bootstrap and results in our main theorem.

\subsection{KL dissipation for the numerical marginal}
\label{app:od-master}

\begin{lemma}[Marginal Markov projection]
\label{lem:markov-projection}
For $t\in[t_k,t_{k+1}]$ let $\bar s_t(x):=\E[s_{t_k}(X_{t_k})\mid X_t=x]$. Then
\begin{equation}
    \partial_t\hp_t=-\nabla\cdot\big(\hp_t[x+2\bar s_t(x)]\big)+\Delta\hp_t.
    \label{eq:q-fp}
\end{equation}
\end{lemma}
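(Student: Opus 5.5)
The plan is the standard "mimicking" argument of Gyöngy type: condition on the current position and integrate the conditional drift against test functions. Fix $k$ and $t\in[t_k,t_{k+1}]$, and take a smooth compactly supported test function $\varphi$. On this interval the interpolated process \eqref{eq:numerical-sde} is an Itô process whose drift $X_t+2s_{t_k}(X_{t_k})$ is adapted but not a function of $X_t$ alone, and whose diffusion coefficient is $\sqrt2\,\Id$. By Itô's formula,
\[
    \frac{\dd}{\dd t}\E\varphi(X_t)
    =\E\big[\nabla\varphi(X_t)\cdot\{X_t+2s_{t_k}(X_{t_k})\}\big]+\E\,\Delta\varphi(X_t),
\]
because the martingale term has zero mean once $\nabla\varphi$ is bounded and the second moments are finite. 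Differentiating under the expectation here is justified by dominated convergence and the standing regularity conventions.

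Next we replace the non-Markovian part of the drift by its conditional expectation. The tower property gives
\[
    \E[\nabla\varphi(X_t)\cdot s_{t_k}(X_{t_k})]=\E[\nabla\varphi(X_t)\cdot\bar s_t(X_t)],
\]
since $\nabla\varphi(X_t)$ is $\sigma(X_t)$-measurable and $\bar s_t(X_t)=\E[s_{t_k}(X_{t_k})\mid X_t]$. The right-hand side is then an integral against $\hp_t$:
\[
    \frac{\dd}{\dd t}\int\varphi\,\hp_t=\int\hp_t\big(\nabla\varphi\cdot[x+2\bar s_t]+\Delta\varphi\big).
\]
Integrating by parts, with no boundary terms since $\varphi$ has compact support, gives the weak form of \eqref{eq:q-fp}. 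By the regularity conventions the identity then holds pointwise.

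The only delicate point is integrability. We need $s_{t_k}(X_{t_k})\in L^1$, so that $\bar s_t$ is well defined and the tower property applies. This follows from Assumption~\ref{ass:lipschitz}, which gives $\|s_{t_k}(x)\|\le\|s_{t_k}(0)\|+L_s\|x\|$, together with finite second moments of $\hp_{t_k}$. Those moments are propagated step by step through the explicit Gaussian update \eqref{eq:explicit-step}, starting from the standing conventions on $\hp_0$.

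An alternative, which avoids Itô calculus, uses \eqref{eq:explicit-step} directly. Conditional on $X_{t_k}=y$, the density of $X_t$ is Gaussian and satisfies the linear Fokker--Planck equation with constant drift term $2s_{t_k}(y)$. Averaging this equation over $y\sim\hp_{t_k}$ and applying Bayes' rule, $\int\hp_{t_k}(y)\,p(x\mid y)\,s_{t_k}(y)\dd y=\hp_t(x)\bar s_t(x)$, produces the same equation \eqref{eq:q-fp}.
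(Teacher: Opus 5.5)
Your proof is correct and matches the paper's argument: you apply It\^o's formula to a compactly supported test function, then use the tower property to replace $s_{t_k}(X_{t_k})$ by $\bar s_t(X_t)$, which gives the weak form of \eqref{eq:q-fp}. Your integrability check and your alternative argument, which averages the Gaussian transition density of \eqref{eq:explicit-step} over $\hp_{t_k}$ and applies Bayes' rule, are both sound additions that the paper leaves implicit.
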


\begin{proof}
For a smooth compactly supported $\varphi$, It\^o's formula applied to \eqref{eq:numerical-sde} gives
\begin{align*}
    \frac{\dd}{\dd t}\E\,\varphi(X_t)
    &=\E\big[\ip{\nabla\varphi(X_t)}{X_t+2s_{t_k}(X_{t_k})}+\Delta\varphi(X_t)\big]\\
    &=\E\big[\ip{\nabla\varphi(X_t)}{X_t+2\bar s_t(X_t)}+\Delta\varphi(X_t)\big],
\end{align*}
by conditioning on $X_t$ in the frozen-score term. This is the weak form of \eqref{eq:q-fp}.
\end{proof}

\begin{lemma}[Relative entropy derivative for two Fokker--Planck equations]
\label{lem:kl-derivative}
If $\partial_t\mu_t=-\nabla\cdot(\mu_tb_t^\mu)+\Delta\mu_t$ and $\partial_t\nu_t=-\nabla\cdot(\nu_tb_t^\nu)+\Delta\nu_t$, then
\begin{equation}
    \frac{\dd}{\dd t}\KL(\mu_t\,\|\,\nu_t)
    =-\F(\mu_t\,\|\,\nu_t)
    +\E_{\mu_t}\Big\langle\nabla\log\frac{\mu_t}{\nu_t},\,b_t^\mu-b_t^\nu\Big\rangle.
    \label{eq:kl-derivative}
\end{equation}
\end{lemma}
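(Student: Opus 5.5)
The plan is to reduce Lemma~\ref{lem:kl-derivative} to Lemma~\ref{lem:kl-identity} by writing both Fokker--Planck equations as continuity equations. Since $\Delta\mu_t=\nabla\cdot(\mu_t\nabla\log\mu_t)$, the equation for $\mu_t$ reads $\partial_t\mu_t=-\nabla\cdot(\mu_tv_t^\mu)$ with current velocity $v_t^\mu:=b_t^\mu-\nabla\log\mu_t$. In the same way, $\partial_t\nu_t=-\nabla\cdot(\nu_tv_t^\nu)$ with $v_t^\nu:=b_t^\nu-\nabla\log\nu_t$. This is exactly the rewriting used in the proof of Lemma~\ref{lem:generic-current} with $G=\Id$.

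Next I would apply Lemma~\ref{lem:kl-identity} with $p_t=\mu_t$, $q_t=\nu_t$ and $r_t=\log(\mu_t/\nu_t)$. That lemma gives
\[
\frac{\dd}{\dd t}\KL(\mu_t\,\|\,\nu_t)=\int\mu_t\,\nabla r_t\cdot(v_t^\mu-v_t^\nu).
\]
The velocity difference splits as
\[
v_t^\mu-v_t^\nu=(b_t^\mu-b_t^\nu)-\nabla\log\frac{\mu_t}{\nu_t}=(b_t^\mu-b_t^\nu)-\nabla r_t.
\]
Substituting this, the $-\nabla r_t$ part contributes $-\int\mu_t\|\nabla r_t\|^2=-\F(\mu_t\,\|\,\nu_t)$. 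The drift difference contributes $\E_{\mu_t}\langle\nabla r_t,b_t^\mu-b_t^\nu\rangle$. Together these give \eqref{eq:kl-derivative}.

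The only delicate point is justifying the integrations by parts hidden in Lemma~\ref{lem:kl-identity}: no boundary terms, and differentiation under the integral. Here I would simply invoke the standing regularity conventions of the appendices, namely smooth positive densities with enough decay, as the paper does elsewhere. So I expect no genuine obstacle; the proof is a two-line reduction.
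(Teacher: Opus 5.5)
Your proposal is correct and is essentially the paper's own proof. The paper also rewrites both Fokker--Planck equations as continuity equations with velocities $b_t^\mu-\nabla\log\mu_t$ and $b_t^\nu-\nabla\log\nu_t$, then applies Lemma~\ref{lem:kl-identity} with the velocity difference $(b_t^\mu-b_t^\nu)-\nabla\log(\mu_t/\nu_t)$.
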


\begin{proof}
Both equations are continuity equations, with velocities $v_t^\mu=b_t^\mu-\nabla\log\mu_t$ and $v_t^\nu=b_t^\nu-\nabla\log\nu_t$. Thus $v_t^\mu-v_t^\nu=b_t^\mu-b_t^\nu-\nabla\log(\mu_t/\nu_t)$, and Lemma~\ref{lem:kl-identity} completes the proof.
\end{proof}

\begin{proposition}[Master inequality]
\label{prop:master}
For $t\in[t_k,t_{k+1}]$,
\[
    \frac{\dd}{\dd t}\KL(\hp_t\,\|\,q_t)
    \le-\frac12\F(\hp_t\,\|\,q_t)
    +2\,\E\|s_{t_k}(X_{t_k})-s_t(X_t)\|^2 .
\]
\end{proposition}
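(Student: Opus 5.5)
We apply Lemma~\ref{lem:kl-derivative} with $\mu_t=\hp_t$ and $\nu_t=q_t$. By Lemma~\ref{lem:markov-projection}, $\hp_t$ solves a Fokker--Planck equation with drift $b_t^\mu(x)=x+2\bar s_t(x)$ and unit diffusion. The exact reverse marginal $q_t$ solves the same type of equation with drift $b_t^\nu(x)=x+2s_t(x)$. To see this, take the overdamped case of Lemma~\ref{lem:generic-current} with $U=\tfrac12\|x\|^2$, $b=-x$, $G=\Id$. There $q_t$ has current velocity $x+s_t$, which can be rewritten as $(x+2s_t)-\nabla\log q_t$. This is exactly the continuity-equation form used in the proof of Lemma~\ref{lem:kl-derivative}.

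The linear parts $x$ of the two drifts cancel, so $b_t^\mu-b_t^\nu=2(\bar s_t-s_t)$ and
\[
\frac{\dd}{\dd t}\KL(\hp_t\,\|\,q_t)=-\F(\hp_t\,\|\,q_t)+2\,\E_{\hp_t}\Big\langle\nabla\log\frac{\hp_t}{q_t},\,\bar s_t-s_t\Big\rangle .
\]
We bound the cross term by Young's inequality $2\ip ab\le\frac12\|a\|^2+2\|b\|^2$. This gives at most $\frac12\F(\hp_t\,\|\,q_t)+2\,\E_{\hp_t}\|\bar s_t-s_t\|^2$, and therefore $-\frac12\F+2\E_{\hp_t}\|\bar s_t-s_t\|^2$ for the derivative.

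It remains to remove the conditional expectation. Since $X_t\sim\hp_t$, we have $\bar s_t(X_t)-s_t(X_t)=\E[s_{t_k}(X_{t_k})-s_t(X_t)\mid X_t]$. Conditional Jensen then gives $\E_{\hp_t}\|\bar s_t-s_t\|^2\le\E\|s_{t_k}(X_{t_k})-s_t(X_t)\|^2$, which is the claimed inequality.

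The main point needing care is identifying the correct drift for $q_t$: the score must enter the drift with coefficient $2$, so that the Laplacian form matches Lemma~\ref{lem:kl-derivative}. A second point is justifying the Fokker--Planck equation for $\hp_t$ with a merely measurable projected drift $\bar s_t$, together with the integrations by parts. We take both from the standing regularity conventions.
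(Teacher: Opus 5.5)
Your proposal is correct and follows the paper's proof step for step: you apply Lemma~\ref{lem:kl-derivative} with the projected drift $x+2\bar s_t$ from Lemma~\ref{lem:markov-projection} and the exact drift $x+2s_t$, bound the cross term by Young's inequality $2\ip{g}{e}\le\frac12\|g\|^2+2\|e\|^2$, and remove the conditional expectation by conditional Jensen. Your check that the current velocity $x+s_t$ equals $(x+2s_t)-\nabla\log q_t$ is exactly the identification the paper uses implicitly.
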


\begin{proof}
The exact marginal satisfies $\partial_tq_t=-\nabla\cdot(q_t[x+2s_t])+\Delta q_t$, the Fokker--Planck equation of the exact reverse SDE (Lemma~\ref{lem:generic-current} with $b=-x$, $G=\Id$). Lemma~\ref{lem:kl-derivative} with $\mu_t=\hp_t$, $\nu_t=q_t$ and Lemma~\ref{lem:markov-projection} give
\[
    \frac{\dd}{\dd t}\KL(\hp_t\,\|\,q_t)
    =-\F(\hp_t\,\|\,q_t)+2\,\E_{\hp_t}\Big\langle\nabla\log\frac{\hp_t}{q_t},\,\bar s_t-s_t\Big\rangle
    \le-\frac12\F(\hp_t\,\|\,q_t)+2\,\E_{\hp_t}\|\bar s_t-s_t\|^2 ,
\]
by $2\ip ge\le\frac12\|g\|^2+2\|e\|^2$. Finally $\bar s_t(X_t)-s_t(X_t)=\E[s_{t_k}(X_{t_k})-s_t(X_t)\mid X_t]$, and Jensen's inequality for the conditional expectation bounds $\E\|\bar s_t(X_t)-s_t(X_t)\|^2$ by $\E\|s_{t_k}(X_{t_k})-s_t(X_t)\|^2$.
\end{proof}

\subsection{Temporal score variation from the OU semigroup}
\label{app:od-temporal}

Along the forward flow \eqref{eq:forward-ou}, $Y_{s+\tau}\overset{d}=e^{-\tau}Y_s+\sqrt{1-e^{-2\tau}}\,G$ with $G\sim N(0,\Id)$ independent of $Y_s$. Read at $s=T-t$ and $s+\tau=T-t_k$, i.e.\ at $t=t_k+\tau$, this expresses the earlier reverse-time marginal as a scaled and then smoothed version of the later one:
\begin{equation}
    q_{t_k}=\big((a\Id)_\#q_t\big)*N(0,\sigma^2\Id),
    \qquad
    a=e^{-\tau},
    \quad
    \sigma^2=1-e^{-2\tau}.
    \label{eq:ou-local-relation}
\end{equation}
We first quantify what the smoothing does to a score.

\begin{lemma}[Score perturbation by Gaussian smoothing]
\label{lem:gaussian-smoothing}
Let $\pi$ be a smooth positive density on $\R^d$ whose score $u=\nabla\log\pi$ is $L_\pi$-Lipschitz, and let $\pi_\sigma=\pi*N(0,\sigma^2\Id)$. If $L_\pi\sigma^2\le\frac12$, then for every $x$,
\begin{equation}
    \|\nabla\log\pi_\sigma(x)-u(x)\|
    \le L_\pi\sqrt{2d\sigma^2}+2L_\pi\sigma^2\,\|u(x)\|.
    \label{eq:gaussian-smoothing}
\end{equation}
\end{lemma}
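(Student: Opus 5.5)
The plan is to write the smoothed score as a posterior average of $u$, and then to control how far that posterior sits from a well-chosen center using strong log-concavity. Everything reduces to one integration-by-parts inequality.

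\emph{Step 1 (posterior representation).} Write $\varphi_\sigma$ for the $N(0,\sigma^2\Id)$ density, so that $\pi_\sigma(x)=\int\pi(y)\varphi_\sigma(x-y)\dd y$. Differentiating under the integral and moving the derivative onto $\pi$ gives
\[
\nabla\log\pi_\sigma(x)=\E_{\mu_x}[u(Y)],\qquad \mu_x(y)\propto\pi(y)\,\varphi_\sigma(x-y).
\]
The posterior $\mu_x$ has potential $V(y)=-\log\pi(y)+\|y-x\|^2/(2\sigma^2)$. Since $u$ is $L_\pi$-Lipschitz, $\nabla^2V\succeq(\sigma^{-2}-L_\pi)\Id$. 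Set $\kappa:=\sigma^{-2}-L_\pi$; the hypothesis $L_\pi\sigma^2\le\frac12$ gives $\kappa\ge 1/(2\sigma^2)$, so $\mu_x$ is $\kappa$-strongly log-concave.

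\emph{Step 2 (first reduction).} Let $\Delta:=\|\nabla\log\pi_\sigma(x)-u(x)\|=\|\E_{\mu_x}[u(Y)-u(x)]\|$. The Lipschitz bound gives $\Delta\le L_\pi\E\|Y-x\|$. Rather than centering at $x$, I center at the shifted point $y^*:=x+\sigma^2u(x)$. The gradient there is $\nabla V(y^*)=-u(y^*)+u(x)$, whose norm is $g\le L_\pi\sigma^2\|u(x)\|$. The triangle inequality then yields
\[
\Delta\le L_\pi\big(e+\sigma^2\|u(x)\|\big),\qquad e:=\big(\E\|Y-y^*\|^2\big)^{1/2}.
\]

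\emph{Step 3 (key estimate on $e$).} This is the main step, and the place where the sharp constant is decided. Strong monotonicity of $\nabla V$ gives
\[
\kappa\,\E\|Y-y^*\|^2\le \E\langle\nabla V(Y)-\nabla V(y^*),\,Y-y^*\rangle .
\]
By integration by parts against $\mu_x\propto e^{-V}$, $\E\langle\nabla V(Y),Y-y^*\rangle=\E\,\nabla\cdot(Y-y^*)=d$, with no boundary terms under the standing conventions. The other piece satisfies $-\E\langle\nabla V(y^*),Y-y^*\rangle\le g\,e$. Hence $\kappa e^2-ge-d\le0$, and solving the quadratic gives
\[
e\le\frac{g+\sqrt{g^2+4\kappa d}}{2\kappa}\le\frac g\kappa+\sqrt{d/\kappa}.
\]
Using $\kappa\ge1/(2\sigma^2)$, this becomes $e\le 2L_\pi\sigma^4\|u(x)\|+\sqrt{2d\sigma^2}$.

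\emph{Step 4 (assembly).} Substituting into Step 2,
\[
\Delta\le L_\pi\sqrt{2d\sigma^2}+L_\pi\sigma^2\|u(x)\|\,(1+2L_\pi\sigma^2)\le L_\pi\sqrt{2d\sigma^2}+2L_\pi\sigma^2\|u(x)\|,
\]
where the last inequality uses $L_\pi\sigma^2\le\frac12$. This is exactly \eqref{eq:gaussian-smoothing}, with coefficient $1$ on the dimensional term.

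The obstacle is getting that coefficient $1$. Centering at $x$ and then bounding the variance of $\mu_x$ through a Poincaré or Brascamp--Lieb inequality produces a factor $(1-L_\pi\sigma^2)^{-1}$ or worse multiplying $\sqrt{2d\sigma^2}$. The shift to $y^*$ avoids this: the drift term $g$ becomes of order $L_\pi\sigma^2\|u\|$, so it only feeds the $\|u\|$ term, and the $d$ term enters once, with no feedback loop.
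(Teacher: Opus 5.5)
Your proof is correct and is essentially the paper's argument: the posterior representation $\nabla\log\pi_\sigma(x)=\E_{\mu_x}u(Y)$, the bound $\nabla^2V\succeq(2\sigma^2)^{-1}\Id$, and the identity $\E_{\mu_x}\ip{\nabla V(Y)}{Y-c}=d$ combined with strong monotonicity about a center $c$. The only difference is the choice of $c$. The paper takes the exact minimizer $y_\star$ of $V$, so $\nabla V(y_\star)=0$ gives $\E\|Y-y_\star\|^2\le2d\sigma^2$ directly and the fixed-point relation $y_\star-x=\sigma^2u(y_\star)$ gives $\|y_\star-x\|\le2\sigma^2\|u(x)\|$; you instead use the explicit point $x+\sigma^2u(x)$ and absorb its small residual gradient through the quadratic inequality, reaching the same constants.
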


\begin{proof}
Fix $x$ and let $\mu_x$ be the probability density on $\R^d$ proportional to $\pi(y)\exp(-\|x-y\|^2/2\sigma^2)$, the conditional law of $Y\sim\pi$ given $Y+\sigma G=x$. Since $\nabla_xe^{-\|x-y\|^2/2\sigma^2}=-\nabla_ye^{-\|x-y\|^2/2\sigma^2}$, an integration by parts in $y$ gives
\[
    \nabla\pi_\sigma(x)\propto\int\pi(y)\,\nabla_xe^{-\|x-y\|^2/2\sigma^2}\dd y
    =\int\nabla\pi(y)\,e^{-\|x-y\|^2/2\sigma^2}\dd y ,
\]
and dividing by $\pi_\sigma(x)$, with $\nabla\pi=\pi u$, gives $\nabla\log\pi_\sigma(x)=\E_{\mu_x}u(Y)$. Hence $\|\nabla\log\pi_\sigma(x)-u(x)\|\le L_\pi\,\E_{\mu_x}\|Y-x\|$, and it remains to show that $\mu_x$ concentrates near $x$.

Write $\mu_x\propto e^{-V_x}$ with $V_x(y)=-\log\pi(y)+\|x-y\|^2/2\sigma^2$. As $\nabla^2(-\log\pi)=-\nabla u\succeq-L_\pi\Id$, we have $\nabla^2V_x\succeq(\sigma^{-2}-L_\pi)\Id\succeq(2\sigma^2)^{-1}\Id$, so $V_x$ has a unique minimizer $y_\star$. At $y_\star$, we know
\[
    -u(y_\star)+(y_\star-x)/\sigma^2=0, \quad\text{i.e.} \quad\ y_\star-x=\sigma^2u(y_\star),
\]
and therefore
\[
    \|y_\star-x\|\le\sigma^2\|u(x)\|+L_\pi\sigma^2\|y_\star-x\|,
    \qquad\text{hence}\qquad
    \|y_\star-x\|\le2\sigma^2\|u(x)\| .
\]
Next we show that $\mu_x$ concentrates around $y_\star$. Since $\nabla\mu_x=-\mu_x\nabla V_x$, an
integration by parts shows,
\[
    \E_{\mu_x}\ip{Y-y_\star}{\nabla V_x(Y)}
    =-\int\ip{y-y_\star}{\nabla\mu_x(y)}\dd y
    =\int\mu_x(y)\,\nabla\cdot(y-y_\star)\dd y=d .
\]
On the other hand, we have $\nabla^2V_x\succeq(2\sigma^2)^{-1}\Id$ and $\nabla V_x(y_\star)=0$, so
\[
    \ip{y-y_\star}{\nabla V_x(y)}
    =\Big\langle y-y_\star,\int_0^1\nabla^2V_x\big(y_\star+s(y-y_\star)\big)(y-y_\star)\dd s\Big\rangle
    \ge\frac{\|y-y_\star\|^2}{2\sigma^2}.
\]
Therefore $\E_{\mu_x}\|Y-y_\star\|^2\le2d\sigma^2$.
Altogether, by the triangle inequality and Jensen's inequality,
\[
    \E_{\mu_x}\|Y-x\|
    \le\big(\E_{\mu_x}\|Y-y_\star\|^2\big)^{1/2}+\|y_\star-x\|
    \le\sqrt{2d\sigma^2}+2\sigma^2\|u(x)\|.
\]
Substituting this bound into $\|\nabla\log\pi_\sigma(x)-u(x)\|\le L_\pi\,\E_{\mu_x}\|Y-x\|$ completes the proof.
\end{proof}

\begin{lemma}[Temporal OU score perturbation]
\label{lem:temporal-score}
Under Assumption~\ref{ass:lipschitz}, let $t=t_k+\tau$ with $0\le\tau\le h$. If $h\le c/(1+L_s)$ with $c\le\frac18$, then for every $x\in\R^d$,
\begin{equation}
    \|s_{t_k}(x)-s_t(x)\|^2
    \lesssim
    L^2_sd\tau+L^2_s\tau^2\|x\|^2+(1+L_s)^2\tau^2\|s_t(x)\|^2.
    \label{eq:temporal-pointwise}
\end{equation}
\end{lemma}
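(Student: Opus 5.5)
The plan is to combine the OU relation \eqref{eq:ou-local-relation} with Lemma~\ref{lem:gaussian-smoothing}. Write $q_{t_k}=\pi_\sigma$ with $\pi:=(a\Id)_\#q_t$, $a=e^{-\tau}$, $\sigma^2=1-e^{-2\tau}\le2\tau$. The density of the pushforward is $\pi(y)=a^{-d}q_t(y/a)$, so its score is $u(y)=a^{-1}s_t(y/a)$. By Assumption~\ref{ass:lipschitz}, $u$ is Lipschitz with constant $L_\pi=L_s/a^2\le e^{2h}L_s\le 2L_s$ once $h\le c\le\frac18$. Then $L_\pi\sigma^2\le 4L_s\tau\le 4c<\frac12$, so the lemma applies and gives
\[
\|s_{t_k}(x)-u(x)\|\le L_\pi\sqrt{2d\sigma^2}+2L_\pi\sigma^2\|u(x)\|\lesssim L_s\sqrt{d\tau}+L_s\tau\|u(x)\|.
\]

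Next, compare $u(x)=e^{\tau}s_t(e^{\tau}x)$ with $s_t(x)$. Split this difference as
\[
u(x)-s_t(x)=e^\tau\big[s_t(e^\tau x)-s_t(x)\big]+(e^\tau-1)s_t(x).
\]
Lipschitzness bounds the first term by $e^\tau L_s(e^\tau-1)\|x\|\lesssim L_s\tau\|x\|$. The second term is at most $\lesssim\tau\|s_t(x)\|$. Consequently
\[
\|u(x)\|\lesssim\|s_t(x)\|+L_s\tau\|x\|,
\]
and inserting this into the smoothing bound, the extra term is $L_s\tau\cdot L_s\tau\|x\|\lesssim L_s\tau\|x\|$, using $L_s\tau\le c$.

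Finally, apply the triangle inequality $s_{t_k}-s_t=(s_{t_k}-u)+(u-s_t)$ and square with $(a+b+c)^2\le3(a^2+b^2+c^2)$. The collected terms are $L_s^2d\tau$, $L_s^2\tau^2\|x\|^2$, and $(1+L_s)^2\tau^2\|s_t(x)\|^2$, the last combining the $L_s\tau\|s_t\|$ and $\tau\|s_t\|$ contributions. This is \eqref{eq:temporal-pointwise}.

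The only delicate point is bookkeeping the constants $a^{-1},a^{-2}\le e^{2c}$ and verifying the smallness condition $L_\pi\sigma^2\le\frac12$ from $h\le c/(1+L_s)$. I expect no real obstacle beyond correctly identifying the score of the pushforward $(a\Id)_\#q_t$ and its Lipschitz constant.
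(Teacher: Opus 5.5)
Your proof is correct and follows the paper's argument: write $q_{t_k}$ as the Gaussian smoothing of the rescaled density $(a\Id)_\#q_t$, apply Lemma~\ref{lem:gaussian-smoothing} with $L_\pi=L_sa^{-2}$ and $L_\pi\sigma^2\le 4L_s\tau\le\frac12$, then compare the rescaled score $u$ with $s_t$ by Lipschitzness, use $L_s\tau\le1$, and square. The only difference is cosmetic. You split $u(x)-s_t(x)=e^\tau[s_t(e^\tau x)-s_t(x)]+(e^\tau-1)s_t(x)$, putting the prefactor directly on $s_t(x)$. The paper puts it on $s_t(x/a)$ and needs one more Lipschitz step, and both routes give the same bound.
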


\begin{proof}
Let $\pi=(a\Id)_\#q_t$ be the scaled density in \eqref{eq:ou-local-relation} and $u=\nabla\log\pi$. Since $\pi(x)=a^{-d}q_t(x/a)$,
\begin{equation}
    u(x)=a^{-1}s_t(x/a),
    \label{eq:scaled-score}
\end{equation}
so $u$ is $L_\pi$-Lipschitz with $L_\pi=L_sa^{-2}$. For $\tau\le h\le c/(1+L_s)$ we have $a^{-1}=e^\tau\le2$, $e^\tau-1\le2\tau$, $\sigma^2=1-e^{-2\tau}\le2\tau$, and $L_\pi\sigma^2=L_s(e^{2\tau}-1)\le4L_s\tau\le\frac12$. By  Lemma~\ref{lem:gaussian-smoothing}, we get
\begin{equation}
    \|s_{t_k}(x)-u(x)\|\lesssim L_s\sqrt{d\tau}+L_s\tau\|u(x)\| .
    \label{eq:smoothing-part}
\end{equation}
It remains to compare $u$ with $s_t$. By \eqref{eq:scaled-score}, $u(x)-s_t(x)=(a^{-1}-1)s_t(x/a)+\big(s_t(x/a)-s_t(x)\big)$, and $\|x/a-x\|=(e^\tau-1)\|x\|\le2\tau\|x\|$, so by Lipschitzness
\begin{equation}
    \|u(x)-s_t(x)\|
    \le2\tau\big(\|s_t(x)\|+2L_s\tau\|x\|\big)+2L_s\tau\|x\|
    \lesssim\tau\|s_t(x)\|+L_s\tau\|x\| .
    \label{eq:scaling-part}
\end{equation}
In particular $\|u(x)\|\lesssim\|s_t(x)\|+L_s\tau\|x\|$. Substituting this into \eqref{eq:smoothing-part}, adding \eqref{eq:scaling-part}, and using $L_s\tau\le1$,
\[
    \|s_{t_k}(x)-s_t(x)\|
    \lesssim L_s\sqrt{d\tau}+L_s\tau\|x\|+(1+L_s)\tau\|s_t(x)\| ,
\]
and squaring both sides completes the proof.
\end{proof}

\subsection{Second moments of the numerical law}
\label{app:od-moments}

\begin{lemma}[Exponential moments along the OU flow]
\label{lem:ou-expmoment}
Under Assumption~\ref{ass:expmoment}, with $\lambda_0=\min\{\lambda,\frac d4\}$,
\begin{equation}
    \E_{q_t}\exp\Big(\frac{\lambda_0}d\|X\|^2\Big)\le e^{2\lambda_0}M
    \qquad\text{for every }t\in[0,T].
    \label{eq:ou-expmoment}
\end{equation}
\end{lemma}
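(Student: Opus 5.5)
Since $q_t=\rho_{T-t}$, it suffices to bound $\E\exp(\frac{\lambda_0}{d}\|Y_s\|^2)$ for the forward OU marginals $\rho_s$, $s\in[0,T]$. I will use the explicit representation $Y_s\overset{d}{=}aY_0+\sigma G$ with $a=e^{-s}$, $\sigma^2=1-e^{-2s}$, and $G\sim N(0,\Id)$ independent of $Y_0$. Then I will combine the sub-Gaussian bound for $Y_0$ with the Gaussian moment generating function.

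The first step is to condition on $Y_0=y$. For $\beta:=\lambda_0/d\le\frac14$, complete the square in the Gaussian integral. Since $2\beta\sigma^2\le\frac12<1$, this gives
\[
\E\exp\big(\beta\|ay+\sigma G\|^2\big)=(1-2\beta\sigma^2)^{-d/2}\exp\Big(\frac{\beta a^2\|y\|^2}{1-2\beta\sigma^2}\Big).
\]
The second step is to bound the two factors.

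For the prefactor, use $-\log(1-u)\le 2u$ for $u\in[0,\frac12]$. This gives $(1-2\beta\sigma^2)^{-d/2}\le e^{2\beta\sigma^2 d}\le e^{2\lambda_0}$.

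For the exponent, show that $\beta a^2/(1-2\beta\sigma^2)\le\beta$, which is equivalent to $a^2\le 1-2\beta\sigma^2$. Substituting $a^2=1-\sigma^2$, this reads $1-\sigma^2\le1-2\beta\sigma^2$, i.e. $2\beta\le1$, which holds since $\beta\le\frac14$.

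Hence the exponent is at most $\beta\|y\|^2=\frac{\lambda_0}{d}\|y\|^2\le\frac{\lambda}{d}\|y\|^2$. Taking expectation over $Y_0$ and applying Assumption~\ref{ass:expmoment} yields at most $e^{2\lambda_0}M$, uniformly in $s$.

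The main obstacle is confirming that the truncation $\lambda_0\le d/4$ keeps $2\beta\sigma^2$ away from $1$. This ensures both that the Gaussian integral is finite and that the contraction $a^2\le1-2\beta\sigma^2$ holds. Both reduce to $\beta\le\frac14$, so the argument closes without further assumptions.
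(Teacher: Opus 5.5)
Your proposal is correct and follows the paper's proof essentially step for step. Both condition on $Y_0$, apply the Gaussian moment generating function, and bound the prefactor via $-\log(1-u)\le 2u$. Both then use $a^2=1-\sigma^2\le 1-2\beta\sigma^2$ (from $2\beta\le1$) and finish with $\lambda_0\le\lambda$ in Assumption~\ref{ass:expmoment}.
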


\begin{proof}
Every $q_t$ is a forward marginal $\rho_s$, and $Y_s\overset d=aY_0+\sigma G$ with $a=e^{-s}$, $\sigma^2=1-e^{-2s}$, and $G\sim N(0,\Id)$ independent of $Y_0\sim\rho_0$. Conditionally on $Y_0$, $Y_s$ is Gaussian with mean $aY_0$ and covariance $\sigma^2\Id$, and for $Z\sim N(m,\sigma^2\Id)$ and $2\theta\sigma^2<1$,
\[
    \E e^{\theta\|Z\|^2}=(1-2\theta\sigma^2)^{-d/2}\exp\Big(\frac{\theta\|m\|^2}{1-2\theta\sigma^2}\Big).
\]
Take $\theta=\lambda_0/d\le\frac14$ and put $x:=2\theta\sigma^2\le2\lambda_0/d\le\frac12$. Since $-\log(1-x)\le2x$ on $[0,\frac12]$, the prefactor is $(1-x)^{-d/2}\le e^{dx}\le e^{2\lambda_0}$, and $ a^2\le1-2\theta\sigma^2$ because $a^2=1-\sigma^2$ and $2\theta\le1$. Hence
\[
    \E e^{(\lambda_0/d)\|Y_s\|^2}
    \le e^{2\lambda_0}\,\E\exp\Big(\frac{\lambda_0a^2}{d(1-2\theta\sigma^2)}\|Y_0\|^2\Big)
    \le e^{2\lambda_0}\,\E e^{(\lambda_0/d)\|Y_0\|^2}\le e^{2\lambda_0}M,
\]
the last step because $\lambda_0\le\lambda$.
\end{proof}

\begin{lemma}[Gibbs variational inequality]
\label{lem:entropy-variational}
For probability measures $Q\ll P$ and every measurable $f$ with $\E_Pe^f<\infty$,
$\E_Qf\le\KL(Q\,\|\,P)+\log\E_Pe^f$.
\end{lemma}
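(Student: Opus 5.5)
The statement to prove is the Gibbs variational inequality (Donsker--Varadhan): for $Q\ll P$ and measurable $f$ with $\E_Pe^f<\infty$, we have $\E_Qf\le\KL(Q\,\|\,P)+\log\E_Pe^f$. The plan is to introduce the tilted probability measure $P_f$ with $\dd P_f/\dd P=e^f/Z$, where $Z:=\E_Pe^f\in(0,\infty)$. This is well defined because $e^f>0$, so $Z>0$, and $Z<\infty$ by hypothesis. The whole argument then reduces to the nonnegativity of $\KL(Q\,\|\,P_f)$.

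\textbf{Trivial cases.} If $\KL(Q\,\|\,P)=\infty$, the right-hand side is $+\infty$ and there is nothing to prove, so assume $\KL(Q\,\|\,P)<\infty$.

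\textbf{Absolute continuity.} Since $e^f>0$, the measures $P$ and $P_f$ are mutually absolutely continuous, hence $Q\ll P_f$ with
\[
\frac{\dd Q}{\dd P_f}=\frac{\dd Q}{\dd P}\,Z\,e^{-f}.
\]

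\textbf{Formal identity.} Taking logarithms and integrating against $Q$ gives
\[
\KL(Q\,\|\,P_f)=\KL(Q\,\|\,P)+\log Z-\E_Qf .
\]
Nonnegativity of relative entropy (Jensen's inequality applied to the convex function $u\log u$) then yields the claim.

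\textbf{Integrability (the main obstacle).} The identity above requires $\E_Qf$ to be well defined, and it need not be a priori. I would handle this in two parts.

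\emph{Negative part.} Write $\E_Q f^-$ for the expectation of the negative part. If $\E_Q f^-=\infty$, then $\E_Qf=-\infty$ (or is undefined) and the inequality holds trivially under the natural convention. So assume $\E_Q f^->-\infty$ in the sense that the negative part is $Q$-integrable.

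\emph{Positive part via truncation.} Replace $f$ by $f_n:=\min(f,n)$. Then $\E_Pe^{f_n}\le 1+\E_Pe^f<\infty$, and $f_n$ is bounded above, so $\E_Qf_n$ is well defined in $[-\infty,n]$. The identity for $f_n$ is now legitimate: the term $\log(\dd Q/\dd P)$ has finite $Q$-integral because its negative part is controlled by $\E_P[(\dd Q/\dd P)\log^-(\dd Q/\dd P)]\le e^{-1}$, which is the standard step. This gives
\[
\E_Qf_n\le\KL(Q\,\|\,P)+\log\E_Pe^{f_n}.
\]
Let $n\to\infty$. On the right, $\E_Pe^{f_n}\to\E_Pe^f$ by monotone convergence. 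On the left, $\E_Qf_n\to\E_Qf$ by monotone convergence applied to the positive part, together with the integrable negative part. This completes the proof.
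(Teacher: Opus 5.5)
Your argument is correct and is essentially the paper's. The paper writes $\E_Qf-\KL(Q\,\|\,P)=\E_Q\log(e^f/\varrho)\le\log\E_Q(e^f/\varrho)\le\log\E_Pe^f$ with $\varrho=\dd Q/\dd P$, which is exactly the Jensen step behind your $\KL(Q\,\|\,P_f)\ge0$, just without normalizing by $Z$.

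Your truncation adds integrability care that the paper skips, and on the direct Jensen route it is not needed, since $\E_Q\log^+(e^f/\varrho)\le\E_Pe^f<\infty$. Also, your ``undefined'' case cannot arise when $\KL(Q\,\|\,P)<\infty$: your own argument applied to $f^+$, with $\E_Pe^{f^+}\le1+\E_Pe^f$, gives $\E_Qf^+<\infty$.
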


\begin{proof}
With $\varrho=\dd Q/\dd P$, apply Jensen's inequality
\[\E_Qf-\KL(Q\,\|\,P)=\E_Q\log(e^f/\varrho)\le\log\E_Q(e^f/\varrho)\le\log\E_Pe^f.
\]
\end{proof}

\begin{lemma}[Moment bounds under the KL bootstrap]
\label{lem:bootstrap-moments}
Assume Assumptions~\ref{ass:lipschitz} and~\ref{ass:expmoment}. If, at some time $t$,
\begin{equation}
    \KL(\hp_t\,\|\,q_t)\le H_0+d,
    \label{eq:bootstrap-assumption}
\end{equation}
then
\begin{equation}
    \E_{\hp_t}\|X\|^2\lesssim\mathcal M,
    \qquad
    \E_{\hp_t}\|s_t(X)\|^2\lesssim L_s^2\mathcal M,
    \qquad
    \E_{\hp_t}\|X+2s_t(X)\|^2\lesssim(1+L_s)^2\mathcal M .
    \label{eq:q-moments}
\end{equation}
\end{lemma}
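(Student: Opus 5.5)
The first bound comes from combining Lemma~\ref{lem:entropy-variational} with Lemma~\ref{lem:ou-expmoment}. Take $Q=\hp_t$, $P=q_t$ and $f(x)=\frac{\lambda_0}{d}\|x\|^2$. Then
\[
    \frac{\lambda_0}{d}\E_{\hp_t}\|X\|^2\le\KL(\hp_t\,\|\,q_t)+\log\E_{q_t}e^{(\lambda_0/d)\|X\|^2}\le H_0+d+2\lambda_0+\log M .
\]
The first inequality is Lemma~\ref{lem:entropy-variational}, and the second uses \eqref{eq:bootstrap-assumption} together with \eqref{eq:ou-expmoment}. Since $\lambda_0\le d/4$, we have $2\lambda_0\le d$. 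Multiplying through by $d/\lambda_0$ then gives $\E_{\hp_t}\|X\|^2\lesssim\frac d{\lambda_0}(H_0+\log M+d)=\mathcal M$. If $M<1$ then $\log M<0$; this is harmless, because $M\ge1$ always holds since the exponent is nonnegative.

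For the score moment I would use Lipschitzness, $\|s_t(x)\|\le\|s_t(0)\|+L_s\|x\|$. It then remains to control $\|s_t(0)\|$ by $L_s\sqrt{\mathcal M}$, and this is the main obstacle. The tool is the identity $\E_{q_t}s_t=0$, which holds by integration by parts under the standing decay conventions. It gives
\[
    \|s_t(0)\|=\|\E_{q_t}[s_t(0)-s_t(X)]\|\le L_s\E_{q_t}\|X\|\le L_s\big(\E_{q_t}\|X\|^2\big)^{1/2}.
\]
To bound the last factor, apply Lemma~\ref{lem:entropy-variational} with $Q=P=q_t$, which is just Jensen's inequality. This yields $\frac{\lambda_0}d\E_{q_t}\|X\|^2\le 2\lambda_0+\log M$, so $\E_{q_t}\|X\|^2\lesssim\mathcal M$. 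Combining,
\[
    \E_{\hp_t}\|s_t(X)\|^2\le 2\|s_t(0)\|^2+2L_s^2\E_{\hp_t}\|X\|^2\lesssim L_s^2\mathcal M .
\]

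The third bound follows from the triangle inequality: $\|x+2s_t(x)\|^2\le2\|x\|^2+8\|s_t(x)\|^2$. Inserting the first two bounds gives $\lesssim(1+L_s^2)\mathcal M\le(1+L_s)^2\mathcal M$. Only the centering step $\E_{q_t}s_t=0$ needs care; it requires exactly the no-boundary-term integration by parts granted in the standing conventions.
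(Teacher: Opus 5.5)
Your proposal is correct and follows the paper's proof essentially step for step. You use the Gibbs variational inequality with $f(x)=\frac{\lambda_0}{d}\|x\|^2$ and Lemma~\ref{lem:ou-expmoment} for the position moment, the centering identity $\E_{q_t}s_t=0$ with Lipschitzness and the Jensen bound on $\E_{q_t}\|Y\|^2$ for the score, and $\|x+2s_t(x)\|^2\le2\|x\|^2+8\|s_t(x)\|^2$ for the last bound; the only cosmetic difference is that you anchor the Lipschitz estimate at $0$, whereas the paper compares $s_t(x)$ with $s_t(Y)$ directly, and both give $\|s_t(x)\|^2\le 2L_s^2(\|x\|^2+\mathcal M)$.
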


\begin{proof}
Lemma~\ref{lem:entropy-variational} with $f(x)=\frac{\lambda_0}d\|x\|^2$, Lemma~\ref{lem:ou-expmoment} and \eqref{eq:bootstrap-assumption} give
\[
    \frac{\lambda_0}d\,\E_{\hp_t}\|X\|^2
    \le\KL(\hp_t\,\|\,q_t)+\log\E_{q_t}e^{(\lambda_0/d)\|X\|^2}
    \le H_0+d+\log M+2\lambda_0
    \le\tfrac32\,\frac{\lambda_0}d\,\mathcal M
\]
which is the first bound; likewise we have
\[
\frac{\lambda_0}d\E_{q_t}\|Y\|^2\le\log\E_{q_t}e^{(\lambda_0/d)\|Y\|^2}\le\log M+2\lambda_0, 
\]
so $\E_{q_t}\|Y\|^2\le\frac d{\lambda_0}(\log M+\frac d2)\le\mathcal M$. For the score, $\E_{q_t}s_t(Y)=\int\nabla q_t=0$, so for every $x$,
\[
    \|s_t(x)\|=\big\|\E_{Y\sim q_t}[s_t(x)-s_t(Y)]\big\|\le L_s\,\E_{q_t}\|x-Y\|
    \le L_s\big(\|x\|+\E_{q_t}\|Y\|\big),
\]
hence $\|s_t(x)\|^2\le2L_s^2(\|x\|^2+\mathcal M)$. Averaging over $x\sim\hp_t$ yields the second bound, and $\|x+2s_t(x)\|^2\le2\|x\|^2+8\|s_t(x)\|^2$ the third.
\end{proof}

\subsection{One-step score error}
\label{app:od-onestep}

\begin{lemma}[Local score mismatch]
\label{lem:local-mismatch}
Assume Assumptions~\ref{ass:lipschitz} and~\ref{ass:expmoment}, and $h\le c/(1+L_s)$ with $c$ as in Lemma~\ref{lem:temporal-score}. Let $t=t_k+\tau$ with $0\le\tau\le h$, and suppose $\KL(\hp_s\,\|\,q_s)\le H_0+d$ for all $s\le t$. Then
\begin{equation}
    \E\|s_{t_k}(X_{t_k})-s_t(X_t)\|^2
    \lesssim
    L^2_sd\tau+L^2_s(1+L_s)^2\mathcal M\tau^2 .
    \label{eq:local-mismatch}
\end{equation}
\end{lemma}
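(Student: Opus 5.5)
We split the mismatch into a spatial part and a temporal part:
\[
    s_{t_k}(X_{t_k})-s_t(X_t)=\big[s_{t_k}(X_{t_k})-s_{t_k}(X_t)\big]+\big[s_{t_k}(X_t)-s_t(X_t)\big],
\]
and bound the two squared norms separately, using $\|a+b\|^2\le2\|a\|^2+2\|b\|^2$.

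\emph{Spatial part.} Assumption~\ref{ass:lipschitz} gives $\|s_{t_k}(X_{t_k})-s_{t_k}(X_t)\|\le L_s\|X_t-X_{t_k}\|$. Write $u:=X_{t_k}+2s_{t_k}(X_{t_k})$. The explicit step \eqref{eq:explicit-step} gives
\[
    X_t-X_{t_k}=(e^\tau-1)\,u+\sqrt{e^{2\tau}-1}\,\zeta,
\]
with $\zeta$ independent of $X_{t_k}$. Since $e^\tau-1\le2\tau$ and $e^{2\tau}-1\le4\tau$ for $\tau\le h\le\frac18$, we get
\[
    \E\|X_t-X_{t_k}\|^2\lesssim\tau^2\,\E_{\hp_{t_k}}\|x+2s_{t_k}(x)\|^2+d\tau .
\]
The hypothesis $\KL(\hp_{t_k}\,\|\,q_{t_k})\le H_0+d$ lets us apply the third bound of Lemma~\ref{lem:bootstrap-moments} at time $t_k$. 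This bounds the expectation by $(1+L_s)^2\mathcal M$. The spatial part is therefore $\lesssim L_s^2d\tau+L_s^2(1+L_s)^2\mathcal M\tau^2$.

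\emph{Temporal part.} This term is evaluated at the point $X_t\sim\hp_t$. Lemma~\ref{lem:temporal-score} applies under the same step-size condition and gives pointwise
\[
    \|s_{t_k}(x)-s_t(x)\|^2\lesssim L_s^2d\tau+L_s^2\tau^2\|x\|^2+(1+L_s)^2\tau^2\|s_t(x)\|^2 .
\]
We average over $x\sim\hp_t$. The bootstrap hypothesis at time $t$ lets us use the first two bounds of Lemma~\ref{lem:bootstrap-moments}: $\E_{\hp_t}\|X\|^2\lesssim\mathcal M$ and $\E_{\hp_t}\|s_t(X)\|^2\lesssim L_s^2\mathcal M$. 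This yields
\[
    L_s^2d\tau+L_s^2\tau^2\mathcal M+(1+L_s)^2L_s^2\mathcal M\tau^2\lesssim L_s^2d\tau+L_s^2(1+L_s)^2\mathcal M\tau^2 .
\]
Summing the two parts gives \eqref{eq:local-mismatch}.

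\emph{Main obstacle.} The only delicate point is that the moment bounds must be invoked at the right times. The spatial increment needs moments under $\hp_{t_k}$, and the temporal term needs moments under $\hp_t$. This is exactly why the hypothesis is imposed for all $s\le t$ and not only at $t$. Beyond that, we only need to check that the constants in Lemma~\ref{lem:temporal-score} and in the exponential bounds are absolute, which holds for $c\le\frac18$.
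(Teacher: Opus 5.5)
Your proposal is correct and follows the paper's proof essentially step for step. It uses the same split into a spatial part (Lipschitzness plus the explicit step \eqref{eq:explicit-step}, with moments under $\hp_{t_k}$ from Lemma~\ref{lem:bootstrap-moments}) and a temporal part (Lemma~\ref{lem:temporal-score} averaged under $\hp_t$, with the first two moment bounds at time $t$), and it invokes the bootstrap hypothesis at exactly the same two times as the paper.
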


\begin{proof}
We split
\begin{equation}
    \E\|s_{t_k}(X_{t_k})-s_t(X_t)\|^2
    \le2\,\E\|s_{t_k}(X_{t_k})-s_{t_k}(X_t)\|^2
    +2\,\E\|s_{t_k}(X_t)-s_t(X_t)\|^2 .
    \label{eq:mismatch-split}
\end{equation}
For the first term, by Lipschitzness we have
\[
\E\|s_{t_k}(X_{t_k})-s_{t_k}(X_t)\|^2\le L_s^2\,\E\|X_t-X_{t_k}\|^2, 
\]
and by \eqref{eq:explicit-step}
\[
    X_t-X_{t_k}=(e^\tau-1)\{X_{t_k}+2s_{t_k}(X_{t_k})\}+\sqrt{e^{2\tau}-1}\,\zeta
\]
with $\zeta\sim N(0,\Id)$ independent of $X_{t_k}$, so the cross term has zero mean. Since $e^\tau-1\le2\tau$ and $e^{2\tau}-1\le4\tau$ for $\tau\le h\le\frac18$, Lemma~\ref{lem:bootstrap-moments} at time $t_k$ implies
\begin{equation}
    \E\|X_t-X_{t_k}\|^2\lesssim(1+L_s)^2\mathcal M\tau^2+d\tau .
    \label{eq:increment}
\end{equation}
For the second term, take $x=X_t$ in Lemma~\ref{lem:temporal-score}, average over $X_t\sim\hp_t$, and use Lemma~\ref{lem:bootstrap-moments} at time $t$:
\begin{align*}
    \E\|s_{t_k}(X_t)-s_t(X_t)\|^2
    &\lesssim L^2_sd\tau+L^2_s\tau^2\,\E\|X_t\|^2+(1+L_s)^2\tau^2\,\E\|s_t(X_t)\|^2\\
    &\lesssim L^2_sd\tau+L^2_s(1+L_s)^2\mathcal M\tau^2 .
\end{align*}
Inserting both bounds into \eqref{eq:mismatch-split} completes the proof.
\end{proof}

\subsection{Proof of Theorem~\ref{thm:overdamped-stationarity}}
\label{app:od-final}

\begin{restated}[Averaged Fisher stationarity, overdamped]{Theorem}{\ref{thm:overdamped-stationarity}}
Suppose Assumptions~\ref{ass:lipschitz}--\ref{ass:expmoment} hold and $H_0<\infty$. There are universal constants $c$ such that if
$h\le c/(1+L_s)$ and $T\big[L^2_sdh+L^2_s(1+L_s)^2\mathcal Mh^2\big]\le cd$,
then $\sup_{0\le t\le T}\KL(\hp_t\,\|\,q_t)<H_0+d$ and
\[
    \frac1T\int_0^T\F(\hp_t\,\|\,q_t)\dd t
    \;\lesssim\;
    \frac{H_0}T
    +L^2_sdh
    +L^2_s(1+L_s)^2\mathcal Mh^2.
\]
\end{restated}

\begin{proof}
Write $\mathrm H(t):=\KL(\hp_t\,\|\,q_t)$, a continuous function of $t$ under the standing conventions. Whenever $\mathrm H(s)\le H_0+d$ for all $s\le t$, Proposition~\ref{prop:master} and Lemma~\ref{lem:local-mismatch} give, for $t\in[t_k,t_{k+1}]$ and $\tau=t-t_k$,
\begin{equation}
    \mathrm H'(t)\le-\frac12\F(\hp_t\,\|\,q_t)+C\big[L^2_sd\tau+L^2_s(1+L_s)^2\mathcal M\tau^2\big] .
    \label{eq:Hprime}
\end{equation}
The source term integrates to $C[L^2_sdh^2/2+L^2_s(1+L_s)^2\mathcal Mh^3/3]$ over a full step, to less over a partial one, and there are $T/h$ steps. So, for every $t\le T$ up to which \eqref{eq:Hprime} is in force,
\begin{equation}
    \mathrm H(t)+\frac12\int_0^t\F(\hp_s\,\|\,q_s)\dd s
    \le H_0+CT\big[L^2_sdh+L^2_s(1+L_s)^2\mathcal Mh^2\big]
    \le H_0+\frac d2 ,
    \label{eq:H-bootstrap}
\end{equation}
where the last inequality is the second condition of the theorem, once $c\le1/2C$.

The bootstrap closes by continuity. Let $t_\star:=\sup\{t\in[0,T]:\mathrm H(s)\le H_0+d\text{ for all }s\le t\}$; since $\mathrm H(0)=H_0$, $t_\star>0$ and $\mathrm H\le H_0+d$ on $[0,t_\star]$, so \eqref{eq:H-bootstrap} applies up to $t_\star$ and gives $\mathrm H(t_\star)\le H_0+d/2$. Were $t_\star<T$, $\mathrm H$ would stay below $H_0+d$ slightly beyond $t_\star$ by continuity, contradicting its definition. Hence $t_\star=T$, \eqref{eq:Hprime} holds on all of $[0,T]$. The inequality \eqref{eq:H-bootstrap} at $t=T$ implies
\[
    \frac12\int_0^T\F(\hp_t\,\|\,q_t)\dd t
    \le H_0+CT\big[L^2_sdh+L^2_s(1+L_s)^2\mathcal Mh^2\big] .
\]
Dividing by $T$ proves \eqref{eq:main-fisher}.
\end{proof}

\section{Proofs for Section~\texorpdfstring{\ref{sec:underdamped-stat}}{5}: the underdamped sampler}
\label{app:ud-proofs}

Throughout this appendix, $\hp_i$ and $\hq_i$ are the grid laws of the numerical chain started from $\hp_0$ and from the exact initialization $\hq_0=\rho_T$ respectively, $\hp_{i,\tau}$ and $\hq_{i,\tau}$ are their within-step interpolations \eqref{eq:interpolation}, and $C$, $E$, $\Pi_v$, $D$ are the matrices of \eqref{eq:matrices}. Assumption~\ref{ass:classical} supplies the positive densities and the finite $H_0$, $F_0$ of the standing conventions; since $\hq_0=q_0$, $H_0=\KL(\hp_0\,\|\,\hq_0)$ and $F_0=\F(\hp_0\,\|\,\hq_0)$. 

The argument follows the two-factor form \eqref{eq:factorization-main} of the integrator. The kick is a deterministic map applied to both laws, so it preserves relative entropy and transforms the relative score by the inverse transpose of its Jacobian (Lemma~\ref{lem:common-map}). The score-free leg is a linear diffusion, along which relative entropy dissipates the velocity part of the relative score, and a twisted Fisher divergence with a time-dependent metric can dissipate the full relative score, at the price of a term involving the mixed Hessian of the reference law, which Assumption~\ref{ass:reference} controls (Proposition~\ref{prop:ou-identities}). Appendix~\ref{app:ud-strategy} combines the two into a Lyapunov functional, lists the four properties (M1)--(M4) that the moving metric must have, and shows that they make the functional decrease at least as fast as $\F(\hp_{i,\tau}\,\|\,\hq_{i,\tau})$ along each leg and match at the grid points, so that the bound telescopes (Proposition~\ref{prop:one-step}). Appendix~\ref{app:ud-metric} motivates and defines the metric, verifies (M1)--(M3), and completes the proof of the theorem; the remaining property (M4), a $2\times2$ matrix inequality, is verified in Appendix~\ref{app:ud-certificate}.

\subsection{Reverse dynamics and the integrator}
\label{app:ud-integrator}

\begin{lemma}[Exact two-factor decomposition of the integrator]
\label{lem:factorization}
Let $\kernel_i$ be the transition kernel of the update \eqref{eq:update}. Then for every input law $\mu$,
\begin{equation}
    \mu\kernel_i=\mathcal R_{h_i}\big((\Psi_{i,h_i})_\#\mu\big),
    \label{eq:factorization}
\end{equation}
with the kick $\Psi_{i,h}$ of \eqref{eq:B-Phi} and the score-free semigroup $\mathcal R_\tau$ of \eqref{eq:score-free}.
\end{lemma}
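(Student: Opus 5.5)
The plan is to compare the two sides at the level of a single input point. Both sides are Markov kernels applied to $\mu$, so it suffices to show that for every deterministic $z\in\R^{2d}$ the one-step law $\delta_z\kernel_i$ equals $\mathcal R_{h_i}(\delta_{\Psi_{i,h_i}(z)})$. Mixing over $z\sim\mu$ then gives \eqref{eq:factorization}, because the pushforward and the semigroup are both linear in the input measure.

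Fix $z$ and write $h=h_i$. By \eqref{eq:update}, $\delta_z\kernel_i$ is the Gaussian law $\mathcal N\big(e^{hC}z+4(\int_0^he^{uC}\dd u)Es_i(z),\,Q_h\big)$. By \eqref{eq:score-free}, $\mathcal R_h(\delta_{\Psi(z)})=\mathcal N\big(e^{hC}\Psi(z),Q_h\big)$ with $\Psi(z)=z+B_hs_i(z)$. The covariances already agree, so the only thing to check is the equality of means:
\[
e^{hC}B_h=4\int_0^he^{uC}\dd u\,E .
\]
This follows from the definition \eqref{eq:B-Phi}: $e^{hC}B_h=4\int_0^he^{(h-r)C}E\dd r$, and the substitution $u=h-r$ gives the right side. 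Nothing here depends on the explicit $2\times2$ block formula for $B_h$.

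As a side consistency check, I would verify that the closed form in \eqref{eq:B-Phi} matches $4\int_0^he^{-rC}E\dd r$. Because $C=I+N$ with $N=\big(\begin{smallmatrix}-1&-1\\1&1\end{smallmatrix}\big)\otimes\Id$ and $N^2=0$, we get $e^{-rC}=e^{-r}(I-rN)$, hence $e^{-rC}E=e^{-r}(r,\,1-r)^\top$. Integrating gives $\big(1-(1+h)e^{-h},\,he^{-h}\big)$, which agrees with \eqref{eq:B-Phi}.

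There is no real obstacle. The only care needed is the disintegration step: the measurability of $z\mapsto\Psi(z)$, which follows from continuity of $s_i$, and the fact that the noise $\xi_i$ is independent of $\hY_i$, so that conditioning on the input is legitimate.
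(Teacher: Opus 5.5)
Your proof is correct and is essentially the paper's argument. Both fix a starting point $z$, note that the one-step law is Gaussian with covariance $Q_{h_i}$, and identify its mean as $e^{h_iC}\big(z+B_{h_i}s_i(z)\big)$ by pulling $e^{h_iC}$ out of the integral that defines $B_{h_i}$; the paper reaches the same expression through variation of constants for the frozen-score linear SDE, while you read it directly off \eqref{eq:update}.
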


\begin{proof}
Fix $z$ and let $f:=4Es_i(z)$, a constant vector. The frozen-score step from $z$ is the time-$h$ value of the linear SDE $\dd X=(CX+f)\dd t+2E\,\dd\bar B$, $X_0=z$, which by variation of constants is
\[
    X_h=e^{hC}\Big(z+\int_0^he^{-rC}f\dd r\Big)+\int_0^he^{(h-u)C}2E\,\dd\bar B_u
    =e^{hC}\big(z+B_hs_i(z)\big)+\zeta,
\]
where $\zeta\sim\mathcal N\big(0,\int_0^he^{vC}De^{vC^\top}\dd v\big)=\mathcal N(0,Q_h)$. This is $\mathcal R_h$ applied to the kicked point $\Psi_{i,h}(z)$.
\end{proof}

\subsection{Two laws under a common map}
\label{app:ud-two-density}

\begin{lemma}[Common deterministic pushforward]
\label{lem:common-map}
Let $\mathsf T\colon\R^{2d}\to\R^{2d}$ be a $C^1$ diffeomorphism, and let $\mu^+=\mathsf T_\#\mu$, $\nu^+=\mathsf T_\#\nu$ for positive densities $\mu,\nu$. With $r=\log(\mu/\nu)$ and $r^+=\log(\mu^+/\nu^+)$,
\begin{equation}
    r^+\circ\mathsf T=r,
    \qquad
    \KL(\mu^+\,\|\,\nu^+)=\KL(\mu\,\|\,\nu),
    \qquad
    (\nabla r^+)\circ\mathsf T=(\nabla\mathsf T)^{-\top}\nabla r .
    \label{eq:common-map-identities}
\end{equation}
Consequently, for every symmetric matrix $M$,
\begin{equation}
    \F_M(\mu^+\,\|\,\nu^+)
    =\int\mu\,\Big\langle\nabla r,\,(\nabla\mathsf T)^{-1}M(\nabla\mathsf T)^{-\top}\nabla r\Big\rangle .
    \label{eq:twist-pullback}
\end{equation}
\end{lemma}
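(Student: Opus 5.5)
The plan is to start from the change-of-variables formula for pushforward densities and derive all three identities in \eqref{eq:common-map-identities} from it, then obtain \eqref{eq:twist-pullback} by one more change of variables. Since $\mathsf T$ is a $C^1$ diffeomorphism, the pushforward densities satisfy
\[
    \mu^+(\mathsf T(z))\,|\det\nabla\mathsf T(z)|=\mu(z),
    \qquad
    \nu^+(\mathsf T(z))\,|\det\nabla\mathsf T(z)|=\nu(z).
\]
Both are positive because $\mu,\nu$ are. Dividing the two relations cancels the Jacobian determinant, which is the key point: the same map acts on both laws. Taking logarithms then gives $r^+\circ\mathsf T=r$.

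For the KL statement, write $\KL(\mu^+\,\|\,\nu^+)=\int\mu^+r^+$ and substitute $w=\mathsf T(z)$. The density relation turns $\mu^+(w)\,\dd w$ into $\mu(z)\,\dd z$, and $r^+(\mathsf T(z))=r(z)$, so the integral becomes $\int\mu\,r=\KL(\mu\,\|\,\nu)$. More generally, $\int\mu^+\varphi=\int\mu\,(\varphi\circ\mathsf T)$ for any integrable $\varphi$; this is just the definition of pushforward, and it will be reused below.

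For the gradient identity, differentiate $r^+(\mathsf T(z))=r(z)$ in $z$. The chain rule gives $(\nabla\mathsf T(z))^\top(\nabla r^+)(\mathsf T(z))=\nabla r(z)$, with the convention $(\nabla F)_{ij}=\partial_jF_i$. Inverting the transpose of the Jacobian, which is invertible since $\mathsf T$ is a diffeomorphism, yields $(\nabla r^+)\circ\mathsf T=(\nabla\mathsf T)^{-\top}\nabla r$.

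Finally, apply the pushforward identity with $\varphi=\ip{\nabla r^+}{M\nabla r^+}$ to get
\[
    \F_M(\mu^+\,\|\,\nu^+)=\int\mu\,\ip{(\nabla r^+)\circ\mathsf T}{M(\nabla r^+)\circ\mathsf T}.
\]
Inserting the gradient identity and moving $(\nabla\mathsf T)^{-\top}$ across the inner product, where it becomes $(\nabla\mathsf T)^{-1}$ on the left, gives \eqref{eq:twist-pullback}.

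There is no real obstacle here. The only points needing care are getting the transpose convention right in the chain rule, and justifying the positivity of the pushforward densities and the integrability needed for the changes of variables. Both of these are covered by the standing conventions.
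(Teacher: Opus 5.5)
Your proposal is correct and follows essentially the same route as the paper's proof. In both, the Jacobian determinants cancel in the density ratio to give $r^+\circ\mathsf T=r$, KL invariance follows by integrating this against the pushforward, the chain rule gives the transformed gradient, and inserting that gradient into $\F_M(\mu^+\,\|\,\nu^+)=\int\mu\,\ip{(\nabla r^+)\circ\mathsf T}{M(\nabla r^+)\circ\mathsf T}$ yields \eqref{eq:twist-pullback}.
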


\begin{proof}
By the change of variables formula, $\mu^+\circ\mathsf T=\mu/|\det\nabla\mathsf T|$ and likewise for $\nu$; the Jacobians cancel in the ratio, which is the first identity. The second is obtained by integrating $r^+$. Differentiating the first by the chain rule gives the third, and inserting the third into $\F_M(\mu^+\,\|\,\nu^+)=\int\mu\,\ip{(\nabla r^+)\circ\mathsf T}{M\,(\nabla r^+)\circ\mathsf T}$ implies \eqref{eq:twist-pullback}.
\end{proof}

\subsection{Identities on the score-free leg}
\label{app:ud-ou-leg}

On the score-free leg both interpolated densities in \eqref{eq:interpolation} solve
\begin{equation}
    \partial_\tau\nu_\tau=-\nabla\cdot(\nu_\tau\,Cz)+2\Delta_v\nu_\tau ,
    \label{eq:ou-pde}
\end{equation}
the Fokker--Planck equation of \eqref{eq:score-free}. For a pair $(\mu_\tau,\nu_\tau)$ of solutions put
\[
    r_\tau=\log\frac{\mu_\tau}{\nu_\tau},
    \qquad
    g_\tau=\nabla r_\tau,
    \qquad
    g_{v,\tau}=E^\top g_\tau=\nabla_vr_\tau ,
\]
and let $\nabla_z\nabla_vr_\tau$ and $\nabla_z\nabla_v\log\nu_\tau$ denote the $2d\times d$ matrices of mixed second derivatives, the latter being the mixed Hessian of Assumption~\ref{ass:reference}. Unlike the exact reverse flow of Appendix~\ref{app:decay-proofs}, whose drift contains the reference score, the score-free leg is oblivious to the pair it transports; the price is the last term of \eqref{eq:twisted-identity} below, in which the reference law enters through its mixed Hessian.

\begin{proposition}[Entropy and twisted-Fisher identities on the score-free leg]
\label{prop:ou-identities}
Let $(\mu_\tau)_\tau$ and $(\nu_\tau)_\tau$ be positive densities solving \eqref{eq:ou-pde}. Then
\begin{equation}
    \frac{\dd}{\dd\tau}\KL(\mu_\tau\,\|\,\nu_\tau)=-2\,\Fv(\mu_\tau\,\|\,\nu_\tau),
    \label{eq:entropy-identity}
\end{equation}
and for every $C^1$ family $\tau\mapsto M(\tau)$ of symmetric matrices,
\begin{equation}
    \begin{aligned}
    \frac{\dd}{\dd\tau}\F_{M(\tau)}(\mu_\tau\,\|\,\nu_\tau)
    ={}&\int\mu_\tau\,\Big\langle g_\tau,\big(\dot M(\tau)-CM(\tau)-M(\tau) C^\top\big)g_\tau\Big\rangle\\
    &-4\int\mu_\tau\operatorname{tr}\!\Big((\nabla_z\nabla_vr_\tau)^\top M(\tau)\,\nabla_z\nabla_vr_\tau\Big)\\
    &+8\int\mu_\tau\,\Big\langle M(\tau) g_\tau,\,(\nabla_z\nabla_v\log\nu_\tau)\,g_{v,\tau}\Big\rangle.
    \end{aligned}
    \label{eq:twisted-identity}
\end{equation}
\end{proposition}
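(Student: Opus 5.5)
The plan is to reduce both identities to the generic machinery of Appendix~\ref{app:decay-proofs}, adapted to the fact that here \emph{both} laws solve the same Fokker--Planck equation \eqref{eq:ou-pde} with drift $Cz$ and diffusion matrix $2\Pi_v$. The reference law is no longer the one whose score appears in the drift.

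\textbf{Entropy identity.} I write \eqref{eq:ou-pde} as a continuity equation with velocity $Cz-2\Pi_v\nabla\log\nu_\tau$, and likewise for $\mu_\tau$. The velocity difference is $-2\Pi_v g_\tau$. Lemma~\ref{lem:kl-identity} then gives $-2\int\mu_\tau\|g_{v,\tau}\|^2$ immediately, which is \eqref{eq:entropy-identity}.

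\textbf{Twisted-Fisher identity: transport equations.} First I derive a transport equation for $h=\mu/\nu$. Using the $\nu$-reversible-type generator $L f=\nu^{-1}\nabla\cdot(\nu\,2\Pi_v\nabla f)=2\Delta_vf+4\nabla_v\log\nu\cdot\nabla_vf$, and the transport velocity $w=Cz-2\Pi_v\nabla\log\nu$ of $\nu$, the computation of Lemma~\ref{lem:master-transport} applies verbatim with $G=2\Pi_v$. It gives $(\partial_\tau+w\cdot\nabla)h=Lh$ and $(\partial_\tau+w\cdot\nabla)r=Lr+2\|g_v\|^2$.

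Taking gradients, the zeroth-order coefficient becomes $\nabla(a-w)^\top g$ with $a=2\Pi_v\nabla\log\nu$. Here $a-w=-Cz+4\Pi_v\nabla\log\nu$; this is \emph{not} data-free, unlike the exact reverse flow. Hence
\[
(\partial_\tau+w\cdot\nabla)g=Lg-C^\top g+4(\nabla_z\nabla_v\log\nu)\,g_v+4S\Pi_vg,
\]
where $S=\nabla^2 r$. The term $4(\nabla_z\nabla_v\log\nu)g_v$ comes from $\nabla(4\Pi_v\nabla\log\nu)^\top g$, since $\Pi_v\nabla\log\nu$ involves only $\nabla_v\log\nu$.

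\textbf{Twisted-Fisher identity: differentiating the quadratic form.} Next I repeat the proof of Proposition~\ref{prop:master-dissipation} with $P$ replaced by $M(\tau)$. Differentiating $\int\nu hF$ for $F=g^\top M(\tau)g$ produces the extra term $\int\mu\,g^\top\dot M g$ from the explicit time dependence. The $Lg$ terms and the mixed $S\Pi_v$ terms cancel exactly as before. The $-C^\top g$ term yields $-\int\mu\,g^\top(CM+MC^\top)g$. The curvature term becomes $-2\operatorname{tr}(S\,2\Pi_v\,S\,M)=-4\operatorname{tr}((\nabla_z\nabla_vr)^\top M\nabla_z\nabla_vr)$, because $S\Pi_vS=(SE)(SE)^\top$ and $SE=\nabla_z\nabla_vr$. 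The new data-dependent term appears as $2\cdot4\int\mu\,\langle Mg,(\nabla_z\nabla_v\log\nu)g_v\rangle$, giving the coefficient $8$.

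\textbf{Main obstacle.} The main difficulty is bookkeeping of signs and transposes in the gradient step, in particular identifying $\nabla(\Pi_v\nabla\log\nu)^\top g$ with $(\nabla_z\nabla_v\log\nu)g_v$ and checking that the mixed terms still cancel with the factor $2\Pi_v$. I would verify the constants on a Gaussian pair as a sanity check.
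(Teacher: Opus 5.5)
\textbf{Verdict.} Your proposal is correct. It organizes the twisted-Fisher computation differently from the paper, and there is one slip in the generator formula to fix.

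\textbf{Comparison of routes.} The paper does not reuse Appendix~\ref{app:decay-proofs}. It divides \eqref{eq:ou-pde} by each density to get the Eulerian equation \eqref{eq:ratio-pde} for $r_\tau$, and takes its gradient to get \eqref{eq:score-pde}. It then differentiates $\int\mu_\tau\langle g_\tau,Mg_\tau\rangle$ directly through the continuity equation of $\mu_\tau$, integrating the $\Delta_v$ term by parts. The bookkeeping there is that the $S_\tau Cz$ terms cancel ($2-2$) and the $S_\tau E\nabla_v\log\mu_\tau$ terms cancel ($-4+8-4$).

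You instead transport along the reference velocity $w=Cz-2\Pi_v\nabla\log\nu_\tau$ with the $\nu_\tau$-symmetric generator. Lemma~\ref{lem:master-transport} and the proof of Proposition~\ref{prop:master-dissipation} then carry over with $G=2\Pi_v$, except at the two places you correctly flag:
\begin{itemize}
\item the identification $a-w=b$ becomes $a-w=-Cz+4\Pi_v\nabla\log\nu_\tau$, whose transposed Jacobian applied to $g$ is $-C^\top g+4(\nabla_z\nabla_v\log\nu_\tau)g_v$;
\item the time dependence of $M$ adds $\int\mu_\tau\langle g,\dot Mg\rangle$.
\end{itemize}
The cancellations you claim do hold. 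The $Lg$ terms cancel. The mixed terms $-8\,g^\top\Pi_vSMg+8\,g^\top MS\Pi_vg$ cancel because they are the same scalar transposed. Finally, $\operatorname{tr}(S\Pi_vSM)=\operatorname{tr}((SE)^\top M\,SE)$ gives the trace term.

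Your route is shorter. It also makes explicit the paper's remark that the mixed-Hessian term is the price of a drift that does not contain the reference score: it is precisely the zeroth-order coefficient $\nabla(a-w)$ failing to be data-free. The paper's route is self-contained and avoids re-auditing each step of the generic lemma in a setting where its last identification fails.

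\textbf{The slip.} The correct expansion is $\nu^{-1}\nabla\cdot(\nu\,2\Pi_v\nabla f)=2\Delta_vf+2\nabla_v\log\nu\cdot\nabla_vf$, not $2\Delta_vf+4\nabla_v\log\nu\cdot\nabla_vf$. Your subsequent choice $a=2\Pi_v\nabla\log\nu$ is the correct coefficient, and everything downstream uses it. Taken literally, however, the coefficient $4$ would destroy $\int\nu\,Lf=0$. It would also turn the $4\langle g_v,\nabla_v\log\nu\rangle$ in the $r$-equation into $6\langle g_v,\nabla_v\log\nu\rangle$.
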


\begin{proof}
Equation \eqref{eq:ou-pde} is the continuity equation with velocity field
\[
    u^\nu_\tau=Cz-2E\nabla_v\log\nu_\tau ,
    \qquad\text{so that}\qquad
    u^\mu_\tau-u^\nu_\tau=-2Eg_{v,\tau},
\]
and Lemma~\ref{lem:kl-identity} implies
\[
\frac{\dd}{\dd\tau}\KL(\mu_\tau\,\|\,\nu_\tau)=\int\mu_\tau\,\ip{g_\tau}{-2Eg_{v,\tau}}=-2\int\mu_\tau\|g_{v,\tau}\|^2 =-2\,\Fv(\mu_\tau\,\|\,\nu_\tau).
\]

For the second identity, divide \eqref{eq:ou-pde} by $\nu_\tau$. Using
\[
    \nabla\cdot(\nu_\tau Cz)=\nu_\tau\operatorname{tr}C+Cz\cdot\nabla\nu_\tau,
    \qquad
    \frac{\Delta_v\nu_\tau}{\nu_\tau}=\Delta_v\log\nu_\tau+\|\nabla_v\log\nu_\tau\|^2,
\]
we get
\[
    \partial_\tau\log\nu_\tau=-\operatorname{tr}C-Cz\cdot\nabla\log\nu_\tau+2\Delta_v\log\nu_\tau+2\|\nabla_v\log\nu_\tau\|^2 .
\]
Subtracting this from the same equation for $\mu_\tau$, and writing $\|\nabla_v\log\mu_\tau\|^2-\|\nabla_v\log\nu_\tau\|^2=\|g_{v,\tau}\|^2+2\ip{g_{v,\tau}}{\nabla_v\log\nu_\tau}$,
\begin{equation}
    \partial_\tau r_\tau=-Cz\cdot\nabla r_\tau+2\Delta_vr_\tau+2\|g_{v,\tau}\|^2+4\ip{g_{v,\tau}}{\nabla_v\log\nu_\tau}.
    \label{eq:ratio-pde}
\end{equation}
Let $S_\tau=\nabla^2r_\tau$, so that $\nabla_z\nabla_vr_\tau=S_\tau E$, and write $\mathcal H_\tau=\nabla_z\nabla_v\log\nu_\tau$. Taking the gradient of \eqref{eq:ratio-pde} with
\[
    \nabla(Cz\cdot\nabla r_\tau)=C^\top g_\tau+S_\tau Cz,
    \qquad
    \nabla\|g_{v,\tau}\|^2=2S_\tau Eg_{v,\tau},
\]
\[
    \nabla\ip{g_{v,\tau}}{\nabla_v\log\nu_\tau}=S_\tau E\nabla_v\log\nu_\tau+\mathcal H_\tau g_{v,\tau},
\]
we obtain
\begin{equation}
    \begin{aligned}
    \partial_\tau g_\tau
    &=-C^\top g_\tau-S_\tau Cz+2\Delta_vg_\tau+4S_\tau E\big(g_{v,\tau}+\nabla_v\log\nu_\tau\big)+4\mathcal H_\tau g_{v,\tau}\\
    &=-C^\top g_\tau-S_\tau Cz+2\Delta_vg_\tau+4S_\tau E\nabla_v\log\mu_\tau+4\mathcal H_\tau g_{v,\tau} .
    \end{aligned}
    \label{eq:score-pde}
\end{equation}

Now differentiate $\F_{M(\tau)}(\mu_\tau\,\|\,\nu_\tau)=\int\mu_\tau\,\ip{g_\tau}{M(\tau) g_\tau}$. Write $M=M(\tau)$ for brevity. Using the continuity equation of $\mu_\tau$ for the first term and $\nabla\ip{g_\tau}{Mg_\tau}=2S_\tau Mg_\tau$,
\[
    \frac{\dd}{\dd\tau}\F_{M(\tau)}(\mu_\tau\,\|\,\nu_\tau)
    =2\int\mu_\tau\,\ip{S_\tau Mg_\tau}{u^\mu_\tau}
    +\int\mu_\tau\,\ip{g_\tau}{\dot M g_\tau}
    +2\int\mu_\tau\,\ip{M g_\tau}{\partial_\tau g_\tau}.
\]
Since $u^\mu_\tau=Cz-2E\nabla_v\log\mu_\tau$ and $S_\tau$, $M$ are symmetric, the first term equals
\[
    2\int\mu_\tau\,\ip{Mg_\tau}{S_\tau Cz}-4\int\mu_\tau\,\ip{Mg_\tau}{S_\tau E\nabla_v\log\mu_\tau}.
\]
In the third term we insert \eqref{eq:score-pde} and integrate the Laplacian by parts: as $\partial_{v_j}g_\tau=S_\tau e_{d+j}$ and $\partial_{v_j}(\mu_\tau\,M g_\tau)=\mu_\tau\,\partial_{v_j}\!\log\mu_\tau\;M g_\tau+\mu_\tau\,M S_\tau e_{d+j}$,
\[
    4\int\mu_\tau\,\ip{M g_\tau}{\Delta_vg_\tau}
    =-4\int\mu_\tau\,\ip{M g_\tau}{S_\tau E\nabla_v\log\mu_\tau}-4\int\mu_\tau\operatorname{tr}\!\big((S_\tau E)^\top M S_\tau E\big).
\]
Collecting everything,
\begin{align*}
    \frac{\dd}{\dd\tau}\F_{M(\tau)}(\mu_\tau\,\|\,\nu_\tau)
    ={}&\int\mu_\tau\,\ip{g_\tau}{\dot M g_\tau}
    -2\int\mu_\tau\,\ip{M g_\tau}{C^\top g_\tau}
    +(2-2)\int\mu_\tau\,\ip{Mg_\tau}{S_\tau Cz}\\
    &+(-4+8-4)\int\mu_\tau\,\ip{Mg_\tau}{S_\tau E\nabla_v\log\mu_\tau}\\
    &-4\int\mu_\tau\operatorname{tr}\!\big((S_\tau E)^\top M S_\tau E\big)
    +8\int\mu_\tau\,\ip{M g_\tau}{\mathcal H_\tau g_{v,\tau}} .
\end{align*}
Since $2\ip{Mg_\tau}{C^\top g_\tau}=\ip{g_\tau}{(CM+MC^\top)g_\tau}$, this is \eqref{eq:twisted-identity}.
\end{proof}

\subsection{A Lyapunov functional with a moving metric}
\label{app:ud-strategy}

\paragraph{Motivation.}
By \eqref{eq:entropy-identity}, relative entropy dissipates only the velocity part $\Fv$ of the relative score. To reach the full $\F$ we add, as in the hypocoercive functional of Section~\ref{sec:decay}, a twisted Fisher divergence $\F_M$. Identity \eqref{eq:twisted-identity} shows what each ingredient contributes along a leg: the \emph{drift term} $\ip g{(\dot M-CM-MC^\top)g}$ is the only one that can dissipate in every direction, including position; the \emph{trace term} has the right sign as soon as $M\succeq0$; and the \emph{mixed-Hessian term} has no sign, but by Assumption~\ref{ass:reference} it costs only a multiple of $\Fv$, which the entropy pays.

A constant metric does not survive the kick. By \eqref{eq:twist-pullback}, the kick replaces the metric $M$ by $(\nabla\Psi)^{-1}M(\nabla\Psi)^{-\top}$, which can be larger than $M$ in some directions. We therefore let the metric move within each leg: it starts at a matrix $M(0)$ slightly below a fixed grid metric $\Theta$, low enough that the increase by the kick is absorbed by the decrease in the metric $\Theta-M(0)$, and returns to $\Theta$ at the end of the leg, so that consecutive legs match at the grid points. The deficit $\Theta-M(0)$ is a budget that the dissipation along the leg must pay back.

\paragraph{Requirements.}
Fix a symmetric positive definite matrix $\Theta$ (the grid metric, chosen in \eqref{eq:S-P}) and a weight $a>0$. Fix a step $i$ and write $h=h_i$ and $\Psi=\Psi_{i,h}$. A $C^1$ family $\{M(\tau)\}_{0\le\tau\le h}$ of symmetric $2d\times2d$ matrices is an \emph{admissible metric} for step $i$ if the following hold.
\begin{enumerate}
    \item[(M1)] \emph{Kick compatibility:} for every $z\in\R^{2d}$,
    \[
        M(0)\preceq\nabla\Psi(z)\,\Theta\,\nabla\Psi(z)^\top .
    \]
    \item[(M2)] \emph{Matching at the grid:} $M(h)=\Theta$.
    \item[(M3)] \emph{Positivity:} $M(\tau)\succeq0$ for $0\le\tau\le h$.
    \item[(M4)] \emph{Dissipation:} for $0\le\tau\le h$,
    \[
        -\big(\dot M(\tau)-CM(\tau)-M(\tau)C^\top\big)-\frac14M(\tau)^2+(2a-64\Lambda^2)\Pi_v\succeq\Itwo .
    \]
\end{enumerate}
Given an admissible metric for step $i$, define on the $i$th leg and at the grid points
\begin{equation}
 \begin{aligned}
 \mathcal Y_i(\tau)&:=
 a\KL(\hp_{i,\tau}\,\|\,\hq_{i,\tau})
 +\F_{M(\tau)}(\hp_{i,\tau}\,\|\,\hq_{i,\tau}),
 \qquad0\le\tau\le h_i,\\
 \mathcal Y_i^{\mathrm{grid}}&:=a\KL(\hp_i\,\|\,\hq_i)+\F_\Theta(\hp_i\,\|\,\hq_i).
 \end{aligned}
 \label{eq:leg-Lyapunov}
\end{equation}

(M1) and (M2) make $\mathcal Y_i$ start below $\mathcal Y_i^{\mathrm{grid}}$ and end at $\mathcal Y_{i+1}^{\mathrm{grid}}$, while (M3) and (M4) make it decrease at least as fast as $\F$ in between.

\begin{proposition}[One-step dissipation]
\label{prop:one-step}
Suppose $\Psi_{i,h_i}$ is a $C^1$ diffeomorphism and $\{M(\tau)\}$ is an admissible metric for step $i$. Then, for $0<\tau<h_i$,
\begin{equation}
 \mathcal Y_i'(\tau)\le-\F(\hp_{i,\tau}\,\|\,\hq_{i,\tau}),
 \label{eq:leg-dissipation}
\end{equation}
and consequently
\begin{equation}
 \mathcal Y_{i+1}^{\mathrm{grid}}
 +\int_0^{h_i}\F(\hp_{i,\tau}\,\|\,\hq_{i,\tau})\dd\tau
 \le\mathcal Y_i^{\mathrm{grid}}.
 \label{eq:grid-dissipation}
\end{equation}
If this holds for every $0\le i<N$ with the same $a$ and $\Theta$, then
\begin{equation}
 \sum_{i=0}^{N-1}\int_0^{h_i}\F(\hp_{i,\tau}\,\|\,\hq_{i,\tau})\dd\tau
 +\mathcal Y_N^{\mathrm{grid}}
 \le\mathcal Y_0^{\mathrm{grid}}
 =aH_0+\F_\Theta(\hp_0\,\|\,\hq_0) .
 \label{eq:full-telescope}
\end{equation}
\end{proposition}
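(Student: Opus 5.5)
The argument has three parts: the derivative bound \eqref{eq:leg-dissipation} along a leg, the two endpoint comparisons that turn it into \eqref{eq:grid-dissipation}, and a telescoping sum.

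\textbf{Step 1: derivative along a leg.} On $(0,h_i)$ both $\hp_{i,\tau}$ and $\hq_{i,\tau}$ solve \eqref{eq:ou-pde}, so Proposition~\ref{prop:ou-identities} applies with $\mu_\tau=\hp_{i,\tau}$ and $\nu_\tau=\hq_{i,\tau}$. This gives
\[
 \mathcal Y_i'(\tau)=-2a\,\Fv+\int\hp_{i,\tau}\ip{g}{(\dot M-CM-MC^\top)g}-4\int\hp_{i,\tau}\operatorname{tr}\big((SE)^\top M SE\big)+8\int\hp_{i,\tau}\ip{Mg}{\mathcal H g_v}.
\]
By (M3) the trace term is nonpositive, so we drop it.

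The mixed-Hessian term is handled by Young's inequality. Assumption~\ref{ass:reference} gives $\|\mathcal H g_v\|\le L_{\mathrm{ref}}\|g_v\|\le\Lambda\|g_v\|$, and therefore
\[
8\ip{Mg}{\mathcal H g_v}\le\tfrac14\|Mg\|^2+64\Lambda^2\|g_v\|^2=\tfrac14\ip{g}{M^2g}+64\Lambda^2\ip{g}{\Pi_v g}.
\]
Substituting this bound yields
\[
\mathcal Y_i'\le\int\hp_{i,\tau}\Big\langle g,\Big(\dot M-CM-MC^\top+\tfrac14M^2-(2a-64\Lambda^2)\Pi_v\Big)g\Big\rangle.
\]
By (M4) the matrix in this integrand is $\preceq-\Itwo$. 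Hence $\mathcal Y_i'\le-\F(\hp_{i,\tau}\,\|\,\hq_{i,\tau})$, which is \eqref{eq:leg-dissipation}. The constants $\tfrac14$ and $64\Lambda^2$ in (M4) are exactly what this Young split produces.

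\textbf{Step 2: endpoints.} At $\tau=0$ we have $\hp_{i,0}=\Psi_\#\hp_i$ and $\hq_{i,0}=\Psi_\#\hq_i$. Lemma~\ref{lem:common-map} applied with $\mathsf T=\Psi$ gives two facts. First, the KL divergence is unchanged by the kick. Second, by \eqref{eq:twist-pullback},
\[
\F_{M(0)}(\hp_{i,0}\|\hq_{i,0})=\int\hp_i\ip{\nabla r}{(\nabla\Psi)^{-1}M(0)(\nabla\Psi)^{-\top}\nabla r}.
\]
Conjugating (M1) by $(\nabla\Psi)^{-1}$ gives $(\nabla\Psi)^{-1}M(0)(\nabla\Psi)^{-\top}\preceq\Theta$ pointwise. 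Therefore $\mathcal Y_i(0)\le\mathcal Y_i^{\mathrm{grid}}$. At $\tau=h_i$, (M2) together with $\hp_{i,h_i}=\hp_{i+1}$ and $\hq_{i,h_i}=\hq_{i+1}$ gives $\mathcal Y_i(h_i)=\mathcal Y^{\mathrm{grid}}_{i+1}$.

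\textbf{Step 3: integrate and telescope.} Integrating \eqref{eq:leg-dissipation} over $(0,h_i)$ and combining it with the two endpoint relations gives \eqref{eq:grid-dissipation}. This step needs $\mathcal Y_i$ to be absolutely continuous on $[0,h_i]$, which we take from the standing regularity conventions. Summing \eqref{eq:grid-dissipation} over $i$ telescopes. Finally, $\hq_0=q_0$, so $\KL(\hp_0\|\hq_0)=H_0$ and $\mathcal Y_0^{\mathrm{grid}}=aH_0+\F_\Theta(\hp_0\|\hq_0)$, which is \eqref{eq:full-telescope}.

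The main obstacle is the bookkeeping in the cross term, namely matching the Young split to the exact constants in (M4). The rest is direct assembly of Lemma~\ref{lem:common-map} and Proposition~\ref{prop:ou-identities}.
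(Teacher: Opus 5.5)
Your proposal is correct and follows the paper's proof essentially step for step. Both arguments bound the mixed-Hessian term by the same Young split, drop the trace term via (M3), and close the bound with (M4). Both then handle the endpoints with Lemma~\ref{lem:common-map}, (M1) and (M2), and finish by integrating and telescoping using $\hq_0=q_0$.
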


\begin{proof}
Write $h=h_i$, $M=M(\tau)$, and apply Proposition~\ref{prop:ou-identities} to $\mu_\tau=\hp_{i,\tau}$ and $\nu_\tau=\hq_{i,\tau}$, with
\[
    g_\tau=\nabla\log\frac{\hp_{i,\tau}}{\hq_{i,\tau}},
    \qquad
    g_{v,\tau}=E^\top g_\tau,
    \qquad
    \mathcal H_\tau=\nabla_z\nabla_v\log\hq_{i,\tau}.
\]
By Assumption~\ref{ass:reference}, $\|\mathcal H_\tau\|_\op\le\Lambda$, so the mixed-Hessian term of \eqref{eq:twisted-identity} is bounded pointwise by
\[
 8\ip{Mg_\tau}{\mathcal H_\tau g_{v,\tau}}
 \le8\Lambda\|Mg_\tau\|\,\|g_{v,\tau}\|
 \le\frac14\|Mg_\tau\|^2+64\Lambda^2\|g_{v,\tau}\|^2 .
\]
The trace term of \eqref{eq:twisted-identity} is nonpositive by (M3). Adding $a$ times \eqref{eq:entropy-identity}, which contributes $-2a\int\hp_{i,\tau}\|g_{v,\tau}\|^2$, and writing $\|Mg_\tau\|^2=\ip{g_\tau}{M^2g_\tau}$ and $\|g_{v,\tau}\|^2=\ip{g_\tau}{\Pi_vg_\tau}$, we get
\[
 \mathcal Y_i'(\tau)\le\int\hp_{i,\tau}\,\Big\langle g_\tau,\Big(\dot M-CM-MC^\top+\frac14M^2-(2a-64\Lambda^2)\Pi_v\Big)g_\tau\Big\rangle\dd z
 \le-\int\hp_{i,\tau}\|g_\tau\|^2
\]
by (M4). This is \eqref{eq:leg-dissipation}.

At the start of the leg, $\hp_{i,0}=(\Psi_{i,h})_\#\hp_i$ and $\hq_{i,0}=(\Psi_{i,h})_\#\hq_i$. Relative entropy is invariant (Lemma~\ref{lem:common-map}), and (M1) is equivalent to $(\nabla\Psi_{i,h})^{-1}M(0)(\nabla\Psi_{i,h})^{-\top}\preceq\Theta$, so \eqref{eq:twist-pullback} shows
\[
 \F_{M(0)}(\hp_{i,0}\,\|\,\hq_{i,0})\le\F_\Theta(\hp_i\,\|\,\hq_i),
 \qquad\text{hence}\qquad
 \mathcal Y_i(0)\le\mathcal Y_i^{\mathrm{grid}}.
\]
At the end of the leg, $M(h)=\Theta$ by (M2) and $\hp_{i,h}=\hp_{i+1}$, $\hq_{i,h}=\hq_{i+1}$, so $\mathcal Y_i(h)=\mathcal Y_{i+1}^{\mathrm{grid}}$. Integrating \eqref{eq:leg-dissipation} over $[0,h]$ gives \eqref{eq:grid-dissipation}, and summing \eqref{eq:grid-dissipation} over $i=0,\dots,N-1$ gives \eqref{eq:full-telescope}, since $\hq_0=q_0$.
\end{proof}

Theorem~\ref{thm:underdamped-stationarity} is thus reduced to constructing, for every step, an admissible metric with $a\lesssim\Lambda^2$ and a grid metric $\Theta$ comparable to $\Itwo$. This is done in Appendix~\ref{app:ud-metric}, with $a=88\Lambda^2$ and $\Theta\preceq4\Itwo$.

\begin{remark}[The velocity part alone]
\label{rem:velocity-identity}
Relative entropy is invariant under the common kick (a diffeomorphism by Lemma~\ref{lem:combined-map}) and satisfies \eqref{eq:entropy-identity} on every leg, so without any metric
\begin{equation}
 2\sum_{i=0}^{N-1}\int_0^{h_i}\Fv(\hp_{i,\tau}\,\|\,\hq_{i,\tau})\dd\tau
 =H_0-\KL(\hp_N\,\|\,\hq_N)\le H_0 ,
 \label{eq:velocity-stationarity}
\end{equation}
the discrete counterpart of \eqref{eq:integrated-dissipation-ud}. This identity does not use Assumption~\ref{ass:reference}; the moving metric is what upgrades $\Fv$ to $\F$.
\end{remark}

\subsection{Construction of the moving metric}
\label{app:ud-metric}

All matrices in this subsection and the next have scalar $d\times d$ blocks, $X=\bar X\otimes\Id$ with $\bar X\in\R^{2\times2}$. We identify $X$ with its $2\times2$ representative $\bar X$, write $X_{xx}$, $X_{xv}$, $X_{vv}$ for its entries, and note that $X\succeq Y$ if and only if $\bar X\succeq\bar Y$. Fix one step, of length $h$, drop the index $i$, and write $\Psi_h=\Psi_{i,h}$ and $s=s_i$.

\paragraph{Definition.}
Set
\begin{equation}
 \Theta:=\begin{pmatrix}\frac12&-1\\-1&3\end{pmatrix},
 \qquad A:=C\Theta+\Theta C^\top
 =\begin{pmatrix}2&-\frac92\\-\frac92&10\end{pmatrix},
 \qquad
 \Theta^{-1}=\begin{pmatrix}6&2\\2&1\end{pmatrix},
 \label{eq:S-P}
\end{equation}
so that $0\prec\Theta\preceq4\Itwo$, and
\begin{equation}
 \begin{aligned}
 P_h&:=\frac h3\Theta^2+\frac{3\Lambda^2}{h}B_hB_h^\top,\\
 M_h(\tau)&:=\Big(1-\frac\tau h\Big)e^{\tau C}(\Theta-P_h)e^{\tau C^\top}+\frac\tau h\,e^{(\tau-h)C}\Theta e^{(\tau-h)C^\top},
 \qquad0\le\tau\le h .
 \end{aligned}
 \label{eq:moving-metric}
\end{equation}
Thus
\begin{equation}
 M_h(0)=\Theta-P_h,\qquad M_h(h)=\Theta ,
 \label{eq:metric-endpoints}
\end{equation}
and, since $e^{-\tau C}M_h(\tau)e^{-\tau C^\top}$ is affine in $\tau$ with derivative $\frac1h\big(e^{-hC}\Theta e^{-hC^\top}-\Theta+P_h\big)$,
\begin{equation}
 \dot M_h(\tau)-CM_h(\tau)-M_h(\tau)C^\top
 =\frac1h\,e^{\tau C}\big(e^{-hC}\Theta e^{-hC^\top}-\Theta+P_h\big)e^{\tau C^\top}.
 \label{eq:drift-removed}
\end{equation}

\paragraph{Verification.}
Requirement (M2) is \eqref{eq:metric-endpoints}. Requirements (M1) and (M3) are checked in Lemma~\ref{lem:combined-map}, and (M4) in Lemma~\ref{lem:moving-metric}.

\begin{lemma}[The kick and the budget $P_h$: (M1) and (M3)]
\label{lem:combined-map}
If $h\le(64\Lambda^2)^{-1}$, then $\Psi_h$ is a global $C^1$ diffeomorphism,
\[
    \Theta-P_h\succeq\frac16\Theta,
    \qquad
    M_h(\tau)\succ0\quad(0\le\tau\le h),
\]
and, for every $z\in\R^{2d}$,
\begin{equation}
 M_h(0)\preceq\nabla\Psi_h(z)\,\Theta\,\nabla\Psi_h(z)^\top .
 \label{eq:combined-domination}
\end{equation}
Consequently, for all positive densities $\mu,\nu$,
\begin{equation}
 \F_{M_h(0)}\big((\Psi_h)_\#\mu\,\big\|\,(\Psi_h)_\#\nu\big)
 \le\F_\Theta(\mu\,\|\,\nu).
 \label{eq:combined-functional}
\end{equation}
\end{lemma}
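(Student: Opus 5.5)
The plan is to treat the three claims separately, using only the structure $\Psi_h(z)=z+B_hs(z)$, so that $\nabla\Psi_h(z)=\Itwo+B_hK(z)$ with $K(z):=\nabla s(z)\in\R^{d\times2d}$ and $\norm{K(z)}_\op\le L_{\mathrm{map}}\le\Lambda$ by Assumption~\ref{ass:score-map}. From the explicit formula \eqref{eq:B-Phi}, $B_h=b_h\otimes\Id$ with $b_h=(4[1-(1+h)e^{-h}],\,4he^{-h})$. Here the first entry is $\le2h^2$ and the second is $\le4h$. Hence $\norm{B_h}_\op\le4h(1+h/2)\le5h$.

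\emph{Diffeomorphism.} For fixed $y$, the equation $\Psi_h(z)=y$ is the fixed-point problem $z=y-B_hs(z)$. The map $z\mapsto y-B_hs(z)$ is a contraction, with Lipschitz constant $\le5h\Lambda\le5/(64\Lambda)<1$. Banach's theorem therefore gives a unique solution, so $\Psi_h$ is a bijection. Also $\norm{B_hK}_\op<1$, so $\nabla\Psi_h$ is everywhere invertible, and the inverse function theorem makes $\Psi_h^{-1}$ a $C^1$ map.

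\emph{The budget and positivity.} To show $P_h\preceq\frac56\Theta$, I would bound the two pieces of $P_h$ against $\Theta$ separately.

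For the first piece, $\Theta\preceq4\Itwo$ gives $\frac h3\Theta^2\preceq\frac{4h}3\Theta\preceq\frac1{48}\Theta$.

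For the rank-one piece, I would use the criterion $c\,bb^\top\preceq\kappa\Theta$ if and only if $c\,b^\top\Theta^{-1}b\le\kappa$. With $\Theta^{-1}$ from \eqref{eq:S-P},
\[
b_h^\top\Theta^{-1}b_h=6b_1^2+4b_1b_2+b_2^2\le16h^2+32h^3+24h^4\le16h^2(1+3h).
\]
Hence $\frac{3\Lambda^2}hB_hB_h^\top\preceq48\Lambda^2h(1+3h)\Theta\preceq\frac34(1+\frac3{64})\Theta$.

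Adding the two pieces gives $P_h\preceq\frac56\Theta$, i.e.\ $\Theta-P_h\succeq\frac16\Theta\succ0$. Positivity of $M_h(\tau)$ then follows immediately: by \eqref{eq:moving-metric} it is a convex combination of congruences $e^{\tau C}(\Theta-P_h)e^{\tau C^\top}$ and $e^{(\tau-h)C}\Theta e^{(\tau-h)C^\top}$ of positive definite matrices. I expect this numerical step to be the main obstacle. A crude bound $\norm{B_h}^2\le25h^2$ against $\lambda_{\min}(\Theta)\approx0.15$ fails, so one must exploit that $B_h$ points almost exactly along the velocity direction, where $\Theta^{-1}_{vv}=1$.

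\emph{Kick compatibility \eqref{eq:combined-domination}.} Expand
\[
\nabla\Psi_h\Theta\nabla\Psi_h^\top=\Theta+B_hK\Theta+\Theta K^\top B_h^\top+B_hK\Theta K^\top B_h^\top.
\]
The last term is $\succeq0$ and is dropped. For the cross terms, apply the matrix Young inequality $XY^\top+YX^\top\succeq-(\eps YY^\top+\eps^{-1}XX^\top)$ with $X=B_hK$, $Y=\Theta$ and $\eps=h/3$. Then bound $B_hKK^\top B_h^\top\preceq\Lambda^2B_hB_h^\top$. This yields
\[
\nabla\Psi_h\Theta\nabla\Psi_h^\top\succeq\Theta-\frac h3\Theta^2-\frac{3\Lambda^2}hB_hB_h^\top=\Theta-P_h=M_h(0).
\]
Finally, \eqref{eq:combined-functional} follows from \eqref{eq:twist-pullback} of Lemma~\ref{lem:common-map} with $\mathsf T=\Psi_h$ and $M=M_h(0)$. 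Indeed, \eqref{eq:combined-domination} is equivalent to $(\nabla\Psi_h)^{-1}M_h(0)(\nabla\Psi_h)^{-\top}\preceq\Theta$ pointwise, and integrating this against $\mu\,\nabla r\nabla r^\top$ gives the claim.
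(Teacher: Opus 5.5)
Your proposal is correct and follows the paper's proof essentially step for step. The shared steps are:
\begin{itemize}
\item the contraction argument plus the inverse function theorem for the diffeomorphism;
\item Young's inequality with weight $h/3$, which produces exactly $P_h$ (you use the matrix form, the paper the quadratic-form version);
\item the bound $\frac h3\|\Theta\|_{\op}+\frac{3\Lambda^2}{h}B_h^\top\Theta^{-1}B_h<\frac56$, which exploits $(\Theta^{-1})_{vv}=1$;
\item the convex-combination argument for $M_h(\tau)\succ0$.
\end{itemize}
Your constants are slightly looser ($\|B_h\|_{\op}\le5h$ and the factor $1+3h$), but they still fit under the budget, since $\frac1{48}+\frac{201}{256}<\frac56$.
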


\begin{proof}
\emph{Diffeomorphism.} By \eqref{eq:B-Phi}, $B_h=(\beta_x,\beta_v)^\top$ with
\[
    \beta_x=4\big[1-(1+h)e^{-h}\big]\in[0,2h^2],
    \qquad
    \beta_v=4he^{-h}\in[0,4h],
\]
where we used $1-(1+h)e^{-h}\le h^2/2$. Hence $\|B_h\|_\op\le4h\sqrt{1+h^2/4}$. As $h\le(64\Lambda^2)^{-1}$ and $\Lambda\ge1$, we have $\Lambda h\le\frac1{64}$, so $\Lambda\|B_h\|_\op<1$. Thus $z\mapsto y-B_hs(z)$ is a contraction for every $y$, so $\Psi_h(z)=z+B_hs(z)$ is a bijection; its Jacobian $\Itwo+B_h\nabla s(z)$ is invertible, and by the inverse function theorem $\Psi_h$ is a $C^1$ diffeomorphism.

\emph{Kick compatibility.} Put $\Gamma=B_h\nabla s(z)$, so that $\nabla\Psi_h=\Itwo+\Gamma$ and
\[
    (\Itwo+\Gamma)\Theta(\Itwo+\Gamma)^\top=\Theta+\Gamma\Theta+\Theta\Gamma^\top+\Gamma\Theta\Gamma^\top .
\]
The last term is positive semidefinite, and for every $g$, using $\|\Gamma^\top g\|=\|(\nabla s)^\top B_h^\top g\|\le\Lambda\|B_h^\top g\|$,
\[
 \big|\ip g{(\Gamma\Theta+\Theta\Gamma^\top)g}\big|
 =2\big|\ip{\Theta g}{\Gamma^\top g}\big|
 \le\frac h3\|\Theta g\|^2+\frac3h\|\Gamma^\top g\|^2
 \le\ip g{P_hg}.
\]
Hence $(\Itwo+\Gamma)\Theta(\Itwo+\Gamma)^\top\succeq\Theta-P_h=M_h(0)$, which is \eqref{eq:combined-domination}. It is equivalent to $(\nabla\Psi_h)^{-1}M_h(0)(\nabla\Psi_h)^{-\top}\preceq\Theta$, and \eqref{eq:combined-functional} follows from \eqref{eq:twist-pullback}.

\emph{Positivity.} For $\Theta-P_h\succeq\frac16\Theta$ it suffices that $\|\Theta^{-1/2}P_h\Theta^{-1/2}\|_\op\le\frac56$. Using $\|\Theta\|_\op\le4$, $\Theta^{-1}$ from \eqref{eq:S-P}, and the bounds on $\beta_x,\beta_v$,
\begin{align*}
 \|\Theta^{-1/2}P_h\Theta^{-1/2}\|_{\op}
 &\le\frac h3\|\Theta\|_{\op}+\frac{3\Lambda^2}{h}B_h^\top\Theta^{-1}B_h
 =\frac h3\|\Theta\|_{\op}+\frac{3\Lambda^2}{h}\big(6\beta_x^2+4\beta_x\beta_v+\beta_v^2\big)\\
 &\le\frac{4h}3+48\Lambda^2h\Big(1+2h+\frac32h^2\Big)
 \le\frac1{48}+\frac34\Big(1+\frac1{32}+\frac3{8192}\Big)
 <\frac56 .
\end{align*}
Finally, $M_h(\tau)$ is a convex combination of
\[
    e^{\tau C}(\Theta-P_h)e^{\tau C^\top}\succeq\frac16e^{\tau C}\Theta e^{\tau C^\top}\succ0
    \qquad\text{and}\qquad
    e^{(\tau-h)C}\Theta e^{(\tau-h)C^\top}\succ0 ,
\]
so $M_h(\tau)\succ0$.
\end{proof}

\begin{lemma}[Dissipation certificate: (M4)]
\label{lem:moving-metric}
For $h\le(64\Lambda^2)^{-1}$ and $0\le\tau\le h$,
\begin{equation}
 -\big(\dot M_h(\tau)-CM_h(\tau)-M_h(\tau)C^\top\big)
 -\frac14M_h(\tau)^2+112\Lambda^2\Pi_v\succeq\Itwo .
 \label{eq:interaction-bound}
\end{equation}
\end{lemma}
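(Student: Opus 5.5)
The plan is to reduce \eqref{eq:interaction-bound} to a $2\times2$ inequality with explicit remainders, using that every matrix involved is of the form $\bar X\otimes\Id$. First, \eqref{eq:drift-removed} turns the drift term into
\[
 -\big(\dot M_h-CM_h-M_hC^\top\big)
 =e^{\tau C}\Big(\frac{\Theta-e^{-hC}\Theta e^{-hC^\top}}{h}-\frac{P_h}{h}\Big)e^{\tau C^\top}.
\]
The three pieces are then expanded to first order in $h$.

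\begin{itemize}
\item By Taylor expansion, $\frac1h(\Theta-e^{-hC}\Theta e^{-hC^\top})=A+R_1$ with $\|R_1\|_\op\lesssim h$.
\item From \eqref{eq:moving-metric}, $\frac{P_h}h=\frac13\Theta^2+3\Lambda^2\frac{B_hB_h^\top}{h^2}$.
\item From the explicit $\beta_x\le2h^2$ and $\beta_v=4he^{-h}$ of Lemma~\ref{lem:combined-map}, $\frac{B_hB_h^\top}{h^2}\preceq16\,\Pi_v+R_2$, where $R_2$ is a $2\times2$ matrix whose entries are $O(h)$.
\end{itemize}

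Since $\Lambda^2h\le\frac1{64}$, the term $3\Lambda^2R_2$ is $O(1/64)$ in operator norm. Conjugating by $e^{\tau C}$ with $\tau\le h$ costs a further $O(h)$ perturbation of a bounded matrix. Altogether,
\[
-\big(\dot M_h-CM_h-M_hC^\top\big)\succeq A-\tfrac13\Theta^2-48\Lambda^2\Pi_v-R ,
\]
where $R$ is a $2\times2$ error of size $O(h)$ plus $O(\Lambda^2h)$ on the velocity block.

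Second, bound the quadratic term. By \eqref{eq:moving-metric}, $M_h(\tau)=\Theta+O(h)+O(\|P_h\|)$. Note that $\|P_h\|_\op\lesssim h+\Lambda^2h$ is small, but not necessarily $O(h)$ uniformly in $\Lambda$. Use $\|P_h\|\le\frac5{6}\|\Theta\|$ from Lemma~\ref{lem:combined-map} only as a crude fallback. Better, observe that $\frac14M_h^2\preceq\frac14\Theta^2+E_2$ with $\|E_2\|\lesssim\Lambda^2h\le\frac1{64}$.

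Combining this with the $112\Lambda^2\Pi_v$ term, it suffices to show
\[
 A-\tfrac7{12}\Theta^2+64\Lambda^2\Pi_v-\Itwo-R'\succeq0
\]
for an explicit small error $R'$. With $\Theta^2=\begin{pmatrix}5/4&-7/2\\-7/2&10\end{pmatrix}$,
\[
A-\tfrac7{12}\Theta^2-\Itwo
\approx\begin{pmatrix}0.271&-2.458\\-2.458&3.167\end{pmatrix}.
\]
Adding $64\Lambda^2\ge64$ to the $vv$ entry gives a determinant of about $0.271\cdot67-6.04\approx12$ and a positive trace. So the inequality holds with margin, and the margin must absorb $R'$.

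The main obstacle is making the remainders explicit and uniform in $\tau\in[0,h]$ and in $\Lambda$. The $xx$ entry has only a margin of about $0.27$. So the errors in the $xx$ and $xv$ entries must be bounded by concrete numbers, not just $\lesssim$. The errors in the $vv$ entry can instead be charged to the spare $\Lambda^2$ budget, since they scale like $\Lambda^2h\cdot\Lambda^0$ or are absorbed by $64\Lambda^2$.

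I would do this as follows.
\begin{itemize}
\item Compute $e^{sC}$ for the critically damped $C$ in closed form (Appendix~\ref{app:critical}): $e^{sC}=e^{s}\big(\Itwo+s(C-\Itwo)\big)$, since $(C-\Itwo)^2=0$.
\item Bound each entry of the error matrices using $h\le\frac1{64}$.
\item Verify positivity of the resulting $2\times2$ matrix through its $xx$ entry and its determinant, treating the $\Lambda^2$-dependent $vv$ entry as at least $64\Lambda^2-O(\Lambda^2h)-O(1)$.
\end{itemize}
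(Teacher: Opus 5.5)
\textbf{Verdict: right strategy, genuine gap.} Your strategy is the paper's. With scalar $d\times d$ blocks the claim is a $2\times2$ inequality, and its left side, call it $\mathcal D_h(\tau)$, equals $\bar A_\tau-\frac13K_\tau-\frac{3\Lambda^2}{h^2}bb^\top-\frac14M_h(\tau)^2+112\Lambda^2\Pi_v$. Here $\Theta_t=e^{tC}\Theta e^{tC^\top}$, $\bar A_\tau=\frac1h(\Theta_\tau-\Theta_{\tau-h})$, $K_\tau=e^{\tau C}\Theta^2e^{\tau C^\top}$ and $b=e^{\tau C}B_h$. One then finishes with explicit entry bounds and a diagonal-plus-determinant test, and your leading-order matrix $A-\frac7{12}\Theta^2+64\Lambda^2\Pi_v-I_2$ is correct.

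\textbf{Where the sketch fails.} The lemma \emph{is} the explicit bookkeeping, and the error control you sketch is too coarse for it. The position margin is only $\frac{13}{48}$, while the $\Lambda$-dependent part $\frac{3\Lambda^2}{h}B_hB_h^\top$ of $P_h$ can be as large as $48\Lambda^2h\approx\frac34$.
(i) The two-sided perturbation $M_h=\Theta+O(h)+O(\|P_h\|)$, on which you base $\frac14M_h^2\preceq\frac14\Theta^2+E_2$, cannot make $E_2$ small against $\frac{13}{48}$. For $\Lambda^2h=\frac1{64}$ and $h\to0$ one has $M_h(0)\to\Theta-\frac34\Pi_v$ and $\frac14\|M_h(0)^2-\Theta^2\|_{\op}\approx1$. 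A one-sided bound can be much smaller, but only by using where $P_h$ sits.
(ii) Conjugating $48\Lambda^2\Pi_v$ by $e^{\tau C}$ is not an $O(h)$ perturbation of a bounded matrix. It creates the off-diagonal entry $-48\Lambda^2e^{2\tau}\tau(1+\tau)$, of size up to about $0.8$. If you bound it and the conjugated $3\Lambda^2R_2$ separately, as your decomposition invites, you charge about $1.2$ to the $xv$ entry instead of the true $\frac25$.
At $\Lambda=1$, $h=\frac1{64}$, uniform-in-$\tau$ entry bounds then no longer certify the determinant. Your leading-order determinant $\approx12$ is misleading: with careful uniform entry bounds the paper is left with a determinant margin of only $\frac{101}{400}$.

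\textbf{The missing idea.} The large term lives in the velocity slot, and you must keep it there.
\emph{Kick term.} Compute $b=e^{\tau C}B_h=4\int_{\tau-h}^{\tau}e^t\binom{-t}{1+t}\dd t$ directly. Since $\int_{\tau-h}^{\tau}|t|\dd t\le\frac{h^2}2$, you get $|b_x|\le2h^2e^h$. Hence $\frac{3\Lambda^2}{h^2}bb^\top$ costs at most $\frac h5$ in the $xx$ entry and $\frac25$ in the $xv$ entry. Everything else lands on $vv$, where $112\Lambda^2$ pays for it.
\emph{Quadratic term.} Use $(X^2)_{xx}=X_{xx}^2+X_{xv}^2$, which never sees $X_{vv}$. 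The matrix $W=e^{\tau C}(\Theta-P_h)e^{\tau C^\top}$ has diagonal entries below those of $\Theta_\tau$, and $W_{xv}$ lies within $O(h)$ of $(\Theta_\tau)_{xv}$. Convexity of $X\mapsto X^2$ along the convex combination defining $M_h(\tau)$ then gives $(M_h^2)_{xx}\le\frac54+\frac{25}2\tau$. Meanwhile $\Lambda$ enters $(M_h^2)_{xv}=(M_h)_{xv}\operatorname{tr}M_h$ only through the trace.
\emph{Time dependence.} Keep the $\tau$-dependence: $\bar A_\tau$ contributes at least $2+9(\tau-\frac h2)$ to the $xx$ entry, which preserves a position margin of about $\frac3{16}$.
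With these three steps you get $(\mathcal D_h)_{xx}\ge\frac{19}{16}$, $|(\mathcal D_h)_{xv}|\le\frac{17}5$ and $(\mathcal D_h)_{vv}\ge64\Lambda^2$, and the determinant test closes.
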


\paragraph{Proof of Theorem~\ref{thm:underdamped-stationarity}.}
We now collect the pieces.

\begin{restated}[Averaged relative-Fisher stationarity, underdamped]{Theorem}{\ref{thm:underdamped-stationarity}}
Suppose Assumptions~\ref{ass:score-map}--\ref{ass:classical} hold and $\max_{0\le i<N}h_i\le(64\Lambda^2)^{-1}$. Then
\[
    \frac{1}{T}\sum_{i=0}^{N-1}\int_0^{h_i}\F(\hp_{i,\tau}\,\|\,\hq_{i,\tau})\dd\tau
    \;\lesssim\; \frac{\,\Lambda^2H_0+F_0}{T} .
\]
\end{restated}

\begin{proof}
Fix $i$. By Lemma~\ref{lem:combined-map}, $\Psi_{i,h_i}$ is a $C^1$ diffeomorphism and $M_{h_i}$ satisfies (M1) and (M3); (M2) is \eqref{eq:metric-endpoints}; and by Lemma~\ref{lem:moving-metric}, $M_{h_i}$ satisfies (M4) with $2a-64\Lambda^2=112\Lambda^2$, that is, with $a=88\Lambda^2$. Proposition~\ref{prop:one-step} therefore applies to every step, and \eqref{eq:full-telescope} together with $\Theta\preceq4\Itwo$ gives
\[
 \sum_{i=0}^{N-1}\int_0^{h_i}\F(\hp_{i,\tau}\,\|\,\hq_{i,\tau})\dd\tau
 +\mathcal Y_N^{\mathrm{grid}}
 \le88\Lambda^2H_0+\F_\Theta(\hp_0\,\|\,\hq_0)
 \le88\Lambda^2H_0+4F_0 .
\]
Since $\mathcal Y_N^{\mathrm{grid}}\ge0$ and $\sum_ih_i=T$,
\begin{equation}
 \frac1{T}\sum_{i=0}^{N-1}\int_0^{h_i}
       \F(\hp_{i,\tau}\,\|\,\hq_{i,\tau})\dd\tau
 \le\frac{88\Lambda^2H_0+4F_0}{T}.
 \label{eq:ud-refined-conclusion}
 \qedhere
\end{equation}
\end{proof}

\subsection{Proof of the dissipation certificate}
\label{app:ud-certificate}

\begin{restated}[Dissipation certificate: (M4)]{Lemma}{\ref{lem:moving-metric}}
For $h\le(64\Lambda^2)^{-1}$ and $0\le\tau\le h$,
\[
    -\big(\dot M_h(\tau)-CM_h(\tau)-M_h(\tau)C^\top\big)-\frac14M_h(\tau)^2+112\Lambda^2\Pi_v\succeq\Itwo .
\]
\end{restated}

\begin{proof}
Throughout, $h\le(64\Lambda^2)^{-1}$ with $\Lambda\ge1$, so that $h\le\frac1{64}$,
and $0\le\tau\le h$, $u:=\tau/h\in[0,1]$, $R_t:=e^{tC}$. We work with $2\times2$ representatives. The proof has five steps: an exact formula for the left side of \eqref{eq:interaction-bound} (Step~1), elementary bounds on its ingredients (Steps~2--4), and the estimate of its three entries (Step~5).

\paragraph{Step 1: the matrix to be bounded.}
For a symmetric matrix $X$ write $X_t:=R_tXR_t^\top$; then $\frac{\dd}{\dd t}X_t=R_t(CX+XC^\top)R_t^\top$, and in particular $\frac{\dd}{\dd t}\Theta_t=A_t$. Put
\[
    K_t:=R_t\Theta^2R_t^\top,
    \qquad
    b=\binom{b_x}{b_v}:=R_\tau B_h,
    \qquad
    \bar A_\tau:=\frac1h\int_{\tau-h}^{\tau}A_t\dd t=\frac1h\big(\Theta_\tau-\Theta_{\tau-h}\big).
\]
By \eqref{eq:drift-removed} and the definition of $P_h$,
\[
 -\big(\dot M_h(\tau)-CM_h(\tau)-M_h(\tau)C^\top\big)
 =\frac1h\big(\Theta_\tau-\Theta_{\tau-h}\big)-\frac1h\,R_\tau P_hR_\tau^\top
 =\bar A_\tau-\frac13K_\tau-\frac{3\Lambda^2}{h^2}bb^\top .
\]
The left side of \eqref{eq:interaction-bound} is therefore
\begin{equation}
 \mathcal D_h(\tau)
 :=\bar A_\tau
  -\frac13K_\tau-\frac{3\Lambda^2}{h^2}bb^\top
  -\frac14M_h(\tau)^2+112\Lambda^2\Pi_v .
 \label{eq:ud-certificate-matrix}
\end{equation}
We shall prove
\begin{equation}
 \big(\mathcal D_h(\tau)\big)_{xx}\ge\frac{19}{16},\qquad
 \big|\big(\mathcal D_h(\tau)\big)_{xv}\big|\le\frac{17}5,\qquad
 \big(\mathcal D_h(\tau)\big)_{vv}\ge64\Lambda^2 .
 \label{eq:ud-certificate-entries}
\end{equation}
This yields \eqref{eq:interaction-bound}. Indeed, the diagonal entries of $\mathcal D_h(\tau)-I_2$ are at least $\frac3{16}$ and $64\Lambda^2-1\ge63$, so
\[
 \det\big(\mathcal D_h(\tau)-I_2\big)
 \ge\frac3{16}\cdot63-\Big(\frac{17}5\Big)^2
 =\frac{101}{400}>0 ,
\]
and a symmetric $2\times2$ matrix with positive diagonal and positive determinant is positive definite.

\paragraph{Step 2: scalar bounds on $\Theta_t$, $A_t$, $K_t$.}
By \eqref{eq:critical-exponentials}, $R_t=e^t\big(\begin{smallmatrix}1-t&-t\\t&1+t\end{smallmatrix}\big)$, and direct multiplication shows
{\setlength{\arraycolsep}{2.5pt}
\begin{align*}
 \Theta_t&=e^{2t}\begin{pmatrix}
 \frac12+t+\frac32t^2&-1-\frac52t-\frac32t^2\\
 -1-\frac52t-\frac32t^2&3+4t+\frac32t^2
 \end{pmatrix},
 \qquad
 A_t=e^{2t}\begin{pmatrix}
 2+5t+3t^2&-\frac92-8t-3t^2\\
 -\frac92-8t-3t^2&10+11t+3t^2
 \end{pmatrix},\\
 K_t&=e^{2t}\begin{pmatrix}
 \frac54+\frac92t+\frac{17}4t^2&-\frac72-\frac{35}4t-\frac{17}4t^2\\
 -\frac72-\frac{35}4t-\frac{17}4t^2&10+13t+\frac{17}4t^2
 \end{pmatrix}.
\end{align*}}
Every entry of these three matrices, and every trace, is $\pm f(t)$ with
\[
    f(t)=e^{2t}(\alpha+\beta t+\gamma t^2),
    \qquad
    \alpha\ge\tfrac12,\quad\beta\ge0,\quad0\le\gamma\le\tfrac{17}2 .
\]
For such $f$,
\[
    f'(t)=e^{2t}\big[(2\alpha+\beta)+2(\beta+\gamma)t+2\gamma t^2\big],
    \qquad
    f''(t)=e^{2t}\big[(4\alpha+4\beta+2\gamma)+4(\beta+2\gamma)t+4\gamma t^2\big],
\]
and both are positive on $|t|\le\frac1{64}$, because there the constant coefficient dominates the linear one. So $f$ is increasing and convex on $|t|\le\frac1{64}$, and lies above its tangent at $0$ and below its chords:
\begin{equation}
    f(0)+f'(0)\,t\;\le\;f(t)\;\le\;
    \begin{cases}
        f(0), & -\frac1{64}\le t\le0,\\[2pt]
        f(0)+f'(\tfrac1{64})\,t, & 0\le t\le\frac1{64},
    \end{cases}
    \qquad\text{and}\qquad f(t)\le f(\tfrac1{64}).
    \label{eq:tangent-chord}
\end{equation}
We apply \eqref{eq:tangent-chord} with the following constants, computed from the formulas above and $e^{1/32}<1.032$ (recall $\frac{\dd}{\dd t}\Theta_t=A_t$):
\begin{itemize}
    \item $f'(0)=9,\ 31,\ \frac92,\ 12,\ 17$ for $(A_t)_{xx}$, $(A_t)_{vv}$, $-(\Theta_t)_{xv}$, $\operatorname{tr}\Theta_t$, $-(A_t)_{xv}$;
    \item $f'(\frac1{64})\le\frac{11}5,\ \frac{24}5,\ \frac{21}2,\ 18,\ 8,\ 35$ for $(\Theta_t)_{xx}$, $-(\Theta_t)_{xv}$, $(\Theta_t)_{vv}$, $-(A_t)_{xv}$, $(K_t)_{xx}$, $(K_t)_{vv}$ 
    \item $f(\frac1{64})\le\frac{15}4,\ \frac{19}5,\ 12$ for $\operatorname{tr}\Theta_t$, $-(K_t)_{xv}$, $\operatorname{tr}K_t$ 
\end{itemize}
The result is, writing $t^+=\max\{t,0\}$: for $|t|\le\frac1{64}$,
\begin{equation}
 (A_t)_{xx}\ge2+9t,\qquad (A_t)_{vv}\ge10+31t,\qquad
 -\frac92-18\,t^+\le(A_t)_{xv}<0 ;
 \label{eq:ud-scalar-A}
\end{equation}
for $0\le t\le\frac1{64}$,
\begin{equation}
 \begin{gathered}
 0<(\Theta_t)_{xx}\le\frac12+\frac{11}5t,\qquad
 -1-\frac{24}5t\le(\Theta_t)_{xv}\le-1,\qquad
 (\Theta_t)_{vv}\le3+\frac{21}2t,\\
 \frac72\le\operatorname{tr}\Theta_t\le\frac{15}4,\qquad
 (K_t)_{xx}\le\frac54+8t,\qquad
 -\frac{19}5\le(K_t)_{xv}\le-\frac72,\\
 (K_t)_{vv}\le10+35t,\qquad
 \operatorname{tr}K_t\le12 ;
 \end{gathered}
 \label{eq:ud-scalar-plus}
\end{equation}
and for $-\frac1{64}\le t\le0$,
\begin{equation}
 \begin{gathered}
 0<(\Theta_t)_{xx}\le\frac12,\qquad
 -1\le(\Theta_t)_{xv}\le-1+\frac92|t|,\\
 (\Theta_t)_{vv}\le3,\qquad
 \frac72-12|t|\le\operatorname{tr}\Theta_t\le\frac72 .
 \end{gathered}
 \label{eq:ud-scalar-minus}
\end{equation}
In particular, for $-\frac1{64}\le t\le0$,
\[
    (\Theta_t^2)_{xx}=(\Theta_t)_{xx}^2+(\Theta_t)_{xv}^2\le\frac14+1=\frac54 .
\]

\paragraph{Step 3: the kick vector.}
Since $B_h=4\int_0^hR_{-r}E\dd r$ and $R_tE=e^t\binom{-t}{1+t}$,
\[
    b=R_\tau B_h=4\int_0^hR_{\tau-r}E\dd r=4\int_{\tau-h}^{\tau}e^t\binom{-t}{1+t}\dd t .
\]
The interval $[\tau-h,\tau]\subset[-h,h]$ has length $h$, so $\int_{\tau-h}^\tau|t|\dd t\le\frac{h^2}2$; and $0<e^t(1+t)\le(1+h)e^h$ there. Hence
\[
 |b_x|\le2h^2e^h,
 \qquad
 0<b_v\le4h(1+h)e^h .
\]
With $\Lambda^2h\le\frac1{64}$, $h\le\frac1{64}$ and $(1+h)^2e^{2h}\le e^{4h}\le1+4he^{1/16}$, this gives
\begin{equation}
 \begin{gathered}
 \frac{3\Lambda^2}{h^2}b_x^2\le12\Lambda^2h^2e^{2h}\le\frac h5,\qquad
 \frac{3\Lambda^2}{h^2}|b_xb_v|\le24\Lambda^2h(1+h)e^{2h}\le\frac25,\\
 \frac{3\Lambda^2}{h^2}b_v^2\le48\Lambda^2(1+h)^2e^{2h}\le48\Lambda^2+3e^{1/16}\le48\Lambda^2+\frac{16}5,\\
 \frac{3\Lambda^2}{h}\big(b_x^2+b_v^2\big)\le\frac3{16}h^2e^{2h}+\frac34e^{1/16}\le\frac45 .
 \end{gathered}
 \label{eq:ud-rank-bounds}
\end{equation}
Only the size of $b_x$ is used; it has no fixed sign.

\paragraph{Step 4: the moving metric.}
Put $W:=R_\tau(\Theta-P_h)R_\tau^\top$. By \eqref{eq:moving-metric},
\begin{equation}
 M_h(\tau)=(1-u)W+u\,\Theta_{\tau-h},\qquad
 W=\Theta_\tau-\frac h3K_\tau-\frac{3\Lambda^2}{h}bb^\top ,
 \label{eq:ud-W}
\end{equation}
with $\tau\in[0,h]$ and $\tau-h\in[-h,0]$.

\emph{Bounds on $W$.} By Lemma~\ref{lem:combined-map}, $W\succeq\frac16\Theta_\tau\succ0$, so $W_{xx},W_{vv}>0$. The two matrices subtracted from $\Theta_\tau$ in \eqref{eq:ud-W} are positive semidefinite, so the diagonal entries of $W$ are at most those of $\Theta_\tau$. For the off-diagonal entry, \eqref{eq:ud-scalar-plus} gives $-\frac h3(K_\tau)_{xv}\in\big[\frac76h,\frac{19}{15}h\big]$, and \eqref{eq:ud-rank-bounds} gives $\frac{3\Lambda^2}{h}|b_xb_v|\le\frac25h<\frac76h$. For the trace we use $\operatorname{tr}K_\tau\le12$ and the last bound of \eqref{eq:ud-rank-bounds}. Altogether,
\begin{equation}
 \begin{gathered}
 0<W_{xx}\le\frac12+\frac{11}5\tau,\qquad
 0<W_{vv}\le3+\frac{21}2\tau,\qquad
 -1-\frac{24}5\tau\le W_{xv}\le-1+\frac53h,\\
 \operatorname{tr}W\ge\frac72-4h-\frac45\ge\frac{21}8 .
 \end{gathered}
 \label{eq:ud-W-lower}
\end{equation}

\emph{Bounds on $M_h$.} Since $M_h(\tau)$ is a convex combination of $W$ and $\Theta_{\tau-h}$, \eqref{eq:ud-W-lower} and \eqref{eq:ud-scalar-minus} (at $t=\tau-h$, so $|t|\le h$) give
\begin{equation}
 \begin{gathered}
 -1-\frac{24}5h\le\big(M_h(\tau)\big)_{xv}\le-1+\frac92h,\\
 0<\big(M_h(\tau)\big)_{vv}\le3+\frac{21}2h,\qquad
 \frac{21}8\le\operatorname{tr}M_h(\tau)\le\frac{15}4 .
 \end{gathered}
 \label{eq:ud-M-entries}
\end{equation}
Here $(M_h)_{vv}>0$ because $M_h\succ0$, and $\operatorname{tr}M_h\le\max\{\operatorname{tr}\Theta_\tau,\operatorname{tr}\Theta_{\tau-h}\}\le\frac{15}4$.

\emph{Bounds on $M_h^2$.} For a symmetric $2\times2$ matrix $X$,
\[
    (X^2)_{xx}=X_{xx}^2+X_{xv}^2,\qquad
    (X^2)_{xv}=X_{xv}\operatorname{tr}X,\qquad
    (X^2)_{vv}=X_{xv}^2+X_{vv}^2 .
\]
With \eqref{eq:ud-M-entries} and $h\le\frac1{64}$, the last two give
\begin{equation}
 \begin{gathered}
 \frac{21}8\Big(1-\frac92h\Big)\le-\big(M_h(\tau)^2\big)_{xv}\le\frac{43}{40}\cdot\frac{15}4,\\
 \big(M_h(\tau)^2\big)_{vv}\le\Big(\frac{43}{40}\Big)^2+\Big(3+\frac{21}{128}\Big)^2\le\frac{56}5 .
 \end{gathered}
 \label{eq:ud-M-square}
\end{equation}
For the position entry we use the convexity of $X\mapsto X^2$,
\[
    M_h(\tau)^2\preceq(1-u)W^2+u\,\Theta_{\tau-h}^2
    \qquad\big(\text{the difference is }u(1-u)(W-\Theta_{\tau-h})^2\big),
\]
together with $(\Theta_{\tau-h}^2)_{xx}\le\frac54$ and, by \eqref{eq:ud-W-lower} (note $|W_{xv}|\le1+\frac{24}5\tau$),
\[
    (W^2)_{xx}\le\Big(\frac12+\frac{11}5\tau\Big)^2+\Big(1+\frac{24}5\tau\Big)^2
    =\frac54+\frac{59}5\tau+\frac{697}{25}\tau^2
    \le\frac54+\frac{25}2\tau ,
\]
where $\frac{697}{25}\tau^2\le\frac{697}{1600}\tau$. Hence
\begin{equation}
 \big(M_h(\tau)^2\big)_{xx}\le(1-u)\Big(\frac54+\frac{25}2\tau\Big)+\frac54u\le\frac54+\frac{25}2\tau .
 \label{eq:ud-M-position}
\end{equation}

\paragraph{Step 5: the three entries.}
We insert the bounds of Steps~2--4 into \eqref{eq:ud-certificate-matrix}. For $\bar A_\tau$ we use \eqref{eq:ud-scalar-A} and the averages
\[
    \frac1h\int_{\tau-h}^{\tau}t\dd t=\tau-\frac h2,
    \qquad
    \frac1h\int_{\tau-h}^{\tau}t^+\dd t=\frac{\tau^2}{2h}\le\frac h2 .
\]

\emph{Position entry.} By \eqref{eq:ud-scalar-A}, \eqref{eq:ud-scalar-plus}, \eqref{eq:ud-rank-bounds} and \eqref{eq:ud-M-position},
\begin{align*}
 \big(\mathcal D_h(\tau)\big)_{xx}
 &\ge 2+9\Big(\tau-\frac h2\Big)-\frac13\Big(\frac54+8\tau\Big)-\frac h5
       -\frac14\Big(\frac54+\frac{25}2\tau\Big)\\
 &=\frac{61}{48}+\frac{77}{24}\tau-\frac{47}{10}h
 \;\ge\;\frac{61}{48}-\frac{47}{640}
 =\frac{2299}{1920}
 >\frac{19}{16} .
\end{align*}

\emph{Velocity entry.} By \eqref{eq:ud-scalar-A}, \eqref{eq:ud-scalar-plus}, \eqref{eq:ud-rank-bounds} and \eqref{eq:ud-M-square},
\begin{align*}
 \big(\mathcal D_h(\tau)\big)_{vv}
 &\ge10+31\Big(\tau-\frac h2\Big)-\frac13(10+35\tau)-\Big(48\Lambda^2+\frac{16}5\Big)
       -\frac14\cdot\frac{56}5+112\Lambda^2\\
 &=64\Lambda^2+\frac23+\frac{58}3\tau-\frac{31}2h
 \;\ge\;64\Lambda^2+\frac23-\frac{31}{128}
 >64\Lambda^2 .
\end{align*}

\emph{Off-diagonal entry.} From below, by \eqref{eq:ud-scalar-A}, \eqref{eq:ud-scalar-plus}, \eqref{eq:ud-rank-bounds} and \eqref{eq:ud-M-square},
\begin{align*}
 \big(\mathcal D_h(\tau)\big)_{xv}
 &\ge\Big(-\frac92-9h\Big)+\frac76-\frac25+\frac14\cdot\frac{21}8\Big(1-\frac92h\Big)\\
 &=-\frac{1477}{480}-\frac{765}{64}h
 \;\ge\;-\frac{1477}{480}-\frac3{16}
 =-\frac{1567}{480}
 >-\frac{17}5 .
\end{align*}
From above, $(\bar A_\tau)_{xv}<0$ by \eqref{eq:ud-scalar-A}, so the same bounds give
\[
 \big(\mathcal D_h(\tau)\big)_{xv}
 \le\frac{19}{15}+\frac25+\frac14\cdot\frac{43}{40}\cdot\frac{15}4
 =\frac53+\frac{129}{128}
 <\frac{17}5 .
\]
This proves \eqref{eq:ud-certificate-entries}, and hence the lemma.
\end{proof}

\section{Elementary critical-damping formulas}
\label{app:critical}

The estimates of Section~\ref{sec:underdamped-stat} and Appendix~\ref{app:ud-proofs} use only the following explicit identities for the matrix $C$ of \eqref{eq:matrices}. Writing $C=\Itwo+K_0$ with $K_0^2=0$ (critical damping makes $C-\Itwo$ nilpotent),
\begin{equation}
    e^{tC}=e^t\begin{pmatrix}(1-t)\Id&-t\Id\\t\Id&(1+t)\Id\end{pmatrix},
    \qquad
    e^{-tC}=e^{-t}\begin{pmatrix}(1+t)\Id&t\Id\\-t\Id&(1-t)\Id\end{pmatrix}.
    \label{eq:critical-exponentials}
\end{equation}
The singular values of $C$ are $1+\sqrt2$ and $\sqrt2-1$, each repeated $d$ times. Finally,
\begin{equation}
    e^{tC}e^{tC^\top}
    =e^{2t}\begin{pmatrix}(1-2t+2t^2)\Id&-2t^2\Id\\-2t^2\Id&(1+2t+2t^2)\Id\end{pmatrix}.
    \label{eq:critical-gramian}
\end{equation}
These formulas verify all block identities used in Section~\ref{sec:ud-integrator2} and in Appendix~\ref{app:ud-proofs} and show explicitly how velocity expansion is transferred to position through the critically damped coupling.

% Appendix section `Numerical experiments: details' with all number macros replaced by literal values
% (snapshot of the runs of 2026-08-31; regenerate via the macro version if experiments are rerun).

\section{Numerical experiments: details}
\label{app:experiments}

This appendix documents the experiments of Section~\ref{sec:experiments}. Both use scores that are available in closed form, so that initialization and discretization are the only error sources, exactly as in the theory; nothing is trained until Section~\ref{app:experiments-ddpm}.

\subsection{The initialization error evolves by the forward semigroup}
\label{app:experiments-semigroup}

The one-dimensional experiment computes the law of the exact reverse flow started from the wrong initialization \emph{without simulating it}, through the following consequence of the transport equation \eqref{eq:h-transport}. Let $P^{\mathrm{fwd}}_t$ denote the Markov semigroup of the generic forward diffusion \eqref{eq:generic-forward}, $P^{\mathrm{fwd}}_tf(z)=\E[f(Z_t)\mid Z_0=z]$.

\begin{lemma}[Semigroup representation of the initialization error]
\label{lem:semigroup-representation}
In the setting of Lemma~\ref{lem:generic-current}, $p_t=q_t\,P^{\mathrm{fwd}}_t(p_0/q_0)$ for all $t\in[0,T]$. In particular, for the OU process \eqref{eq:forward-ou}, $p_t(x)=q_t(x)\,\E\big[(p_0/q_0)\big(e^{-t}x+\sqrt{1-e^{-2t}}\,Z\big)\big]$ with $Z\sim N(0,\Id)$.
\end{lemma}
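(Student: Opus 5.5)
The plan is to show that $h_t:=p_t/q_t$ and $\phi_t:=P^{\mathrm{fwd}}_t(p_0/q_0)$ solve the same first-order-in-time equation with the same initial value, and then conclude by uniqueness. The equation for $h_t$ is already available: it is the transport equation \eqref{eq:h-transport}, $(\partial_t+v_t^q\cdot\nabla)h_t=L_th_t$. By \eqref{eq:generator-def} and \eqref{eq:generic-current},
\[
L_th-v_t^q\cdot\nabla h=G:\nabla^2h+(G\nabla\log q_t-v_t^q)\cdot\nabla h=G:\nabla^2h+b\cdot\nabla h=\mathcal L h,
\]
where $\mathcal L$ is the generator of the forward diffusion \eqref{eq:generic-forward}. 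Hence \eqref{eq:h-transport} can be rewritten as $\partial_th_t=\mathcal Lh_t$. This is the same cancellation $a_t-v_t^q=b$ used in the proof of Lemma~\ref{lem:master-transport}: all dependence on $q_t$ drops out.

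Next, $\phi_t=P^{\mathrm{fwd}}_t\phi_0$ with $\phi_0=p_0/q_0=h_0$ solves the backward Kolmogorov equation $\partial_t\phi_t=\mathcal L\phi_t$, since $\mathcal L$ commutes with the time-homogeneous semigroup. So $h_t$ and $\phi_t$ solve the same linear Cauchy problem. By uniqueness of solutions of the Kolmogorov equation in the class fixed by the standing regularity conventions, $h_t=\phi_t$, which gives $p_t=q_t\,P^{\mathrm{fwd}}_t(p_0/q_0)$.

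The uniqueness step is the main obstacle, because $h_0=p_0/q_0$ may be unbounded; the experiment's wrong initialization has ratio growing like a Gaussian. I would handle it under the paper's standing conventions. One option is to work with the weak/dual formulation: test against $q_t\varphi$ and use that $\int q_t(\cdot)$ is preserved by the reverse flow. The cleaner option is a duality argument. For fixed $t$ and test function $\psi$, the quantity $\int q_s\,h_s\,P^{\mathrm{fwd}}_{t-s}\psi$ would be shown constant in $s$, by differentiating and using the two continuity equations. The identical computation with $\phi_s$ in place of $h_s$ gives the same constant, and this identifies both as $\int q_t h_t\psi$. In either route, positivity and the integrability of $p_0/q_0$ under $q_0$ (finite mass) justify the interchanges.

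For the OU case, $b=-x$ and $G=\Id$, and Mehler's formula gives $P^{\mathrm{fwd}}_tf(x)=\E f(e^{-t}x+\sqrt{1-e^{-2t}}Z)$. Substituting this formula yields the stated expression.
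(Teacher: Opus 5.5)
Your main argument is the paper's proof. You rewrite \eqref{eq:h-transport} via the cancellation $G\nabla\log q_t-v_t^q=b$ as the backward Kolmogorov equation $\partial_th_t=\mathcal Lh_t$, identify the solution with $P^{\mathrm{fwd}}_th_0$, and finish the OU case with Mehler's formula. The paper simply asserts that this Cauchy problem has solution $P^{\mathrm{fwd}}_th_0$.

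The step that fails is the duality argument you offer for uniqueness. The quantity $\int q_sh_s\,P^{\mathrm{fwd}}_{t-s}\psi=\int p_s\,P^{\mathrm{fwd}}_{t-s}\psi$ is not conserved. Differentiate it using the continuity equation of $p_s$ (velocity $v_s^p=-b+2G\nabla\log q_s-G\nabla\log p_s$) and $\partial_sP^{\mathrm{fwd}}_{t-s}\psi=-\mathcal LP^{\mathrm{fwd}}_{t-s}\psi$. This leaves $2\int p_s\,v_s^q\cdot\nabla P^{\mathrm{fwd}}_{t-s}\psi$, which is nonzero in general. Already for $p_0=q_0$ the quantity equals $\int\rho_{T+t-2s}\psi$. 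Its constancy would also say $p_t=p_0P^{\mathrm{fwd}}_t$, i.e.\ that $p_t$ follows the forward rather than the reverse dynamics. Weighting by $q_s$ is the wrong pairing.

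A correct duality pairs $h_s$ in Lebesgue measure with a solution of the adjoint equation. Let $\partial_u\mu_u=\mathcal L^*\mu_u$ (the forward Fokker--Planck equation) with $\mu_0=\psi$. Then $\frac{\dd}{\dd s}\int h_s\,\mu_{t-s}=\int(\mathcal Lh_s)\,\mu_{t-s}-\int h_s\,\mathcal L^*\mu_{t-s}=0$, whence $\int h_t\psi=\int h_0\,\mu_t=\int\psi\,P^{\mathrm{fwd}}_th_0$ for every $\psi$. This gives $h_t=P^{\mathrm{fwd}}_th_0$ directly, with no separate comparison to $\phi_t$. It needs only integrability of $h_s\mu_{t-s}$ and vanishing boundary terms, which the standing conventions supply.

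The paper also sketches a probabilistic route that avoids PDE uniqueness. The reverse path law from $p_0$ is the one from $q_0$ reweighted by $h_0(Y_0)$. Hence $\frac{\dd p_t}{\dd q_t}(y)=\E[h_0(Y_0)\mid Y_t=y]=\E[h_0(Z_T)\mid Z_{T-t}=y]=P^{\mathrm{fwd}}_th_0(y)$, by time reversal and the Markov property of the forward process.

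A side remark: in the paper's experiment $p_0=N(10,0.5^2)$ is narrower than $q_0\approx N(0,1)$. So $h_0$ decays like $e^{-3x^2/2}$ and is bounded, not growing like a Gaussian as you say.
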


\begin{proof}
By \eqref{eq:h-transport}, $h_t=p_t/q_t$ satisfies $\partial_th_t=-v_t^q\cdot\nabla h_t+L_th_t$. Inserting $v_t^q=-b+G\nabla\log q_t$ and $L_th=G:\nabla^2h+G\nabla\log q_t\cdot\nabla h$, the two terms containing $\nabla\log q_t$ cancel, and
\[
    \partial_th_t=b\cdot\nabla h_t+G:\nabla^2h_t ,
\]
the backward Kolmogorov equation of \eqref{eq:generic-forward}, whose solution with initial datum $h_0$ is $P^{\mathrm{fwd}}_th_0$. (Probabilistically: under the exact reverse path measure, $\frac{\dd p_t}{\dd q_t}(y)=\E[h_0(Y_0)\mid Y_t=y]$, which by time reversal and the Markov property of the forward process equals $\E[h_0(Z_T)\mid Z_{T-t}=y]=P^{\mathrm{fwd}}_th_0(y)$.) For the OU process, $P^{\mathrm{fwd}}_tf(x)=\E f\big(e^{-t}x+\sqrt{1-e^{-2t}}\,Z\big)$.
\end{proof}

\subsection{One-dimensional Gaussian mixture}
\label{app:experiments-gmm}

\paragraph{Setting.}
The data law is $\rho_0=\sum_{i=1}^3w_iN(\mu_i,\sigma^2)$ with $w=(0.3,0.4,0.3)$, $\mu=(-20,0,20)$, $\sigma=1.5$; the forward process is the OU diffusion \eqref{eq:forward-ou} on $[0,T]$ with $T=5$, so $q_t=\rho_{T-t}=\sum_iw_iN(e^{-s}\mu_i,e^{-2s}\sigma^2+1-e^{-2s})$, $s=T-t$, and $s_t=\nabla\log q_t$ are explicit. The uniform score Lipschitz constant of Assumption~\ref{ass:lipschitz} is $L_s=\sup_{t,x}|s_t'(x)|\approx20$. The wrong initialization is $p_0=N(10,0.5^2)$; the exact initialization $q_0=\rho_T$ is a near-standard Gaussian, so both $H_0=\KL(p_0\,\|\,q_0)=49.8$ and $\F_0=100$ are large.

\paragraph{Exact reverse overdamped flow.}
By Lemma~\ref{lem:semigroup-representation}, $p_t=q_tP^{\mathrm{fwd}}_th_0$ with $h_0=p_0/q_0$, and the relative score has the posterior-mean form $\nabla\log P^{\mathrm{fwd}}_th_0(x)=\frac{e^{-t}}{1-e^{-2t}}\E_{\nu_x}[Y-e^{-t}x]$ with $\nu_x(y)\propto N(y;e^{-t}x,1-e^{-2t})h_0(y)$, which is evaluated using numerical quadrature algorithms. The mass of $p_T$ in the three basins (split at the midpoints between the modes) is $(0.05, 0.25, 0.71)$, against $(0.3, 0.4, 0.3)$ for $\rho_0$.

\paragraph{Discretized sampler.}
The frozen-score exponential integrator \eqref{eq:explicit-step} is run on uniform grids with $N=T/h\in\{10,25,50,100,250,500,1000\}$ steps. Its law $\hp_t$ is propagated on a spatial grid by applying the Gaussian transition kernel of \eqref{eq:explicit-step} exactly (a dense kernel--vector product), including at intermediate times $t\in(t_k,t_{k+1})$, and $\frac1T\int_0^T\F(\hp_t\,\|\,q_t)\dd t$ is computed by four-point Gauss--Legendre quadrature in time within every step. No trajectories are sampled. The gap of the time-averaged Fisher divergence
decays like $O(h)$ (Figure~\ref{fig:gmm1d}(c)), matching the $O(h)$ discretization term of Theorem~\ref{thm:overdamped-stationarity}.

\paragraph{Critically damped flow.}
Figure~\ref{fig:gmm1d-app} repeats the whole experiment for the kinetic model of Section~\ref{sec:underdamped-stat} ($\gamma=2$, $\xi=1$), with data law $\rho_0\otimes N(0,1)$ on phase space and the same wrong position law, $p_0=N(10,0.5^2)\otimes N(0,1)$. Lemma~\ref{lem:semigroup-representation} applies verbatim with the kinetic forward semigroup, whose transition law is Gaussian with mean $e^{-tC}z$ and covariance $G_t$ (Section~\ref{sec:ud-integrator2}). In panel~(a), the modified Fisher divergence $\F_P$ with $P=P_2$ decays faster than the guaranteed $e^{-t}$. The terminal TV is larger than in the overdamped case because the critically damped forward process contracts positions only like $(1+s)e^{-s}$, so at $T=5$ it has mixed less and the mode weights are decided earlier in reverse time: panel~(b) shows the terminal position marginals---$p_T$ puts mass $(0.00, 0.00, 1.00)$ on the three basins, against $(0.3, 0.4, 0.3)$ for $\rho_0$, i.e.\ almost everything on the mode nearest the initialization---while the marginal relative score remains small, the same locally-consistent, globally-reweighted picture as in Figure~\ref{fig:gmm1d}(b).

\paragraph{Kinetic exponential integrator.}
We also ran the frozen-score exponential integrator \eqref{eq:update} itself, in the two-factor form \eqref{eq:factorization-main}, on uniform grids with $h\in\{0.1,0.05,0.025,0.0125,0.00625\}$ and $t_N=T$: the sampler $\hp$ (wrong initialization) and its twin $\hq$ (exact initialization $\rho_T$) of Theorem~\ref{thm:underdamped-stationarity} are propagated through the same kernels on the $561\times281$ phase-space grid, and $\int_0^h\F(\hp_{i,\tau}\,\|\,\hq_{i,\tau})\dd\tau$ is accumulated by four-point Gauss--Legendre quadrature along the within-step interpolation \eqref{eq:interpolation}, exactly the left-hand sides of the theorem. The time average of $\F(\hp\,\|\,\hq)$ carries almost no discretization term---the divergence $\F(\hp\,\|\,\hq)$ measures initialization error alone, as the decomposition of Section~\ref{sec:underdamped-stat} intends.

Although we do not prove this directly, the sampler is indeed empirically first-order stationary with respect to the exact marginals themselves. For the comparison with Figure~\ref{fig:gmm1d}(c) we evaluate $\frac1T\int_0^T\F(\hp_t\,\|\,q_t)\dd t$ against the analytic scores along the frozen-score SDE interpolation of the scheme, the kinetic analog of \eqref{eq:numerical-sde}. Its gap to the exact-flow value decays like $O(h)$ (Figure~\ref{fig:gmm1d-app}(d)).

\begin{figure}[t]
\centering
\includegraphics[width=0.72\textwidth]{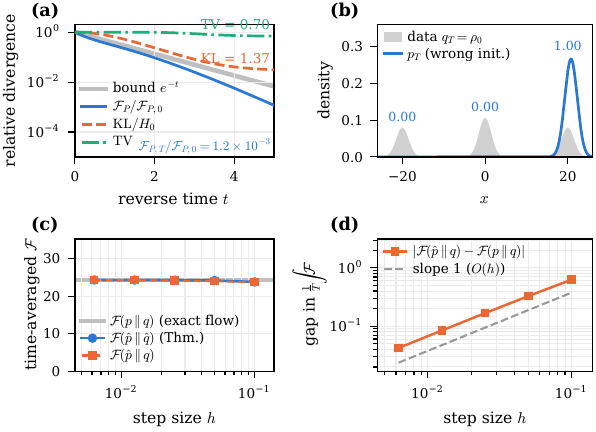}
\caption{The critically damped experiment ($\gamma=2$, $\xi=1$), presented as in Figure~\ref{fig:gmm1d}. \textbf{(a)}~Along the exact reverse flow, the modified Fisher divergence $\F_P$ contracts faster than the guaranteed $e^{-t}$, $\KL$ decreases monotonically, and $\mathrm{TV}$ plateaus. \textbf{(b)}~Terminal position marginal. \textbf{(c)}~The frozen-score exponential integrator \eqref{eq:update}: the time-averaged $\F(\hp\,\|\,\hq)$ of Theorem~\ref{thm:underdamped-stationarity} (circles) and the time-averaged $\F(\hp_t\,\|\,q_t)$ against the exact marginals along the frozen-score SDE interpolation (squares) versus step size, with the exact-flow value $\frac1T\int_0^T\F(p_t\,\|\,q_t)\dd t$ (gray). \textbf{(d)}~Gap between the time-averaged $\F(\hp_t\,\|\,q_t)$ of (c) and its exact-flow counterpart $\F(p_t\,\|\,q_t)$ versus step size (log--log; dashed: slope one), as in Figure~\ref{fig:gmm1d}(c).}
\label{fig:gmm1d-app}
\end{figure}

\subsection{CelebA-HQ with the exact empirical score}
\label{app:experiments-celeba}

\paragraph{Setting.}
Let $x_1,\dots,x_n\in[-1,1]^d$ be the $n=28000$ training images of CelebA-HQ \citep{karras2018progressive} at $256\times256$ resolution, and $\rho_0=\frac1n\sum_i\delta_{x_i}$. The OU marginals are $\rho_s=\frac1n\sum_iN(a_sx_i,b_s^2\Id)$ with $a_s=e^{-s}$, $b_s^2=1-e^{-2s}$---a Gaussian kernel density estimate whose bandwidth is the noise level---and
\[
    \nabla\log\rho_s(y)=\frac{a_s\bar x_s(y)-y}{b_s^2},
    \ \ 
    \bar x_s(y)=\sum_iw_i^s(y)\,x_i,
    \ \ 
    w^s(y)=\operatorname{softmax}_i\Big(\frac{2a_s\langle y,x_i\rangle-a_s^2\|x_i\|^2}{2b_s^2}\Big)\!
\]
(the common term $\|y\|^2$ is dropped from the softmax), where $\bar x_s(y)=\E[X_0\mid X_s=y]$ is the posterior mean. One score evaluation is one matrix product with the $n\times d$ data matrix. Since the data are exactly memorized, every trajectory ends at a training image, and what the initialization can change is \emph{which} training images are produced.

That the samples are training images is not a shortcoming of the experiment but the setting of the paper, which assumes the true score throughout: for a finite training set the true score is the empirical one above, and it is also the exact minimizer of the denoising score-matching objective, so a diffusion model whose network fitted its objective perfectly would memorize; the generalization of trained models comes from the inductive bias of the network, not from the objective \citep{yoon2023diffusion,gu2023memorization,kadkhodaie2024generalization,li2024good}. Using the exact score is therefore the only way to run the object analyzed in Sections~\ref{sec:decay}--\ref{sec:overdamped-stat} on real data with the score error removed, leaving initialization and discretization as the only error sources. The phenomenon shown here is moreover not tied to memorization: Appendix~\ref{app:experiments-ddpm} shows the same phenomenon with a learned score that does not memorize.

\paragraph{Sampler.}
We use the frozen-score exponential integrator \eqref{eq:explicit-step} on a grid of forward times $s_0=T>s_1>\dots>s_{K-1}=s_{\min}>s_K=0$ that is geometric in the noise-to-signal ratio $\sigma_s=\sqrt{e^{2s}-1}$ \citep{karras2022elucidating} between $s_0=T$ and $s_{K-1}=s_{\min}$, followed by one last step to $s=0$ ($K=150$, $s_{\min}=1\times 10^{-5}$, $T=5.06$). The horizon is $T=5.06$, the horizon of the DDPM noise schedule used in Appendix~\ref{app:experiments-ddpm}, so that the two experiments run the same three initial laws over the same time span; at this $T$ the standard initialization is close to exact ($\KL(N(0,\Id)\,\|\,\rho_T)=0.30$ nats) and serves as the control. Each initialization is run for $64$ samples.

\paragraph{Initializations.}
(i)~\emph{standard}: $\hp_0=N(0,\Id)$, the initialization of every practical sampler; (ii)~\emph{mean shift}: $\hp_0=N(\mu,\Id)$ with $\mu=\beta\,\mathbf 1/\sqrt d$, a uniform brightness offset of $\beta/\sqrt d$ per pixel, $\beta=\pm4$---the same three initial laws as in the learned-score experiment of Appendix~\ref{app:experiments-ddpm}. $H_0$ and $\F_0$ are estimated by Monte Carlo from the closed-form densities. 

\paragraph{Results.}
Figure~\ref{fig:celeba} shows the $64$ generated images of each run. Every one of them is close to a training image (root-mean-square distance to the nearest training image below one gray level, i.e.\ the noise of the last step): local consistency holds regardless of the initialization. Which training images are produced does depend on the initialization. The standard initialization $N(0,\Id)$, nearly exact at this horizon ($H_0=0.30$ nats), is the control: it returns a mixed draw of the training set---dark and light backgrounds, hair colors and lighting conditions (mean pixel value $-0.14$ on the $[-1,1]$ scale, mean per-image pixel standard deviation $0.51$). The mean shifts ($H_0=5.03$ resp.\ $6.78$ nats) move the output to opposite ends of the training set: $\beta=-4$ returns almost exclusively dark photographs---dim lighting, dark backgrounds, hair and skin (mean pixel value $-0.43$)---and $\beta=+4$ almost exclusively bright ones---light backgrounds, blond hair, pale skin, over-exposed shots. The initialization error is not negligible but shrinks along the flow in the first-order sense, yet it decides the palette of the output: small relative Fisher divergence certifies local consistency, not the mode weights. 

\begin{figure}[t]
\centering
\begin{minipage}[t]{0.32\textwidth}\centering
\includegraphics[width=\linewidth]{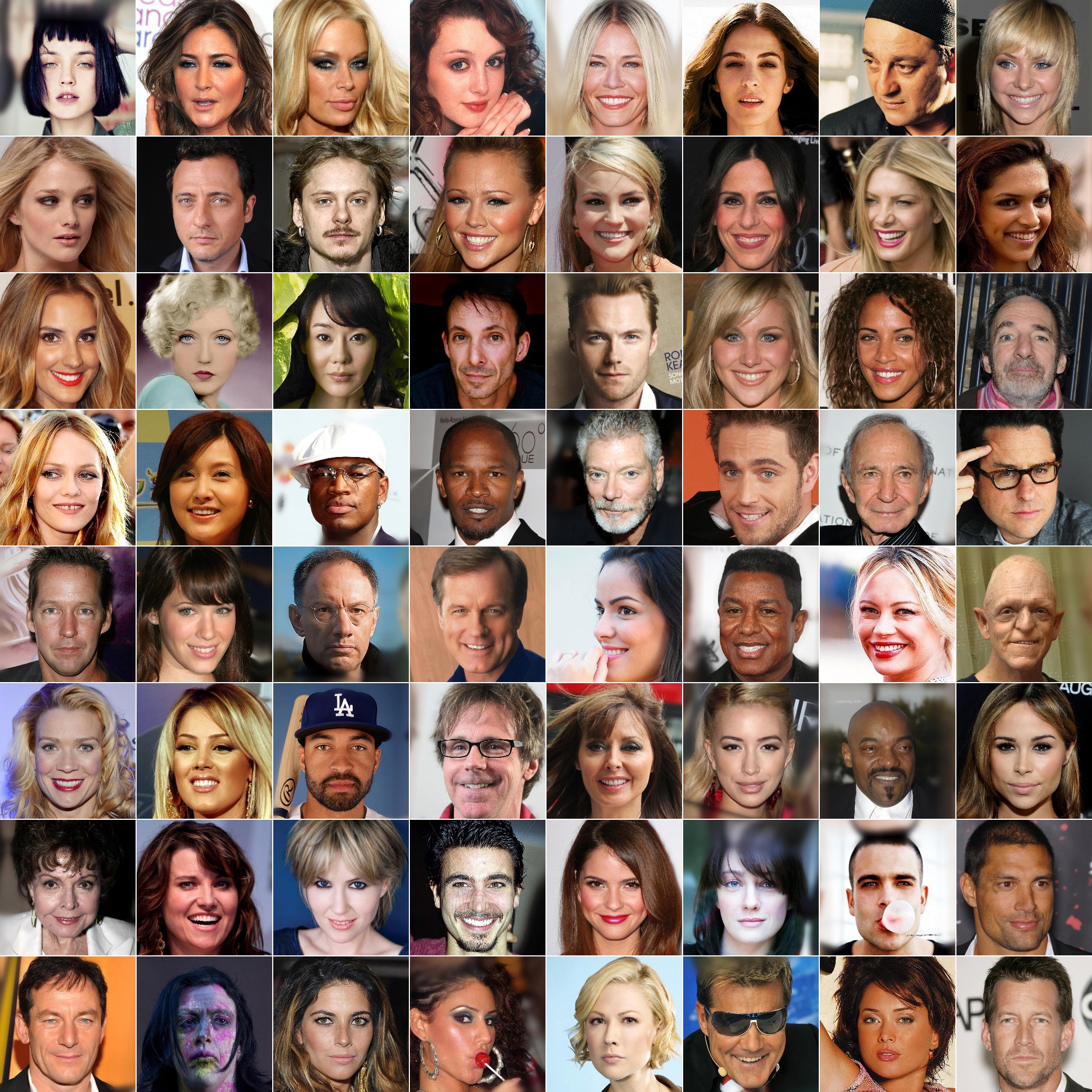}\\[1pt]{\small $N(0,\Id)$ }
\end{minipage}\hfill
\begin{minipage}[t]{0.32\textwidth}\centering
\includegraphics[width=\linewidth]{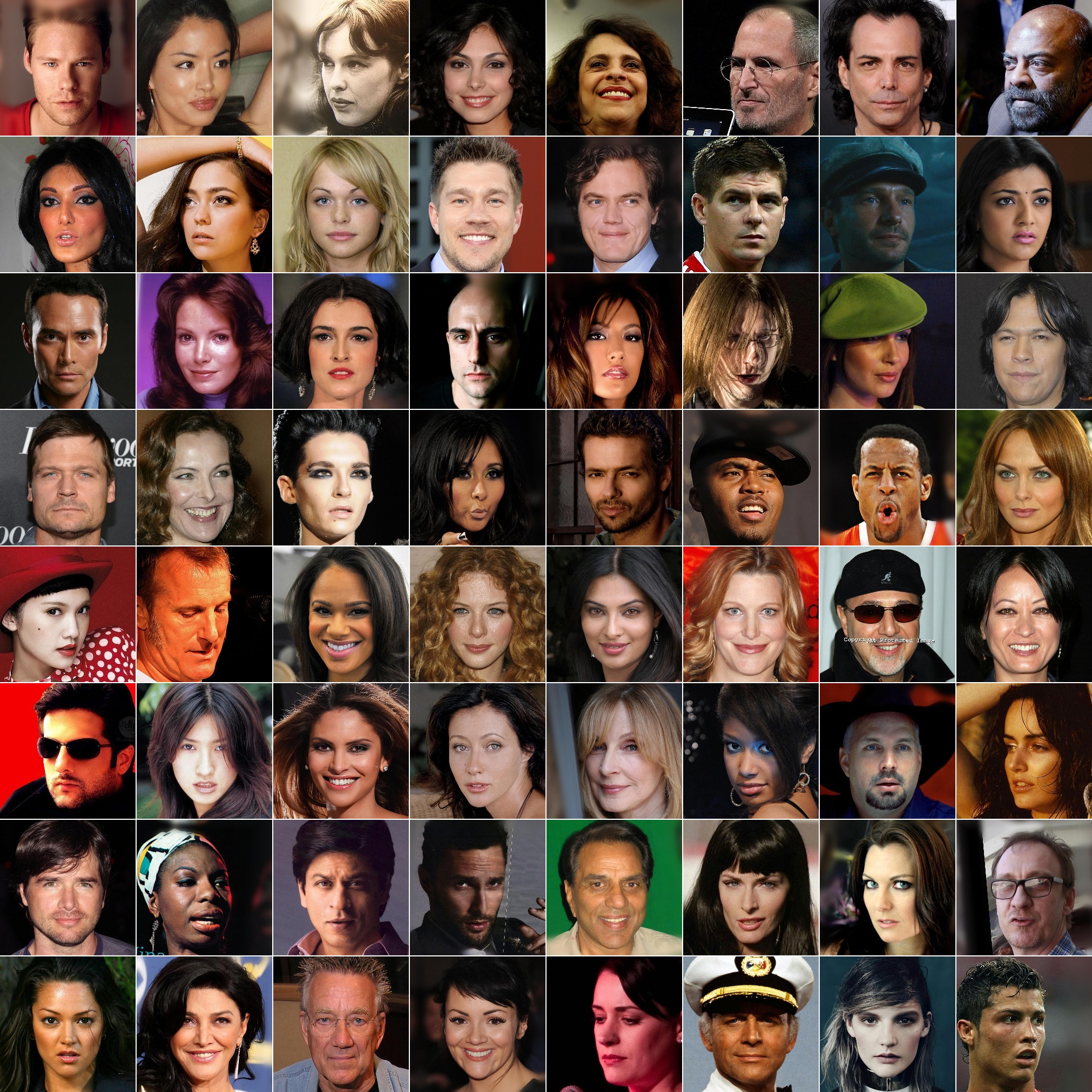}\\[1pt]{\small $N(-4\,\mathbf 1/\sqrt d,\Id)$ }
\end{minipage}\hfill
\begin{minipage}[t]{0.32\textwidth}\centering
\includegraphics[width=\linewidth]{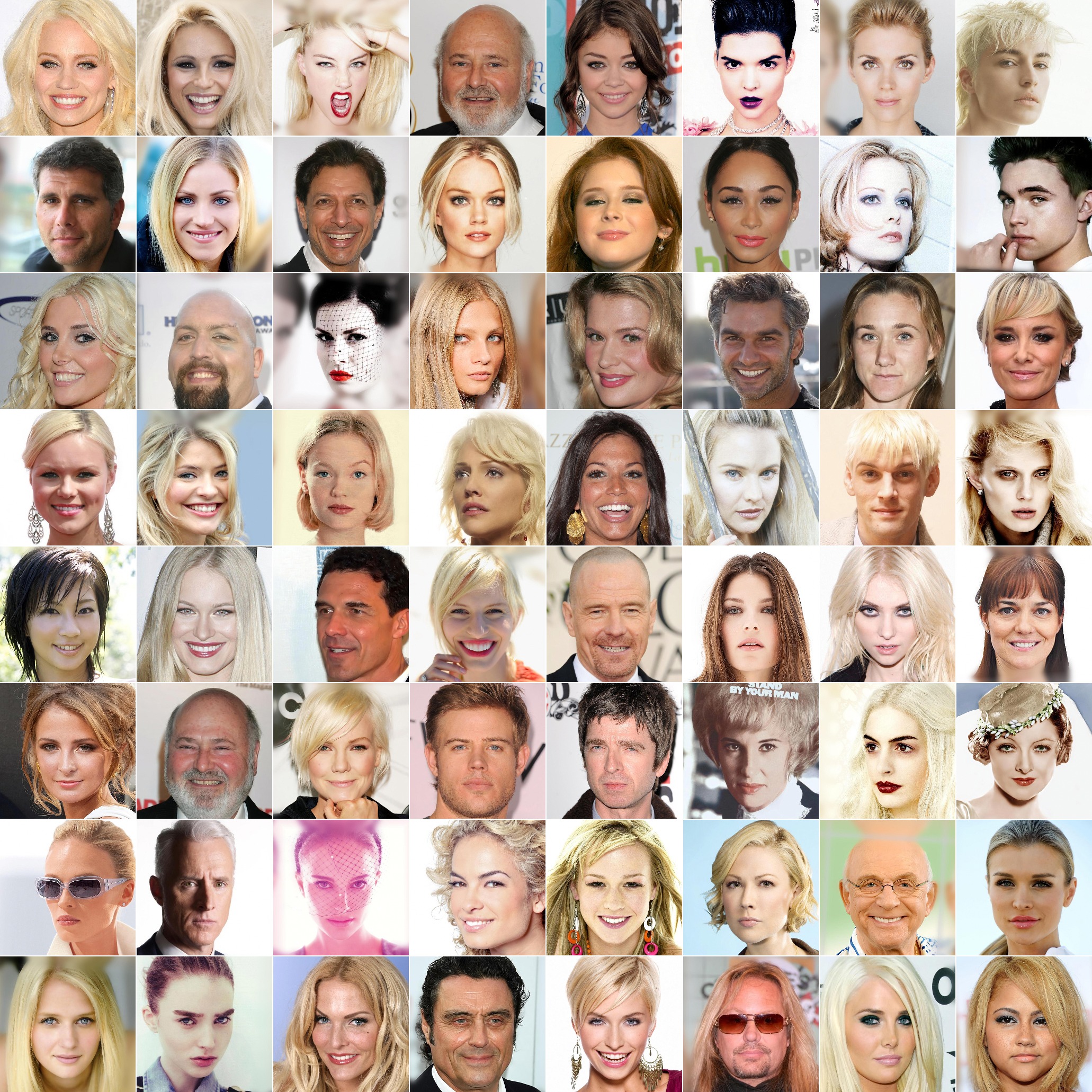}\\[1pt]{\small $N(4\,\mathbf 1/\sqrt d,\Id)$ }
\end{minipage}
\caption{CelebA-HQ with the exact empirical score: $64$ images generated by the sampler of Section~\ref{sec:overdamped-stat} from the standard initialization and the two mean shifts, $T=5.06$, $n=28000$ training images. Every image is a training image; the initialization decides which ones are produced.}
\label{fig:celeba}
\end{figure}

\subsection{The same effect with a learned score}
\label{app:experiments-ddpm}

To check that this phenomenon persists in the presence of score estimation errors, we repeat the mean-shift experiment with the pretrained CelebA-HQ $256\times256$ DDPM of \citet{ho2020denoising} (\texttt{google/ddpm-celebahq-256}), sampled with its ancestral sampler ($250$ steps)---the discrete predecessor of the integrator of Section~\ref{sec:overdamped-stat}---whose noise schedule corresponds to the OU horizon $T=5.06$. Figure~\ref{fig:ddpm} shows $64$ samples from the standard initialization $N(0,\Id)$ and from $N(\mu,\Id)$ with $\mu=\beta\,\mathbf 1/\sqrt d$ for $\beta=-4$ and $+4$---the same three initial laws as in Figure~\ref{fig:celeba}. The mean pixel value of the samples moves from $-0.09$ to $-0.56$ and $+0.50$ respectively: a positive shift makes the learned model return bright photographs and a negative shift dark ones. The bias now acts on the cluster of bright (dark) photographs rather than on individual training images, as one expects from a score that has learned the data distribution rather than the training set; the mechanism---the mass allocated to the clusters is decided by the tilt the initialization induces, while every sample remains a clean face---is the same. 

\begin{figure}[t]
\centering
\begin{minipage}[t]{0.32\textwidth}\centering
\includegraphics[width=\linewidth]{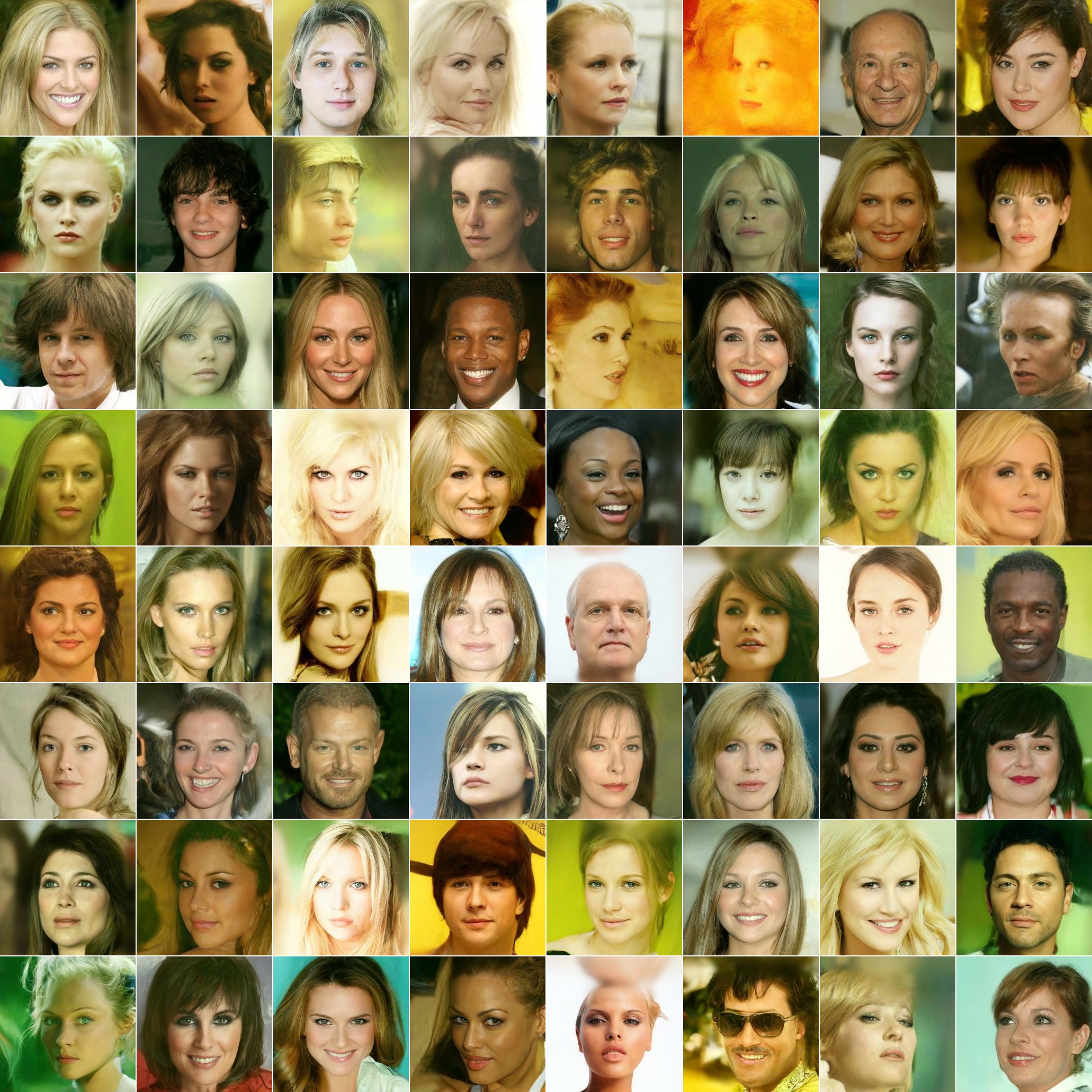}\\[1pt]{\small $N(0,I)$\ (mean pixel $-0.09$)}
\end{minipage}\hfill
\begin{minipage}[t]{0.32\textwidth}\centering
\includegraphics[width=\linewidth]{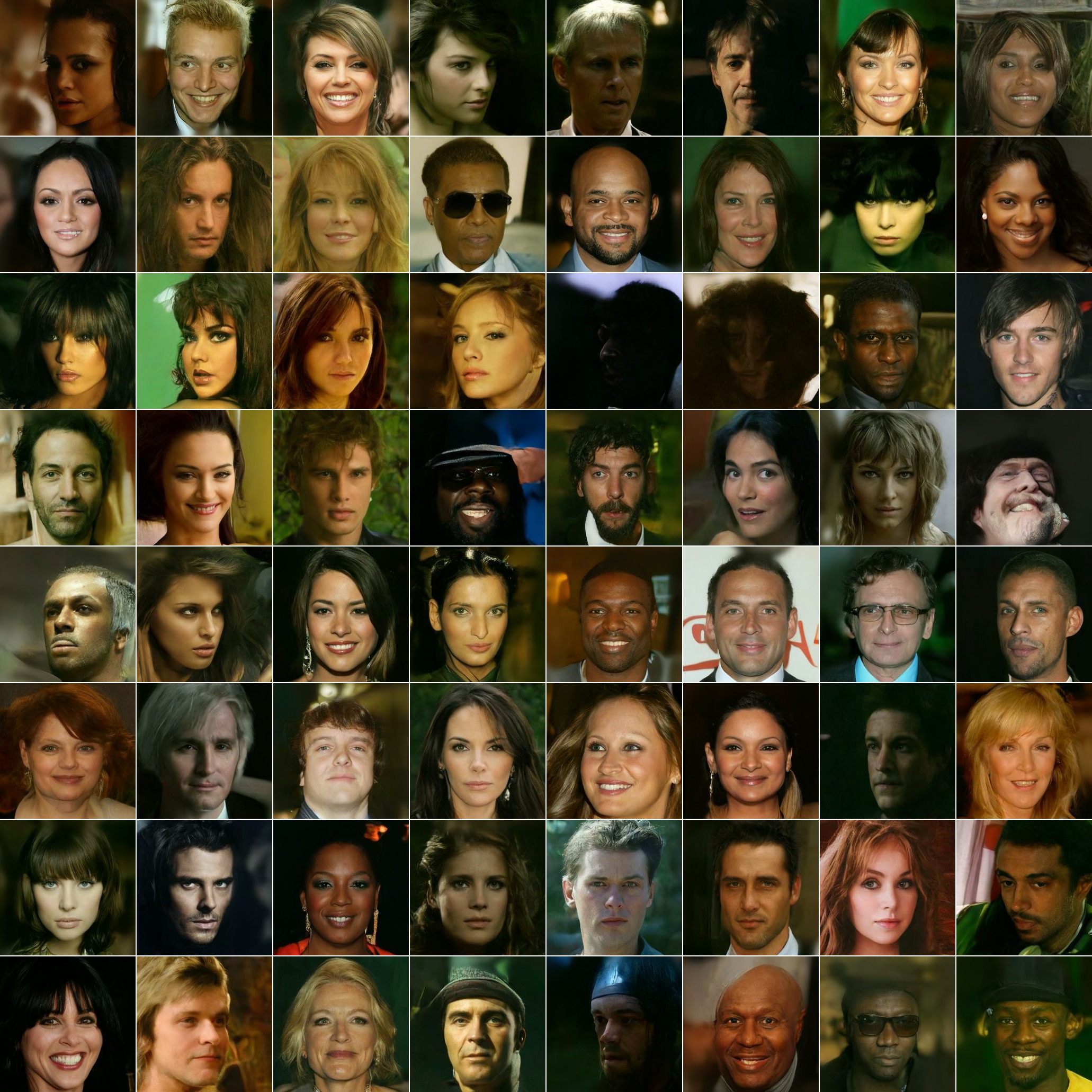}\\[1pt]{\small $N(\mu,I)$, $\mu=-4\,\mathbf{1}/\sqrt d$\ ($-0.56$)}
\end{minipage}\hfill
\begin{minipage}[t]{0.32\textwidth}\centering
\includegraphics[width=\linewidth]{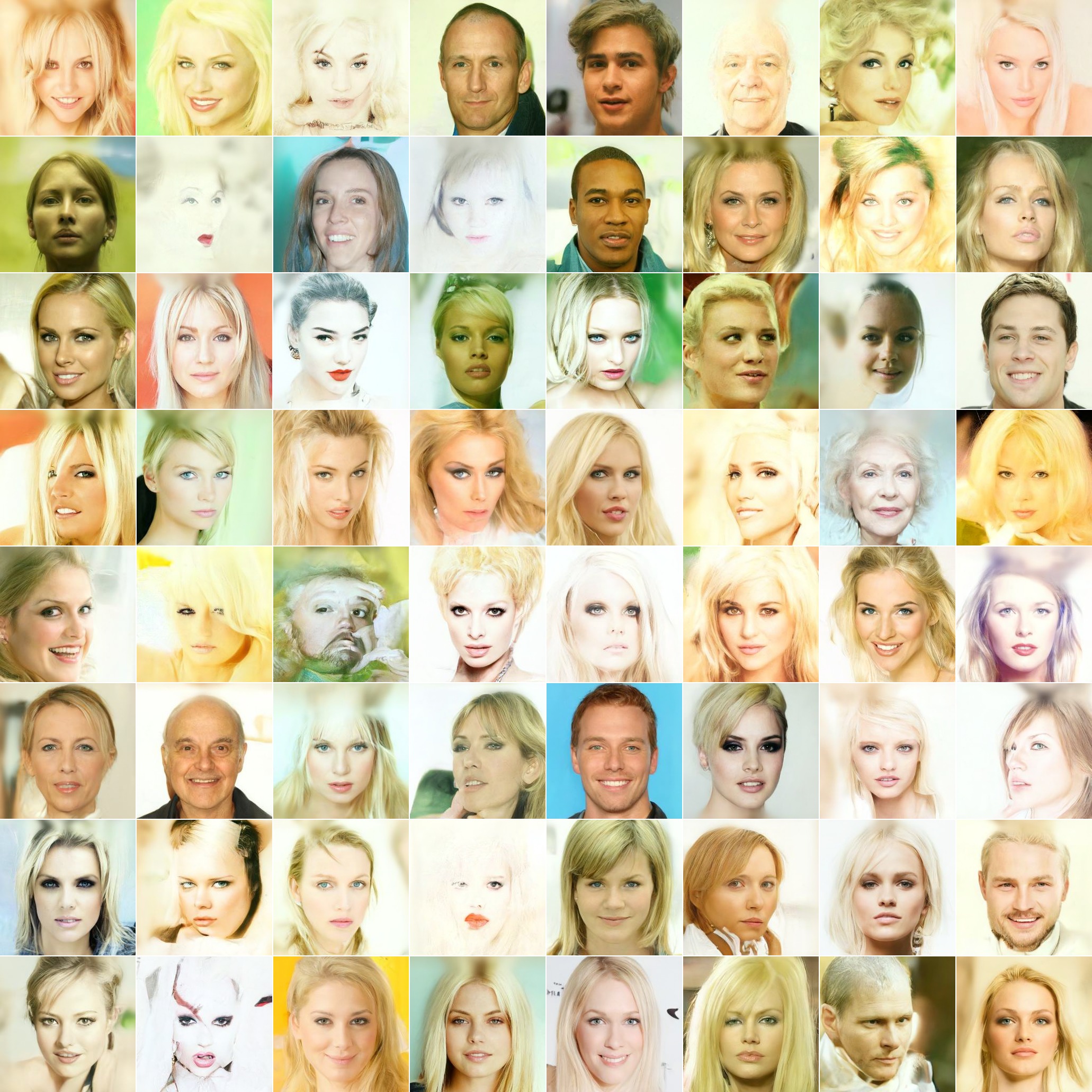}\\[1pt]{\small $N(\mu,I)$, $\mu=4\,\mathbf{1}/\sqrt d$\ ($+0.50$)}
\end{minipage}
\caption{Pretrained CelebA-HQ DDPM \citep{ho2020denoising}, $64$ samples of the $250$-step ancestral sampler from the same three initializations of the initial noise as in Figure~\ref{fig:celeba}. The mean shifts along the all-ones direction nevertheless bias them towards dark ($\beta<0$) or bright ($\beta>0$) photographs, exactly as with the exact score.}
\label{fig:ddpm}
\end{figure}

\end{document}